\title{Model-Based Transfer Learning \cwinline{for} \\ {\cwinline{Contextual} Reinforcement Learning}}
\author{
  Jung-Hoon Cho\\
  MIT\\
  \texttt{jhooncho@mit.edu} \\
  \And
  Vindula Jayawardana\\
  MIT\\
  \texttt{vindula@mit.edu} \\
  \AND
  Sirui Li\\
  MIT\\
  \texttt{siruil@mit.edu} \\
  \And
  Cathy Wu\\
  MIT\\
  \texttt{cathywu@mit.edu}
}
\begin{document}

\maketitle

\begin{abstract}
    \jhedit{Deep reinforcement learning (RL) \cwinline{is a powerful approach to complex decision making.} 
    \cwinline{However, one issue that limits its practical application is its \jhedit{brittleness}, sometimes failing to train in the presence of small changes in the environment.}}
    \jhedit{Motivated by the success of zero-shot transfer---where pre-trained models perform well on related tasks---we consider the problem of selecting a good set of training tasks to maximize generalization} performance across a range of tasks.
    Given the high cost of training, \cwwwinline{it is critical to select} training tasks \cwwwinline{strategically, but not well understood how to do so}. 
    \jhedit{We hence introduce Model-Based Transfer Learning (MBTL), which layers on top of existing RL methods to effectively solve contextual RL problems.}
    \cwinline{\cwwwinline{MBTL models the generalization performance in two parts: 1) the performance set point, modeled using Gaussian processes, and 2) performance loss (generalization gap), modeled as a linear function of contextual similarity.} 
    \cwwwinline{MBTL combines these two pieces of information within a Bayesian optimization (BO) framework to strategically select training tasks.}}
    \jhedit{We \cwinline{show theoretically that the method exhibits sublinear regret} in the number of \cwinline{training} tasks and discuss conditions to \cwinline{further tighten} regret bounds.}
    \jhedit{We \cwinline{experimentally} validate our methods using urban traffic and standard continuous control benchmarks.} \cwinline{The experimental results suggest that MBTL can \cwwwinline{achieve up to \jhhedit{43x} improved sample efficiency compared with canonical independent training and multi-task training. Further experiments demonstrate the efficacy of \jhedit{BO} and the insensitivity to the underlying RL algorithm and hyperparameters}.
    This work lays the foundations for investigating explicit modeling of generalization, thereby enabling principled yet effective methods for \jhedit{contextual RL}.}
    \jhedit{Code is available at \href{https://github.com/jhoon-cho/MBTL/}{https://github.com/jhoon-cho/MBTL/}.}
\end{abstract}

\section{Introduction}\label{sec:intro} 
Deep reinforcement learning (DRL) has made remarkable strides in addressing complex problems across various domains \cite{mnih_human-level_2015, silver_mastering_2016,al_deeppool_2019,bellemare_autonomous_2020, degrave_magnetic_2022, fawzi_discovering_2022,mankowitz_faster_2023}. 
Despite these successes, DRL \jhedit{algorithms} often exhibit brittleness when exposed to \cwinline{small} variations \jhedit{like different number of lanes, weather conditions, or flow density in traffic benchmarks \cite{jayawardana_mitigating_2024}}, significantly limiting their scalability and generalizability \cite{jayawardana_impact_2022}.
Such variations can be \cwwwinline{modeled using the framework of contextual Markov Decision Processes (CMDP), where task variations can be parameterized within a context space}~\cite{hallak_contextual_2015, modi_markov_2018, benjamins_contextualize_2023}. 

\cwwwinline{There are two predominant solution modalities for CMDPs \vjinline{\cite{kirk_survey_2023}}: independent training and multi-task training.} 
\jhedit{Independent training} \cwwwinline{constructs a separate model} for each task variant \cwwwinline{(say, $N \gg 1$)}, \cwwwinline{which is compute-intensive. At the other extreme,} multi-task training constructs a single ``universal'' policy, and thus can be more compute-efficient, but suffers from \jhedit{model capacity \jhedit{and negative transfer} issues \cite{kang_learning_2011, teh_distral_2017, andreas_modular_2017, standley_which_2020}.}
There is \cwwwinline{thus} a need for more \cwwwinline{reliable training methodologies for} generalization across tasks \cwwwinline{variants}. 
\cwwwinline{In this work, we consider training an intermediate set of $K$ models, where $N > K > 1$, in an effort to balance performance and efficiency; we refer to this strategy as \textit{multi-policy training}.}

\begin{figure}[!t]
    \centering
    \includegraphics[width=0.95\textwidth]{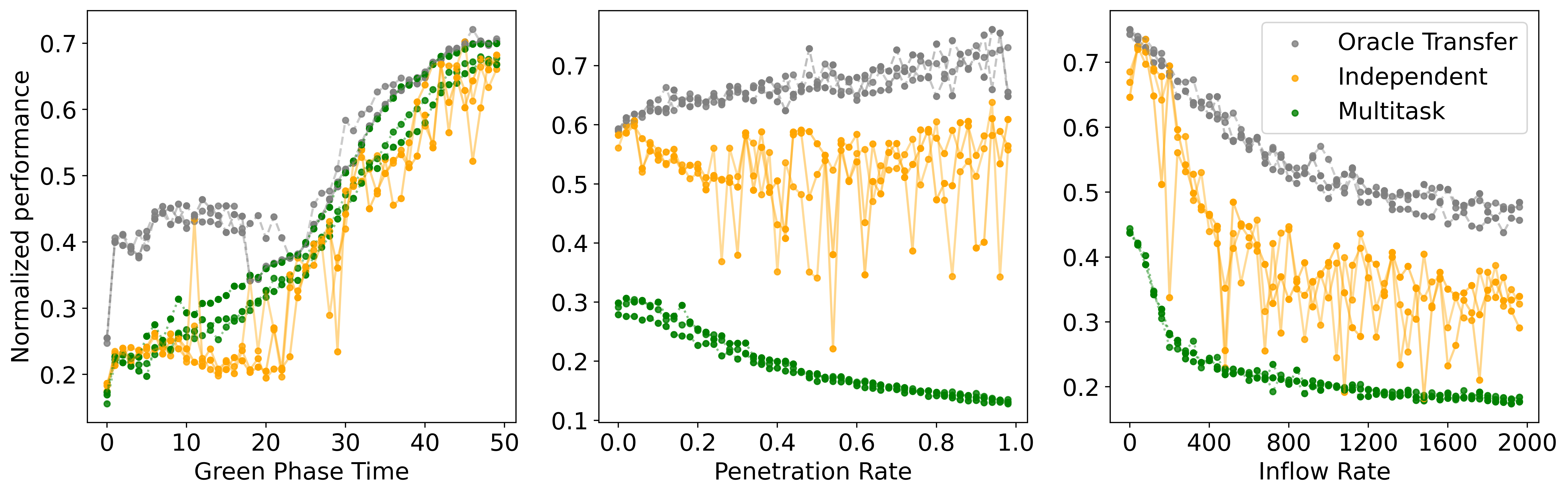}
    \caption{\jhhedit{Normalized performance comparison across different problem variations in Eco-Driving benchmark. Traditional DRL approaches (e.g., Independent training or multi-task training) exhibit greater training instability, whereas Oracle Transfer, zero-shot transfer with full information, shows the potential for performance improvement by multi-policy training.}}
    \label{fig:gap}
    \vspace{-15pt}
\end{figure}

We build upon zero-shot transfer, \cwwwinline{a widely-used practical technique} which directly applies a policy trained in one \cwwwinline{context} (source task) to another (target task) without adaptation. 
\jhhedit{Figure~\ref{fig:gap} shows that multi-policy training with zero-shot transfer has the potential to improve the performance even over the independent training.}
In this article, we \cwwwinline{strategically select source tasks by explicitly modeling} the generalization \cwwwinline{performance} to estimate the value of training \cwwwinline{a new source task}.

\cwwwinline{\textbf{A note on terminology}. For brevity, we refer to \textit{task variants} as \textit{tasks} in the remainder of this article. We also use the language of \textit{task} and \textit{context} interchangeably.
We emphasize that this work focuses on within-domain generalization (e.g., traffic signal control for intersection scenario variants) rather than across-domain generalization (e.g., distinct traffic control tasks).}
\cwwwinline{Additionally,} it is crucial to differentiate between \cwwinline{\textit{training} reliability}, which \cwwwinline{concerns the ability to reliably train models} across tasks, and \textit{model} \cwwwinline{reliability (or robustness)}, which \cwwwinline{concerns} the resistance of \cwwinline{a trained} model to differences in \cwwwinline{tasks}. \cwwwinline{This article is concerned with training reliability.}

\cwwwinline{The main contributions of this work are:}
\begin{itemize}
    \item We introduce \textit{Model-Based Transfer Learning (MBTL)}, a novel framework for \jhhedit{solving CMDP sample efficiently} (Figure~\ref{fig:mbtl-process}). \cwwwinline{To the best of our knowledge, this is the first work to explicitly model generalization performance for \jhhedit{contextual RL (CRL)}. As such, our work opens the door for further investigation into reliable model-based methodologies for \jhhedit{CRL}.} 
    \item We provide theoretical analysis for the sublinear regret of MBTL and give conditions for achieving tighter regret bounds.
    \item We \cwwwinline{empirically} validate our methods in \cwwwinline{urban traffic and standard continuous control benchmarks for contextual RL, observing \textbf{up to \jhhedit{43}x} improvements in sample efficiency. We further include ablations on the components of the algorithm.}
\end{itemize}

\begin{figure}[!t]
    \centering
    \includegraphics[width=0.93\linewidth]{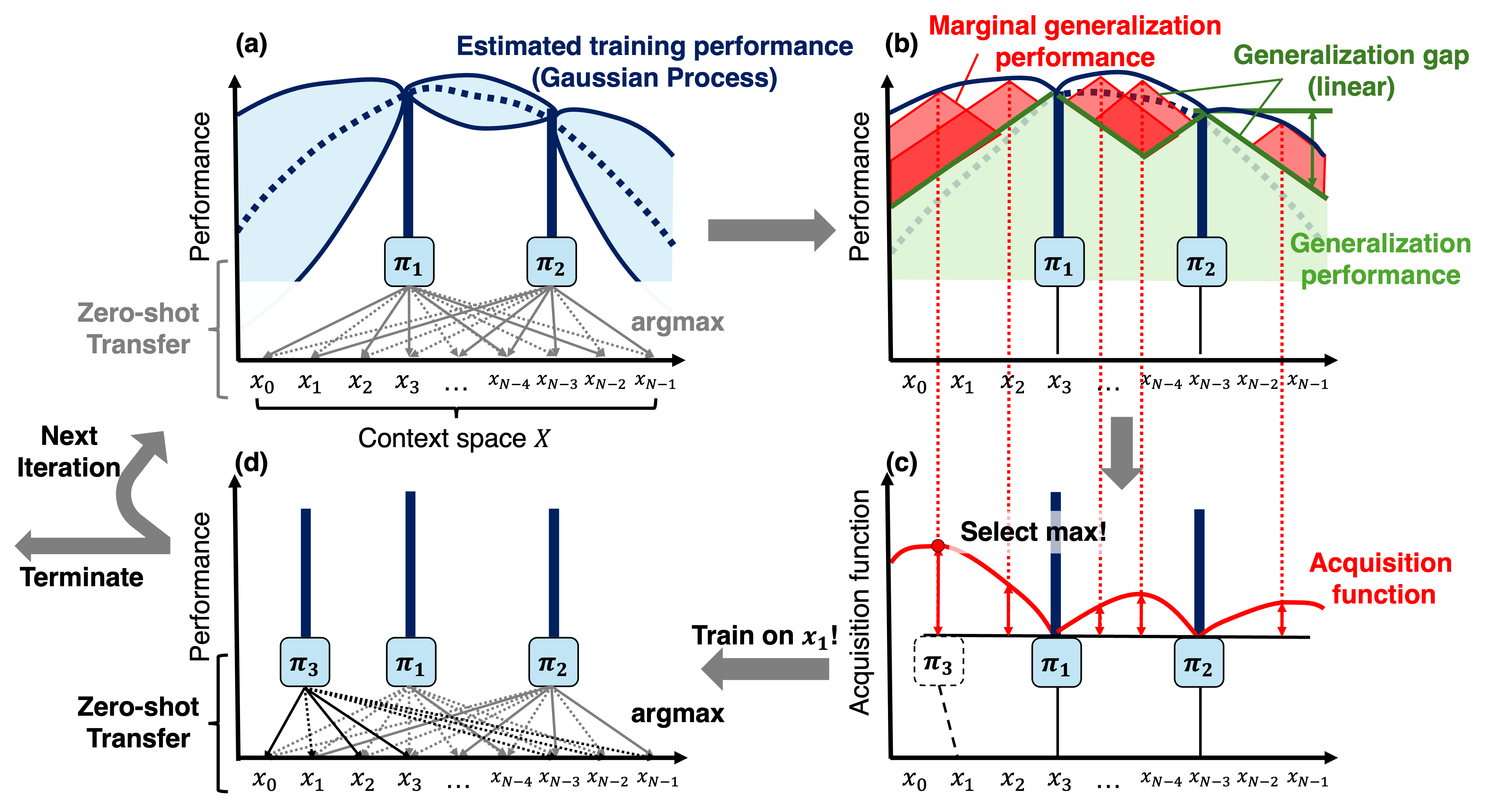}
    \vspace{-8pt}
    \caption{\jhedit{\textbf{\cwwwinline{Overview illustration for Model-based Transfer Learning}.} (a) Gaussian process regression is used to estimate the training performance \cwwwinline{across tasks using} existing policies; (b) marginal generalization performance (red area) is calculated using upper confidence bound of estimated training performance, \cwwwinline{generalization gap,} and generalization performance; (c) selects the next \cwwwinline{training} task that maximizes the acquisition function \cwwwinline{(marginal generalization performance)}; (d) once the selected task is trained, calculate generalization performance using zero-shot transfer.}}
    \vspace{-16pt}
    \label{fig:mbtl-process}
\end{figure}

\cwwwinline{The remainder of the paper is organized as follows. After introducing notation in Section~\ref{sec:prelim},}
we formally define the problem in Section~\ref{sec:problem}. A key contribution of our work is the introduction of a Gaussian process model acquisition function specifically tailored to the source task selection problem, which is detailed in Section~\ref{sec:mbtl}. In Section~\ref{sec:regret-analysis}, we provide a theoretical analysis of the regret bounds of our method, followed by an empirical evaluation across diverse applications in Section~\ref{sec:experiments}.

\section{Preliminaries \cwwinline{and notation}}\label{sec:prelim}

\textbf{Contextual MDP.}
A standard MDP is defined by the tuple $\jhedit{M}=(S, A, P, R, \rho)$ where $S$ represents the state space, $A$ is the action space, $P$ denotes the transition dynamics, $R$ is the reward function, and $\rho$ is the distribution over initial states \cite{sutton_reinforcement_2018}. A \cwwwinline{contextual MDP} (\jhedit{CMDP}), denoted by $\jhedit{\mathcal{M}=} (S, A, P_x, R_x, \rho_x)_{\cwwwinline{x \in X}}$, \cwwwinline{is a collection of \jhedit{context-}MDPs $\jhedit{\mathcal{M}_x}$} parameterized by a context variable $x$ \cwwwinline{within a context space} $X$ \cwwwinline{(assumed bounded)}. 
\cwwwinline{The context variable $x$} \cwinline{can} influence dynamics, rewards, \cwwwinline{and initial state distributions}~\cite{hallak_contextual_2015,modi_markov_2018,benjamins_contextualize_2023}.
\jhedit{We define source task performance $J(\jhedit{\pi_x},x\cwwwinline{;\text{Alg}})$ as follows: we train a policy $\pi_x$ on a task with the context $x\in X$ using RL algorithm $\text{Alg}$ (e.g., PPO, SAC) and evaluate the policy by the expected return in the same MDP $\mathcal{M}_x$ with context $x$. For brevity, we will write it as $J(\jhedit{\pi_x},x)$.}
We \cwwwinline{distinguish} between estimated values $\hat{J}(x)$ and observed outcomes $J(\jhedit{\pi_x},x)$, with the latter measured after training and evaluation. 

\begin{wrapfigure}[17]{r}{.45\textwidth}
    \vspace{-16pt}
    \centering
    \includegraphics[width=0.99\linewidth]{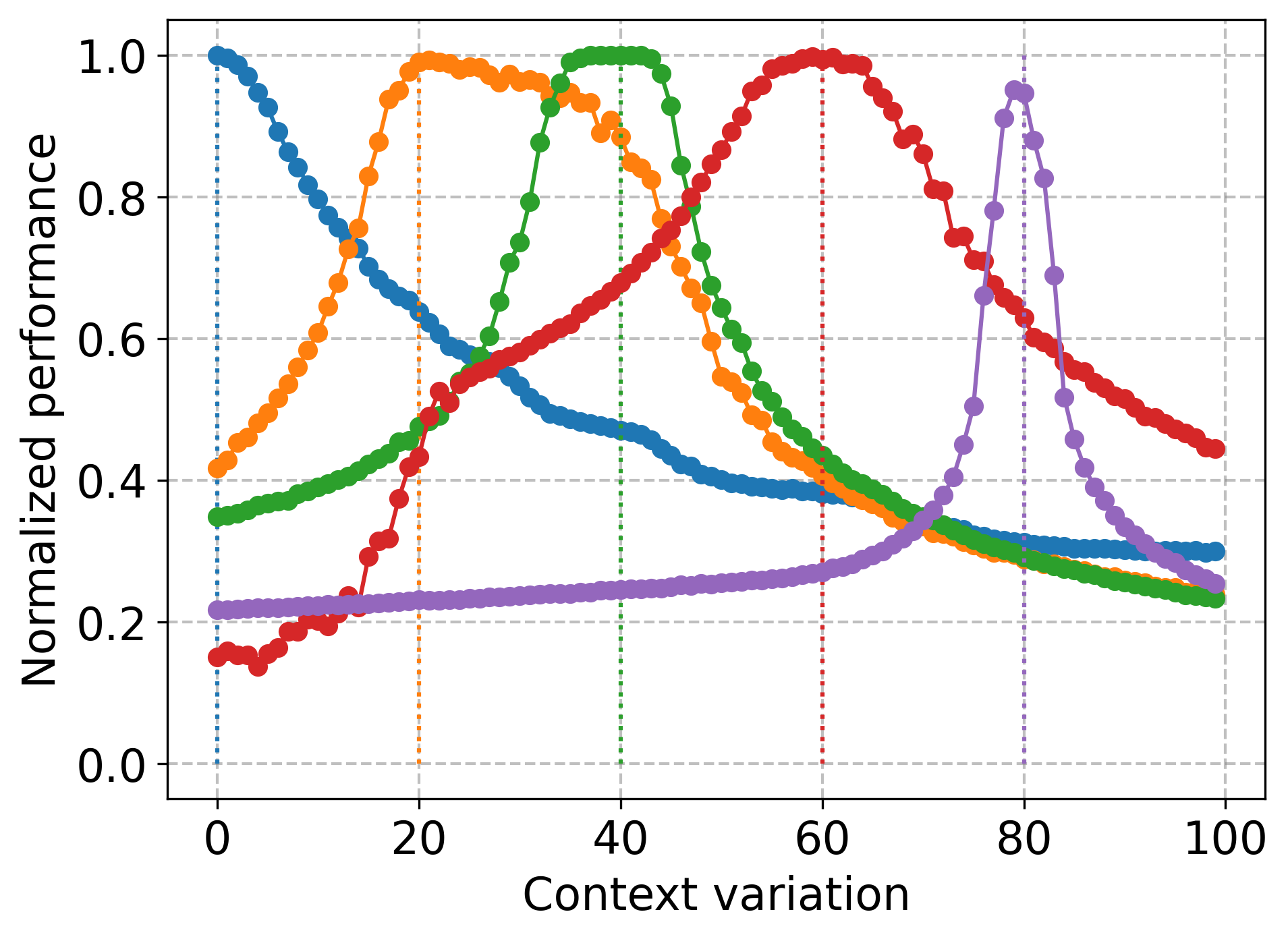}
    \vspace{-15pt}
    \caption{\jhedit{\cwwwinline{Example} generalization gap \cwwwinline{depicted} for Cartpole \jhedit{CMDP}. The solid lines \cwwwinline{show the true zero-shot transfer} \jhedit{generalization} performance across contexts. \cwwwinline{Source tasks are} indicated by dotted lines.}}
    \label{fig:gen-gap-empiric}
\end{wrapfigure}

\textbf{Generalization gap via zero-shot transfer.}
\jhedit{Consider zero-shot transfer from \jhedit{the trained policy $\pi_x$ from} a source task (\jhedit{context-}MDP) to solve another target task (\jhedit{context-}MDP) \jhedit{with the context $x'\in X$} in the \jhedit{CMDP}.} 
Zero-shot transfer involves applying a \jhedit{policy} trained on a source task $\jhedit{\mathcal{M}_x}$ to a different target task $\jhedit{\mathcal{M}_{x'}}$, with \vjinline{$x, x' \in X$}.
\jhhedit{This \cwwwinline{experiences} performance degradation, \cwwwinline{also called} \textit{generalization gap} \cite{higgins_darla_2017, kirk_survey_2023}. 
For instance, Figure~\ref{fig:gen-gap-empiric} depicts that the performance degrades as the target task diverges from the source task, \cwwwinline{corresponding to an} increasing generalization gap.
\cwwinline{\cwwwinline{Nonetheless, leveraging the notion} that training is expensive but zero-shot transfer is cheap, we are interested in optimally selecting a set of source (training) tasks, such that the generalization performance on the target range of tasks is maximized.}}
\jhedit{We observe the \textit{\jhedit{generalization} performance}, denoted by $\jhedit{J(\pi_x,x'})$, by evaluating the target task $x'$ based on the \jhedit{policy} trained \cwwinline{using} source task $x$ \cwwinline{via} zero-shot generalization.}
We define the generalization gap as the absolute performance difference in average reward when transferring from source task $ x $ to target task $ x' $:
\begin{equation}
    \label{eqn:U_pred}
    \underbrace{\Delta J(\jhedit{\pi_x,x'})}_{\text{Generalization gap}}=|\underbrace{J(\jhedit{\pi_x},x)}_{\text{Source task performance}}-\underbrace{J(\jhedit{\pi_x,x'})}_{\text{\jhedit{Generalization} performance}}|.
\end{equation}

\section{Problem formulation}\label{sec:problem} 
\textbf{Sequential source task\jhdelete{s} selection problem.}
The selection of source \vjinline{MDPs} from the \jhedit{CMDP}s is \cwwinline{key to solving the overall \jhedit{CMDP}} \cite{bao_information-theoretic_2019}.
\cwwinline{We therefore} \jhedit{introduce} \cwwwinline{the} \textit{sequential source task selection (SSTS) problem}, \cwwinline{which seeks to} maximize the expected performance across a dynamically selected set of tasks.
\cwwinline{This problem is cast as a sequential decision problem, in which the selection of tasks is informed through feedback from the observed task performance of the tasks selected and trained thus far}. 
The notation $ x_k $ indicates the selected source task at the $ k $-th transfer step, where $ k $ ranges from 1 to $ K $. 
\jhedit{For brevity, we will denote $\pi_{x_k}$ as $\pi_k$.} \cwwwinline{We} denote the sequence $x_1,x_2,...,x_k$ by $\jhedit{x_{1:k}}$ \jhedit{\cwwwinline{and} $\pi_1,\pi_2,...,\pi_k$ by $\jhedit{\pi_{1:k}}$}.

\begin{definition}[Sequential Source Task\jhdelete{s} Selection Problem]
    \label{def:source-task-selection}
    \cwwwinline{This} problem \cwwinline{seeks to optimize the expected generalization performance across a \jhedit{CMDP} $\jhedit{\mathcal{M}_{x'\in X}}$ by selecting a task $x \in X$ at each \vjinline{training} stage}.
    Specifically, at each selection step $k$, we \cwwwinline{wish} to choose a \cwwwinline{distinct} task $x_k$ such that the \cwwwinline{expected} cumulative \cwwwinline{generalization} performance is maximized. \cwwwinline{This can be expressed by keeping track at each step, which policy to use for which task, and the corresponding generalization performance. Upon training the \jhedit{policy} $\pi_{x_k}$ for source task $x_k$, the \cwwwinline{cumulative} generalization performance, \cwwwinline{which we abuse notation to denote as} $\jhedit{{J}(\pi_{1:k},\cdot})={J}(\jhedit{\pi_{x_k},\cdot;\pi_{1:k-1}})$}. Formally, this can be recursively defined based on previous observations $\{\jhedit{J(\pi_1,x)} , \dots, \jhedit{J(\pi_{k-1},x)} \}$ for all $x \in X$ as follows:
\begin{align}
    {J}(\jhedit{\pi_{x_k},x';\pi_{1:k-1}}) &= \max\left(\jhedit{J(\pi_k,x')},{J}(\jhedit{\pi_{1:k-1},x'})\right) \quad \forall x'  \in X \quad\text{if } k > 1.
    \label{eqn:U_obs}
\end{align}
\cwwwinline{And ${J}(\pi_{1:1},x)\equiv {J}(\pi_1,x)$}. 
\cwwwinline{Then, the overall sequential decision problem can be written as:}
    \begin{equation}
    \label{eq:SSTS}
        \max_{x_k} \ \ {V}(x_k;\jhedit{\pi_{1:k-1}}) = \max_{x_k} \mathbb{E}_{x'\sim \mathcal{U}(X)}\left[{J}(\jhedit{\pi_{x_k},x'; \pi_{1:k-1}})\right] \quad
        \text{s.t. } x_k \in X \setminus x_{1:k-1}.
    \end{equation}
\end{definition}
The state at each step $k$ is defined by the \cwinline{best} generalization performance \cwinline{for each task, achieved by \jhedit{policies} trained in earlier stages}, represented as $J(\jhedit{\pi_{1:k-1},x'})$ for each target task $x'$. The action at each step is choosing a new task $x_k$. 
\cwinline{In general, SSTS exhibits stochastic transitions, for example due to randomness in RL training. For simplicity, in this work, we assume deterministic transitions; that is, training \cwwwinline{context-}MDP $x$ will always yield the same performance ${J}(\jhedit{\pi_x},x)$ and generalization gap $\Delta {J}(\jhedit{\pi_x,x'}), \forall x' \in X$.} 
\jhedit{The problem's \cwwwinline{maximum} horizon is defined by $|X|$, \cwwwinline{but can be terminated early if conditions are met (e.g., performance level, \jhedit{training budget}).}}

\vspace{-5pt}
\section{Model-Based Transfer Learning \jhedit{(MBTL)}}\label{sec:mbtl} 
\vspace{-5pt}

\cwwwinline{In this section, we introduce an algorithm \cwwwinline{called Model-based Transfer Learning} to solve the SSTS problem.}
\cwwwinline{MBTL models the generalization performance in two parts: 1) the performance set point, modeled using Gaussian processes, and 2) generalization gap, modeled as a linear function of contextual similarity. MBTL combines these two pieces of information within a Bayesian optimization (BO) framework to sequentially select training tasks that maximize generalization performance}.

\subsection{\cwinline{Modeling assumptions}}\label{sec:assume}
\vspace{-5pt}
\jhedit{We \cwwwinline{consider a} task set $X$ \cwwwinline{that} is continuous and the performance function $J(\jhedit{\pi},x), V(x)$ for a policy $\pi$ \cwwwinline{to be} smooth over the task space $X$. In practice, such as control systems, tasks often vary continuously and smoothly rather than abruptly. For example, adjusting the angle of a robotic arm by a small amount typically results in a small change in the system and optimal action.}
\jhedit{Inspired by the empirical generalization gap performance as observed in Figure~\ref{fig:gen-gap-empiric}, we estimate the generalization gap with a linear function \jhhedit{of contextual similarity}.}
\begin{assumption}[Linear generalization gap]
    \label{assume:linear-generalization}
    A linear \jhedit{function} is used to model the generalization gap, formally $\Delta \hat{J}(\jhedit{\pi_x},x')=\jhhhedit{{J}(\jhedit{\pi_x},x)-\hat{J}(\jhedit{\pi_x},x')}= \theta |x-x'|$, where $\theta$ is the slope of the linear function and $x$ and $x'$ are the context of the source task and target task, respectively.
\end{assumption}

\jhedit{The relaxation of this assumption \vjinline{can yield additional \cwwwinline{efficiency} benefits but \cwwwinline{also adds modeling complexity, and thus is an interesting direction for future work}.}}

\subsection{Bayesian optimization}\label{sec:bo}
\vspace{-5pt}
Bayesian optimization (BO) is a powerful strategy for finding the global optimum of an objective function when obtaining the function is costly, or the function itself lacks a simple analytical form \cite{mockus_bayesian_1989, brochu_tutorial_2010}. 
\jhedit{BO} integrates prior knowledge with observations to efficiently decide the next task to train \jhedit{by using the acquisition function}. 
\cwwwinline{MBTL is a BO method which optimizes for promising source tasks by leveraging Assumption~\ref{assume:linear-generalization} in its acquisition function. The role of BO is to approximate ${V}(x_k;\jhedit{\pi_{1:k-1}})$ (see Equation~\ref{eq:SSTS}) using the data acquired thus far. The next source task $x_k$ is then selected using this estimate.}

\textbf{Gaussian process (GP) \jhedit{regression}.}
Within the framework of BO, we model the source training performance $ \jhedit{\hat{J}}(\jhedit{\pi_x,}x) \ \jhedit{\forall x\in X\setminus x_{1:k}}$ using Gaussian process (GP) regression. 
Specifically, the function $ \jhedit{\hat{J}}(\jhedit{\pi_x,}x)$ is assumed to follow a GP $ (\jhedit{\hat{J}}(\jhedit{\pi_x,}x) \sim \mathcal{GP}(\jhhhedit{m(x)}, k(x, \jhedit{\tilde{x}}))) $, where \jhhhedit{$m(x)$ is mean and} $k(x, \jhedit{\tilde{x}}) $ is the covariance function, representing the expected product of deviations of $ \jhedit{\hat{J}}(\jhedit{\pi_x,}x) $ and $ \jhedit{\hat{J}}(\jhedit{\pi_{\tilde{x}},}\jhedit{\tilde{x}}) $ from their respective means.
Let $ D_{k-1} $ denote the data observed up to iteration $ k-1 $, consisting of the pairs $ \{(x_i, \jhedit{J(\pi_{i},x_i)}\jhdelete{V(x_{\jhedit{1:i}})})\}_{i=1,...,k-1}$. \jhedit{The estimated performance $\hat{J}_{k}$ after querying $k-1$ samples is updated as more samples \cwwwinline{are obtained}.}
The posterior prediction of $ \jhedit{\hat{J}_k} $ at a new point $ x $, given the data $ D_{k-1} $ and previous inputs $ \jhedit{x_{1:k-1}} $, is normally distributed as $ P(\hat{J}_{k} \mid D_{k-1}) = \mathcal{N}(\mu_k(x), \sigma_k^2(x)) $.
$ \mu_k(x) $ and $ \sigma_k^2(x) $ are defined as $\mu_k(x) = \mathbb{E}[\jhedit{\hat{J}}(\jhedit{\pi_x,}x)] + \mathbf{k}^\top (\mathbf{K} + \sigma^2 \mathbf{I})^{-1} \mathbf{y}$ and $\sigma_k^2(x) = k(x, x) - \mathbf{k}^\top (\mathbf{K} + \sigma^2 \mathbf{I})^{-1} \mathbf{k}$, with $ \mathbf{k} $ being the vector of covariances between $x$ and each $ x_i $ in the observed data, i.e., $ \mathbf{k} = [k(x, x_1), \ldots, k(x, x_{k-1})] $, and $ \mathbf{K} $ is the covariance matrix for the observed inputs, defined as $\mathbf{K} = [k(x_i, x_j)]_{1 \leq i, j \leq k-1}$.
This enables the GP to update its beliefs about the \jhedit{posterior prediction} with every new observation, progressively improving the estimation.

\textbf{Acquisition function.}
The acquisition function plays a critical role in BO by guiding the selection of the next source training task. At each decision step $ k $, the task $ x_k $ is chosen by maximizing the acquisition function, as denoted by $x_k=\argmax_x a(x;\jhedit{x_{1:k-1}})$.
One effective strategy employed in the acquisition function is the upper confidence bound (UCB) acquisition function, which considers the trade-off between the expected performance of a task based on the current task (exploitation) and the measure of uncertainty associated with the task's outcome (exploration) \cite{srinivas_information-theoretic_2012}. 
Especially in our case, the acquisition function can be designed \jhedit{as UCB function subtracted by generalization gap and so-far best performance}.
It is defined as follows:
\begin{equation}\label{eqn:acquisition}
    a(x;\jhedit{x_{1:k-1}})=\mathbb{E}_{x' \in X}[[\mu_{k-1}(x) + \beta_k^{1/2}\sigma_{k-1}(x) - \Delta \hat{J}(\jhedit{\pi_x,x'}) - \jhhhedit{\hat{J}}(\jhedit{\pi_{{1:k-1}},x'})]_{+}]
\end{equation}
where $[\cdot]_{+}$ represents $\max(\cdot,0)$ and we can use various forms of $\beta_k$\jhedit{, which is the trade-off parameter between exploitation and exploration}.

\vspace{-5pt}
\subsection{\jhedit{Regret analysis}}\label{sec:regret-analysis} 
\vspace{-5pt}
\jhedit{We use regret to quantify the effectiveness of our source task selection based on BO.}
Specifically, we define regret at iteration $ k $ as $ r_k = \jhhedit{V}(x_k^*;\jhedit{\pi_{1:k-1}}) - \jhhedit{V}(x_k;\jhedit{\pi_{1:k-1}})$, where $\jhhedit{V}(x_k^*;\jhedit{\pi_{1:k-1}})$ represents the maximum \jhedit{generalization} performance achievable across all tasks, and $ \hat{V}(x_k;\jhedit{\pi_{1:k-1}})$ is the performance at the current task selection $x_k$.
Consequently, the cumulative regret after $ K $ iterations is given by $ R_K = \sum_{k=1}^K r_k $, summing the individual regrets over all iterations. Following the framework presented by \citet{srinivas_information-theoretic_2012}, our goal is to establish that this cumulative regret grows sublinearly with respect to the number of iterations. 
\jhedit{Mathematically, we aim to prove that $ \lim_{K \rightarrow \infty} \frac{R_K}{K} = 0 $, indicating that, on average, the performance of our strategy approaches the optimal performance as the number of iterations increases.}

\jhedit{\textbf{Regret of MBTL.}}
\jhedit{Having established the general framework for regret analysis, we now turn our attention to the specific regret properties of our MBTL algorithm.}
\jhedit{To analyze the regret of MBTL}, consider the scaling factor for the UCB acquisition function given by $\beta_k = 2 \log(|X| \pi^2 k^2 / 6\delta)$ in \Cref{eqn:acquisition}. It is designed to achieve sublinear regret with high probability, aligning with the theoretical guarantees outlined in Theorem 1 and 5 from \cite{srinivas_information-theoretic_2012}.
\begin{theorem}[\jhedit{Sublinear Regret}]
    \label{theorem:regret-delta-beta-log}
    Given $\delta \in (0,1)$, and with the scaling factor $\beta_k$ as defined, the cumulative regret $R_K$ is bounded by $\sqrt{K C_1\beta_K\gamma_K}$ with a probability of at least $1 - \delta$. The formal expression of this probability is $Pr\left[R_K\leq\sqrt{K C_1\beta_K\gamma_K}\right]\geq 1-\delta$, where $C_1:=\frac{8}{\log(1+\sigma^{-2})}\geq8\sigma^2$ and $\gamma_K=\mathcal{O}(\log K)$ for the squared exponential kernel.
\end{theorem}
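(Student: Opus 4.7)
The plan is to reduce the regret analysis to the GP-UCB framework of \citet{srinivas_information-theoretic_2012}, with $J_{\text{pred}}$ playing the role of the unknown GP and $g(x) = J_{\text{pred}}(x) - \mathbb{E}_{x' \in X}[\Delta J(x,x')]$ playing the role of the induced objective. The key observation is that, under Assumption~\ref{assume:linear-generalization}, the term $\mathbb{E}_{x'}[\Delta J(x,x')]$ is a deterministic function of $x$, so any high-probability confidence band on $J_{\text{pred}}$ translates verbatim into a confidence band on $g$ by an additive offset. This reduction lets us transport the finite-decision-set analysis from Srinivas et al.\ essentially unchanged.

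The first step is to establish a uniform confidence envelope. For any fixed $x \in X$ and iteration $k$, a Gaussian tail bound on the GP posterior gives $|J_{\text{pred}}(x) - \mu_{k-1}(x)| \leq \beta_k^{1/2}\sigma_{k-1}(x)$ with probability at least $1 - e^{-\beta_k/2}$. Taking a union bound over the finite set $X$ and over $k = 1, 2, \ldots$, and using $\beta_k = 2\log(|X|\pi^2 k^2/6\delta)$ together with $\sum_{k\geq 1} k^{-2} = \pi^2/6$, the envelope holds jointly with probability at least $1 - \delta$. The second step is to bound the per-step regret $r_k = g(x^*_k) - g(x_k)$ by $2\beta_k^{1/2}\sigma_{k-1}(x_k)$, conditional on this envelope event. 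This is the standard UCB argument: the optimism of $\mu_{k-1}(x_k) + \beta_k^{1/2}\sigma_{k-1}(x_k)$ dominates the analogous quantity at $x^*_k$, and subtracting the lower confidence value at $x_k$ yields a bound of twice the width at $x_k$.

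The third step is the information-gain conversion. By Cauchy--Schwarz,
\begin{equation*}
R_K \leq \sqrt{K \sum_{k=1}^K r_k^2} \leq \sqrt{4 K \beta_K \sum_{k=1}^K \sigma_{k-1}^2(x_k)},
\end{equation*}
and the standard information-gain lemma (Lemma~5.3 of \citet{srinivas_information-theoretic_2012}) bounds $\sum_k \sigma_{k-1}^2(x_k) \leq \tfrac{2}{\log(1+\sigma^{-2})}\,\gamma_K$, yielding $R_K \leq \sqrt{K C_1 \beta_K \gamma_K}$ with $C_1 = 8/\log(1+\sigma^{-2})$. The bound $\gamma_K = \mathcal{O}(\log K)$ for the squared exponential kernel is then imported from Theorem~5 of the same work.

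The main obstacle will be rigorously handling the $[\,\cdot\,]_+$ clipping and the current-best offset $U(x';x_{(1:k-1)})$ inside the acquisition function \eqref{eqn:acquisition}, since these deviate from a textbook UCB rule. I would address this via a two-case argument: for any fixed $x'$, either $\mu_{k-1}(x_k) + \beta_k^{1/2}\sigma_{k-1}(x_k) - \Delta J(x_k,x') \geq U(x';x_{(1:k-1)})$, in which case the clipping is inactive and the standard UCB dominance at $x_k$ versus $x^*_k$ applies pointwise in $x'$; or the clipping is active, in which case the target task $x'$ is already well-served by a previously trained model and contributes no improvable regret. Taking expectations over $x'$ and using the optimism event from step one preserves the per-step bound $r_k \leq 2\beta_k^{1/2}\sigma_{k-1}(x_k)$ needed to close the argument.
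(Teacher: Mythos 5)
Your main line of argument is exactly the one the paper relies on: the paper gives no standalone proof of Theorem~\ref{theorem:regret-delta-beta-log}, but instead imports it from Theorems~1 and~5 of \citet{srinivas_information-theoretic_2012} and recovers it as the special case $X_k=X$ of Theorem~\ref{theorem:regret-delta-beta-log-x-k} (whose Appendix~\ref{appsec:proof-log-x-k} proof uses precisely your three ingredients: the union-bound confidence envelope with $\sum_k k^{-2}=\pi^2/6$, the per-step bound $r_k\leq 2\beta_k^{1/2}\sigma_{k-1}(x_k)$, and Cauchy--Schwarz plus the information-gain lemma). So your reconstruction matches the intended proof, and you are in fact more explicit than the paper about where each constant comes from.

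The one place where you go beyond the paper --- the final paragraph handling the $[\,\cdot\,]_+$ clipping and the offset $U(x';x_{(1:k-1)})$ in \eqref{eqn:acquisition} --- correctly identifies the real difficulty, but the proposed two-case fix does not close it. The regret is defined as $r_k = g(x^*_k) - g(x_k)$ with $g(x) = J_{\text{pred}}(x) - \mathbb{E}_{x'}[\Delta J(x,x')]$, which is independent of the coverage state $U(\cdot;x_{(1:k-1)})$; saying that a clipped target $x'$ ``contributes no improvable regret'' therefore has no formal counterpart in this definition. Concretely, the UCB dominance step needs $x_k$ to maximize $\mu_{k-1}(x)+\beta_k^{1/2}\sigma_{k-1}(x)-\mathbb{E}_{x'}[\Delta J(x,x')]$, but the rule $x_k=\argmax_x a(x;x_{(1:k-1)})$ can select a different point: if every target is already well covered from $x^*_k$'s vantage ($a(x^*_k;\cdot)=0$) while some other $x$ still improves a few uncovered targets, the algorithm picks that $x$ even though its unclipped index is smaller, and the bound $r_k\leq 2\beta_k^{1/2}\sigma_{k-1}(x_k)$ fails. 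To make the argument airtight you would need either to redefine regret against the marginal-improvement objective that the acquisition function actually optimizes, or to restrict to the regime where the clipping is inactive and the offsets are uniform so that $\argmax a$ coincides with the standard UCB maximizer. This is a gap in the paper as much as in your write-up, but as stated your patch does not repair it.
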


\textbf{Impact of search space elimination.}
In this section, we demonstrate that strategic reduction of the possible sets, guided by insights from previous task selections or source task training performance, leads to significantly tighter regret bounds than Theorem~\ref{theorem:regret-delta-beta-log}. \jhedit{By focusing on the most promising regions of the task space, our approach enhances learning efficiency and maximizes the \jhedit{policy}'s performance and applicability.}
Given the generalization gap observed in Figure~\ref{fig:gen-gap-empiric}, \jhedit{we observe that performance loss decreases as the context similarity increases.}
We model the degradation from the source task using a linear function in Assumption~\ref{assume:linear-generalization}.
Training on the source task can solve a significant portion of the remaining tasks. Our method progressively eliminates partitions of the task space at a certain rate with each iteration. 
\jhedit{If the source task selected in the previous steps could solve the remaining target task sufficiently}, we can eliminate the search space at a desirable rate. \jhedit{Formally, we can define the search space at $k$-th iteration as follows:}
\begin{definition}[\jhedit{Search space}]
    We \jhedit{define the search space} $X_k$ \jhedit{at iteration $k$} as a subset of $X$, with each element $x'\in X_k$, such that $J(\jhedit{\pi_{1:k-1},x'}) \leq \hat{J}(\pi_{x_k},x')\jhedit{-}\Delta \hat{J}(\jhedit{\pi_{x_k},x'})$.
\end{definition}

Given the generalization gap observed in Figure~\ref{fig:gen-gap-empiric}, we model the degradation from the source task using a linear function in Assumption~\ref{assume:linear-generalization}. While the figure might not strictly appear linear, the linear approximation simplifies analysis and is supported by empirical observations. Training on the source task can solve a significant portion of the remaining tasks. Our method progressively eliminates partitions of the task space at a certain rate with each iteration. If the source task selection in the previous step sufficiently addresses the remaining target tasks, we can reduce the search space at a desirable rate. Consequently, at each step, we effectively focus on a reduced search space.

We leverage the reduced uncertainty in well-sampled regions to tighten the regret bound while slightly lowering the probability $\delta$ in Theorem~\ref{theorem:regret-delta-beta-log}.
For the regret analysis, we propose the following theorem based on the generalization of Lemma 5.2 and 5.4 in \cite{srinivas_information-theoretic_2012} to the eliminated search space.

\begin{restatable}{theorem}{regretdeltaxk}
    For a given $ \delta' \in (0,1) $ and scaling factor $ \beta_k=2\log(|X|\pi^2 k^2 /6\delta') $, the cumulative regret $ R_K $ is bounded by $\sqrt{ C_1\beta_K\gamma_K\sum_{k=1}^K\left(\frac{|X_k|}{|X|}\right)^2}$ with probability at least $ 1 - \delta' $.
    \label{theorem:regret-delta-beta-log-x-k}
\end{restatable}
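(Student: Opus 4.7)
The plan is to generalize the regret analysis of \citet{srinivas_information-theoretic_2012} by exploiting the fact that at iteration $k$ only the reduced subset $X_k$ of target tasks remains ``unsolved,'' so that the effective per-step contribution to the expected-performance regret scales with the relative measure $|X_k|/|X|$ rather than being a uniform $O(1)$ term. The core modification is to replace the uniform Cauchy--Schwarz step $(\sum_k r_k)^2 \leq K \sum_k r_k^2$ used in the original proof with a weighted version $(\sum_k w_k a_k)^2 \leq (\sum_k w_k^2)(\sum_k a_k^2)$ for the weights $w_k = |X_k|/|X|$.

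First, I would reuse Lemma~5.1 of Srinivas et al.\ essentially verbatim to obtain, by a union bound over $x \in X$ and $k \geq 1$, the high-probability confidence envelope $|g(x) - \mu_{k-1}(x)| \leq \beta_k^{1/2}\sigma_{k-1}(x)$ for all $x,k$ with probability at least $1 - \delta'$; the relabelling $\delta \mapsto \delta'$ absorbs the minor re-budgeting needed to keep the same $\beta_k = 2\log(|X|\pi^2 k^2/6\delta)$ as in Theorem~\ref{theorem:regret-delta-beta-log}. Next, I would analyze the UCB acquisition in Eq.~\eqref{eqn:acquisition}: because the positive-part operator $[\cdot]_+$ zeroes out every target $x'$ with $U(x';x_{(1:k-1)}) \geq \mu_{k-1}(x_k) + \beta_k^{1/2}\sigma_{k-1}(x_k) - \Delta J(x_k,x')$, only targets in $X_k$ contribute, and the outer expectation $\mathbb{E}_{x' \in X}[\cdot]$ then yields an explicit $|X_k|/|X|$ prefactor. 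Combining this with the UCB maximality inequality $g(x_k^*) \leq \mu_{k-1}(x_k) + \beta_k^{1/2}\sigma_{k-1}(x_k)$ and the lower confidence bound $g(x_k) \geq \mu_{k-1}(x_k) - \beta_k^{1/2}\sigma_{k-1}(x_k)$ yields a per-step regret bound of the form $r_k \leq (|X_k|/|X|)\cdot 2\beta_k^{1/2}\sigma_{k-1}(x_k)$.

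Summing over $k$ and applying weighted Cauchy--Schwarz,
\begin{equation*}
R_K^2 = \Bigl(\sum_{k=1}^K r_k\Bigr)^2 \leq \Bigl(\sum_{k=1}^K (|X_k|/|X|)^2\Bigr)\cdot \Bigl(\sum_{k=1}^K 4\beta_k \sigma_{k-1}(x_k)^2\Bigr).
\end{equation*}
The second factor is handled by the standard information-gain argument (Lemmas~5.3--5.4 of Srinivas et al.): $\sum_{k=1}^K \sigma_{k-1}(x_k)^2 \leq C_1 \gamma_K / 4$ with $C_1 = 8/\log(1+\sigma^{-2})$, and monotonicity $\beta_k \leq \beta_K$ promotes $\sum_k 4\beta_k \sigma_{k-1}(x_k)^2 \leq C_1 \beta_K \gamma_K$. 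Substituting gives the claimed bound.

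The main obstacle will be rigorously justifying the $|X_k|/|X|$ scaling of the per-step regret, since the definition of $X_k$ involves both the yet-to-be-chosen $x_k$ and the surrogate $\hat{J}(x_k)$. I expect to leverage Assumption~\ref{assume:linear-generalization} together with the high-probability confidence envelope to replace $\hat{J}(x_k)$ by its upper confidence surrogate $\mu_{k-1}(x_k) + \beta_k^{1/2}\sigma_{k-1}(x_k)$, so that the elimination set used in the acquisition can be directly identified with $X_k$. A secondary sanity check is that the bound smoothly recovers Theorem~\ref{theorem:regret-delta-beta-log}: when no elimination occurs ($X_k = X$ for all $k$) the weighted sum collapses to $K$, recovering the $\sqrt{KC_1\beta_K\gamma_K}$ rate.
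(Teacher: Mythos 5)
Your proposal follows essentially the same route as the paper's proof: a per-step regret bound of the form $r_k \leq 2(|X_k|/|X|)\beta_k^{1/2}\sigma_{k-1}(x_k)$ (the paper's Lemma~\ref{lemma:regret_x_k}), the information-gain bound $\sum_k 4\beta_k\sigma_{k-1}^2(x_k) \leq C_1\beta_K\gamma_K$ carried over from Lemmas~5.3--5.4 of \citet{srinivas_information-theoretic_2012} (the paper's Lemma~\ref{lemma:regret-k}), and a weighted Cauchy--Schwarz step with weights $|X_k|/|X|$ to assemble the final bound. If anything, you are more explicit than the paper about where the $|X_k|/|X|$ prefactor comes from (the truncation in the acquisition function restricting the expectation to $X_k$), a step the paper asserts without detailed justification.
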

\jhedit{Here, $|X|$ denotes the cardinality of the set $X$, the number of elements in $X$.}
Theorem~\ref{theorem:regret-delta-beta-log-x-k} matches the Theorem~\ref{theorem:regret-delta-beta-log} when $X_k=X$ for all $k$. This theorem implies that regret has a tighter or equivalent bound if we can design the smaller search space instead of searching the whole space.
The comprehensive proof is provided in Appendix~\ref{appsec:proof-log-x-k}. 

Here are some examples of restricted search space: If we consider an example where $|X_k|=\frac{1}{\sqrt{k}}|X|$, the regret can be bounded tighter than that of Theorem~\ref{theorem:regret-delta-beta-log}.
\begin{restatable}{corollary}{regretlogk}
    \label{cor:regret-x-k-logk}
    Consider $|X_k|=\frac{1}{\sqrt{k}}|X|$. The regret bound would be $R_K\leq\sqrt{C_1\beta_K\gamma_K\log K}$ with a probability of at least $1 - \delta'$.
\end{restatable}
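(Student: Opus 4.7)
The plan is to apply Theorem~\ref{theorem:regret-delta-beta-log-x-k} directly as a black box, since the corollary is an immediate specialization in which the particular choice $|X_k| = \frac{1}{\sqrt{k}}|X|$ makes the sum of squared ratios telescope into a harmonic series. So the conceptual work has already been carried out upstream; what remains is essentially a one-line calculation together with a standard harmonic-sum bound.

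First I would invoke Theorem~\ref{theorem:regret-delta-beta-log-x-k}, which provides, with probability at least $1-\delta'$, the bound
\[
R_K \leq \sqrt{C_1 \beta_K \gamma_K \sum_{k=1}^K \left(\frac{|X_k|}{|X|}\right)^2}.
\]
Next I would substitute $|X_k| = \frac{1}{\sqrt{k}}|X|$, which gives $(|X_k|/|X|)^2 = 1/k$, so that the inner sum becomes the $K$-th harmonic number $H_K = \sum_{k=1}^K 1/k$. Then I would apply the elementary upper bound $H_K \leq 1 + \log K$ (which is standard via the integral comparison $\int_1^K \frac{dt}{t} = \log K$), or absorb the additive constant into the asymptotic statement to obtain $H_K = \mathcal{O}(\log K)$. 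Combining these yields
\[
R_K \leq \sqrt{C_1 \beta_K \gamma_K \log K}
\]
with probability at least $1-\delta'$, as claimed.

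Because the corollary is really just an arithmetic instantiation of the prior theorem, the main obstacle is not the derivation itself but rather ensuring that the bookkeeping on the probability $\delta'$ is consistent with the parent statement: specifically, that the high-probability event in Theorem~\ref{theorem:regret-delta-beta-log-x-k} already accounts for whatever gap between $\delta$ and $\delta'$ is implicit in restricting to the reduced search spaces $X_k$. I would make this explicit in the proof by noting that the choice of $|X_k|$ here is a deterministic sequence, so no additional union bound is required beyond what Theorem~\ref{theorem:regret-delta-beta-log-x-k} already supplies. A minor secondary point to verify is that the sequence $|X_k| = |X|/\sqrt{k}$ is consistent with the definition of the reduced search space, i.e., that it is realizable under the linear generalization-gap model in Assumption~\ref{assume:linear-generalization}; this can be justified by noting that the corollary is presented as an illustrative ``if-then'' example of feasible elimination rates rather than a claim that this rate is always attained.
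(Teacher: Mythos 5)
Your proposal matches the paper's proof essentially exactly: both invoke Theorem~\ref{theorem:regret-delta-beta-log-x-k}, substitute $(|X_k|/|X|)^2 = 1/k$, and bound the resulting harmonic sum by $\log K$. If anything, you are slightly more careful than the paper, which asserts $\sum_{k=1}^K \frac{1}{k} \leq \log K$ outright (false for small $K$, e.g.\ $K=1$), whereas you correctly use $H_K \leq 1 + \log K$ and note that the additive constant must be absorbed asymptotically for the stated bound to hold as written.
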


\begin{wrapfigure}[12]{r}{.37\textwidth}
    \centering
    \vspace{-35pt}
    \includegraphics[width=0.99\linewidth]{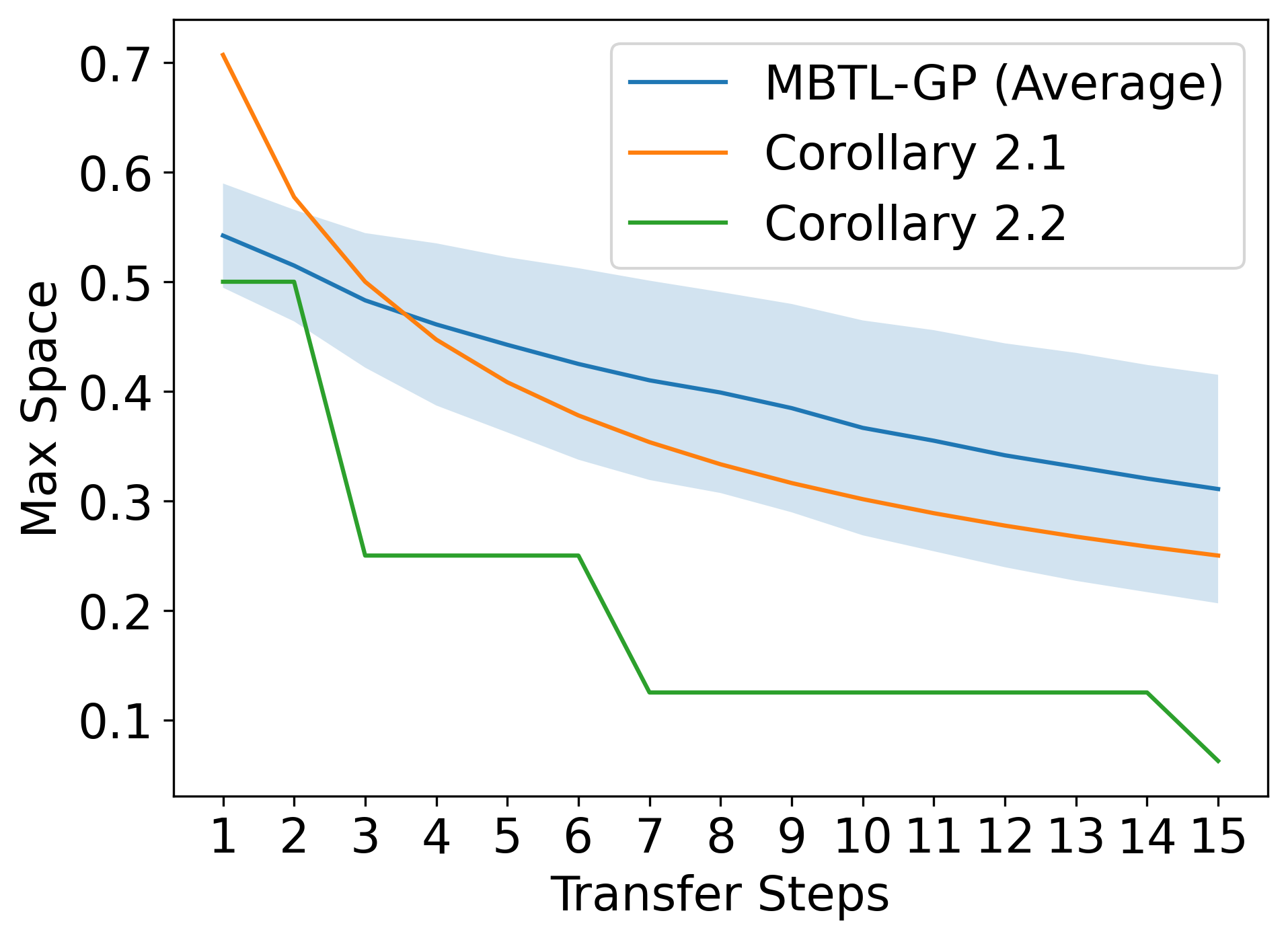}
    \caption{\jhedit{Empirical results of the restriction of search space by MBTL compared to two examples from Corollaries 2.1 and 2.2.}}
    \label{fig:max_space}
\end{wrapfigure}

In cases where the search space is defined using \jhedit{MBTL-GS}, the largest segment's length would reduce geometrically, described by $|X_k| \leq 2^{-\lfloor\log_2 k\rfloor} |X|$.
\begin{restatable}{corollary}{regretps}
    \label{cor:regret-pseudo-x-k}
    The regret bound for the $|X_k| \leq 2^{-\lfloor\log_2 k\rfloor} |X|$ would be $R_K\leq\sqrt{C_1\beta_K\gamma_K\pi^2/6}$ with a probability of at least $1 - \delta'$.
\end{restatable}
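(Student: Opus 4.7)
The plan is to apply Theorem~\ref{theorem:regret-delta-beta-log-x-k} directly: it already gives $R_K \leq \sqrt{C_1 \beta_K \gamma_K \sum_{k=1}^K (|X_k|/|X|)^2}$ with probability at least $1 - \delta'$. Hence the entire corollary reduces to bounding the dimensionless sum $S_K := \sum_{k=1}^K (|X_k|/|X|)^2$ by the constant $\pi^2/6$, uniformly in $K$. The high-probability $1-\delta'$ guarantee then transfers verbatim from the underlying theorem; no additional union bound or concentration argument is required.

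First, I would substitute the stated shrinkage to obtain $(|X_k|/|X|)^2 \leq 4^{-\lfloor \log_2 k \rfloor}$, and then compare this term-by-term to $1/k^2$ using the elementary inequality $\lfloor \log_2 k \rfloor \geq \log_2 k - 1$. This reduces the estimation of $S_K$ to the partial sums of the Basel series $\sum_{k \geq 1} 1/k^2 = \pi^2/6$, and passing to the limit $K \to \infty$ produces a $K$-independent majorant. Substituting back yields $R_K \leq \sqrt{C_1 \beta_K \gamma_K \pi^2/6}$.

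Second, I would record monotonicity of the bound in $K$: because each summand is non-negative, the finite-$K$ sum is dominated by the infinite tail, so the bound holds uniformly in the horizon. This matches the structure used in the proofs of Theorem~\ref{theorem:regret-delta-beta-log} and Corollary~\ref{cor:regret-x-k-logk}, keeping the $\sqrt{C_1 \beta_K \gamma_K}$ factor untouched.

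The main obstacle is recovering the precise constant $\pi^2/6$ from the loose $4^{-\lfloor \log_2 k \rfloor}$ bound, since a direct geometric grouping --- partitioning indices by $k \in [2^j, 2^{j+1})$, each of size at most $2^j$ and each contributing $4^{-j}$ --- only yields a constant of $2$. A cleaner route, consistent with the MBTL-GS motivation given just before the corollary, is to use the sharper per-step bound that equidistant bisection actually achieves, namely $|X_k| \leq |X|/k$ after $k$ selections; then $(|X_k|/|X|)^2 \leq 1/k^2$ and the Basel identity gives exactly $\pi^2/6$. I would adopt this sharper interpretation when writing up the argument, noting that it is implied by, but strictly stronger than, the geometric shrinkage stated.
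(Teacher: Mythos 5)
Your skeleton matches the paper's: invoke Theorem~\ref{theorem:regret-delta-beta-log-x-k} and reduce everything to bounding $\sum_{k=1}^K 4^{-\lfloor\log_2 k\rfloor}$ by $\pi^2/6$. But the ``obstacle'' you flag in your last paragraph is not an obstacle in your write-up --- it is an actual error in the paper's proof. The paper asserts, term by term, that $2^{-2\lfloor\log_2 k\rfloor}\le 1/k^2$, and this inequality runs the wrong way: since $\lfloor\log_2 k\rfloor\le\log_2 k$, one has $4^{-\lfloor\log_2 k\rfloor}\ge 4^{-\log_2 k}=1/k^2$, with strict inequality whenever $k$ is not a power of two (e.g.\ $k=3$ gives $1/4$ versus $1/9$). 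Your dyadic-grouping computation is correct: the blocks $k\in[2^j,2^{j+1})$ contribute $2^j\cdot 4^{-j}=2^{-j}$, so the full sum converges to $2$, and the partial sum already exceeds $\pi^2/6\approx 1.645$ at $K=6$. Hence the constant $\pi^2/6$ cannot be extracted from the hypothesis $|X_k|\le 2^{-\lfloor\log_2 k\rfloor}|X|$ by this route, and the corollary's proof as printed does not go through.

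Your two candidate repairs are both sound, and you should be explicit that they prove slightly different statements: (i) keep the stated shrinkage and accept the bound $R_K\le\sqrt{2\,C_1\beta_K\gamma_K}$, which still exhibits the qualitative point (a $K$-independent constant in place of $\log K$); or (ii) strengthen the hypothesis to $|X_k|\le |X|/k$, in which case the Basel identity gives exactly the advertised $\sqrt{C_1\beta_K\gamma_K\pi^2/6}$. Your preference for (ii) is reasonable given the MBTL-GS motivation, but note that bisection after $k$ selections only guarantees a largest segment of length $2^{-\lfloor\log_2 k\rfloor}|X|$, not $|X|/k$, so (ii) is a genuinely stronger assumption than what the greedy strategy is argued to deliver. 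Either way, your analysis is more careful than the paper's own proof of this corollary; the one thing to avoid is presenting the $1/k^2$ term-by-term comparison as if it were valid, which is precisely the step the paper gets wrong.
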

\jhedit{Proofs for Corollaries~\ref{cor:regret-x-k-logk} and \ref{cor:regret-pseudo-x-k} are provided in Appendix~\ref{appsec:proof-regret-logk} and \ref{appsec:proof-regret-ps}, respectively. Based on our experiments presented in Section~\ref{sec:experiments}, the rate of elimination of the largest segment for MBTL is shown in Figure~\ref{fig:max_space}.}

\section{Experiments and analysis}\label{sec:experiments}
\vspace{-5pt}
\subsection{Setup}\label{sec:setup}
\vspace{-5pt}
Our experiments \cwwwinline{consider CMDPs that} span \cwwwinline{standard and real-world benchmarks}. \cwwwinline{In particular, we consider standard continuous control benchmarks from the CARL library \cite{benjamins_contextualize_2023}. In addition, we study problems from RL for intelligent transportation systems, using \cite{yan_unified_2022} to model the CMDPs. Surprisingly, despite the relatively low complexity of the CMDPs considered, standard deep RL algorithms appear to struggle to solve the tasks.}

\textbf{Baselines.} \cwwwinline{We consider two types of baselines when evaluating our proposed algorithm: canonical and multi-policy. The canonical baselines are selected to validate multi-policy training; the multi-policy training baselines are heuristic methods designed to validate the Bayesian optimization approach. The canonical baselines include}: (1) \textbf{Independent training}, which involves \jhedit{independently} training \jhedit{separate models} on \cwwwinline{each} task; and (2) \textbf{Multi-task RL}, \jhedit{where a single \cwwwinline{``universal''} context-conditioned \jhedit{policy} is trained for all tasks}\jhdelete{where we train a single \jhedit{policy} on all tasks simultaneously}. \cwwwinline{The multi-policy baselines include:} (3) \textbf{Random selection}, where \jhedit{each}\jhdelete{the next} training task is chosen \cwwwinline{uniformly at} randomly; 
\jhedit{(\jhhedit{4}) \textbf{Greedy strategy}, \cwwwinline{which greedily selects the next source task based on Assumption~\ref{assume:linear-generalization}} and fixed training performance}; and
\jhedit{(\jhhedit{5})} \textbf{Sequential Oracle transfer}, which \cwwwinline{has access to generalized performance corresponding to policies for all tasks (including those not yet selected) and uses that information to greedily select the best source task}. 

\textbf{Proposed method.} \jhhhedit{In early iterations of BO, GP lacks sufficient observations (often just one or two) and thus relies heavily on its prior. To mitigate this, we incorporate a brief warm-up phase in \textbf{MBTL}. Specifically, we collect three additional data points using a simple greedy approach that selects tasks with the worst observed performance so far (inspired by \cite{cho_temporal_2023}). After this initialization, the method switches to full BO for source-task selection.} \cwwwinline{We \jhhhedit{use the scaling factor of} \jhhhedit{$\beta_k=2\log(|X|k^2)$}.}

\textbf{DRL algorithms and performance measure.} We utilize Deep Q-Networks (DQN) for discrete \jhedit{action} \cwwwinline{spaces}~\cite{mnih_human-level_2015} and \jhhhedit{Trust Region Policy Optimization (TRPO)~\cite{schulman_trust_2015} or Proximal Policy Optimization (PPO)~\cite{schulman_proximal_2017}} for continuous action \cwwwinline{spaces}. 
\jhhedit{For statistical reliability, we run each experiment three times with different random seeds.}
We evaluate our method by the average performance across all $N$ target tasks after training \cwwwinline{up to $K=15$} source tasks \cwwwinline{or} the number of \cwwwinline{source tasks} needed to achieve a certain level of \cwwwinline{performance}. 
We employ min-max normalization of the rewards for each task\jhedit{, and we provide comprehensive details about our model in Appendix~\ref{appsec:detail-gp}.}

\vspace{-5pt}
\subsection{Traffic \cwwwinline{benchmark experiments}}\label{sec:traffic}
\vspace{-5pt}
\begin{table}[t]
  \caption{Comparative performance of different methods on traffic \jhedit{CMDP}s \jhhhedit{($K=15$)}}
  \label{tab:performance-traffic}
  \Large
  \resizebox{0.99\textwidth}{!}{
  \begin{tabular}{@{}ccccccccc@{}}
    \toprule
    \multicolumn{2}{c}{\textbf{Benchmark (CMDP)}}&\multicolumn{2}{c}{\textbf{Baselines}}&\multicolumn{2}{c}{\textbf{\jhhedit{Multi-policy Baselines}}}&\multicolumn{1}{c}{\textbf{MBTL}}&\multicolumn{1}{c}{\textbf{Oracle}}\\
    \midrule
    \midrule
    \textbf{Domain}     & \textbf{Context Variation}      & \textbf{Independent} & \textbf{Multi-task} & \textbf{Random}  & \textbf{Greedy} & \textbf{Ours} & \textbf{Sequential}\\
    \midrule
    \multicolumn{2}{c}{\textbf{Number of Trained Models}}&$N$&$1$&$K$&$K$&$K$&$N$\\
    \midrule
    \textbf{Traffic Signal} & Road Length & \textbf{0.9409} & 0.8242 & 0.9366 & 0.9349 & \textbf{0.9409} & 0.9432 \\
    \textbf{Traffic Signal} & Inflow & 0.8646 & 0.8319 & 0.8699 & 0.8682 & \textbf{0.8729} & 0.8773 \\
    \textbf{Traffic Signal} & Speed Limit & 0.8857 & 0.6083 & \textbf{0.8872} & \textbf{0.8874} & 0.8866 & 0.8877 \\
    \midrule
    \textbf{Eco-Driving} & Penetration Rate & 0.5260 & 0.1945 & 0.6212 & 0.5992 & \textbf{0.6519} & 0.6668 \\
    \textbf{Eco-Driving} & Inflow & 0.4061 & 0.2229 & 0.5077 & \textbf{0.5299} & \textbf{0.5356} & 0.5531 \\
    \textbf{Eco-Driving} & Green Phase & 0.3850 & 0.4228 & 0.4724 & 0.4678 & \textbf{0.4932} & 0.5058 \\
    \midrule
    \textbf{AA-Ring-Acc} & Hold Duration & 0.8362 & \textbf{0.9219} & \textbf{0.9307} & 0.9021 & \textbf{0.9329} & 0.9567 \\
    \textbf{AA-Ring-Vel} & Hold Duration & 0.9589 & 0.9688 & \textbf{0.9820} & \textbf{0.9819} & \textbf{0.9820} & 0.9822 \\
    \textbf{AA-Ramp-Acc} & Hold Duration & 0.4276 & 0.5374 & \textbf{0.6599} & \textbf{0.6570} & \textbf{0.6282} & 0.7120 \\
    \textbf{AA-Ramp-Vel} & Hold Duration & 0.5473 & 0.5257 & \textbf{0.7210} & 0.6461 & \textbf{0.7426} & 0.7691 \\
    \midrule
    \multicolumn{2}{c}{\textbf{Average}} & 0.6778 & 0.6059 & 0.7589 & 0.7474 & 0.7667 & 0.7854 \\ \bottomrule
    \end{tabular}}
    \scriptsize{\textdagger Higher the better. Bold values represent the highest value(s) within the statistically significant range for each CMDP, excluding the oracle. Detailed results with variance for each method are provided in Appendix~\ref{appsec:table-full}.\\}
    \scriptsize{\textdaggerdbl AA: Advisory autonomy benchmark, Ring: Single lane ring, Ramp: Highway ramp, Acc: Acceleration guidance, Vel: Speed guidance.}
    \vspace{-15pt}
\end{table}

\cwwwinline{We consider three traffic \jhhedit{benchmarks}.} 
First, while most traffic lights \cwwwinline{still} operate on fixed schedules, \cwwwinline{RL can be used to} design adaptive (1) \textbf{Traffic signal control} to optimize traffic \cite{chu_multi-agent_2020, li_traffic_2016}. However, considering that every intersection \cwwwinline{is} different, challenges persist in generalizing across intersection configurations~\cite{jayawardana_impact_2022}. 
Given the significant portion of greenhouse gas emissions in the United States \cwwwinline{due to} transportation~\cite{us_epa_sources_2023}, the second traffic \jhedit{domain} is (2) \textbf{\cwwwinline{Dynamic eco-driving at signalized intersections}}, which \cwwwinline{concerns learning energy-efficient driving strategies in urban settings}. DRL-based eco-driving strategies have been developed \cite{guo_hybrid_2021, wegener_automated_2021,jayawardana_learning_2022} but still \cwwwinline{experience} difficulties in generalization. 
\cwwwinline{Our final} traffic \jhedit{domain} is (3) \textbf{Advisory autonomy}, \cwwwinline{in} which real-time speed \cwwwinline{or acceleration} advisories \cwwwinline{guide} human drivers to \cwwwinline{act as vehicle-based traffic controllers} in mixed \cwwwinline{traffic environments}~\cite{sridhar_piecewise_2021, cho_temporal_2023, hasan_cooperative_2024}. 
\cwwwinline{The context space $X$ is discretized into $N=\{50, 50, 40\}$ contexts for the three \jhedit{domains}, respectively.}
In Appendix~\ref{appsec:experiment}, we provide details about our experiments.

\begin{figure}[!ht]
    \centering
    \includegraphics[width=0.98\textwidth]{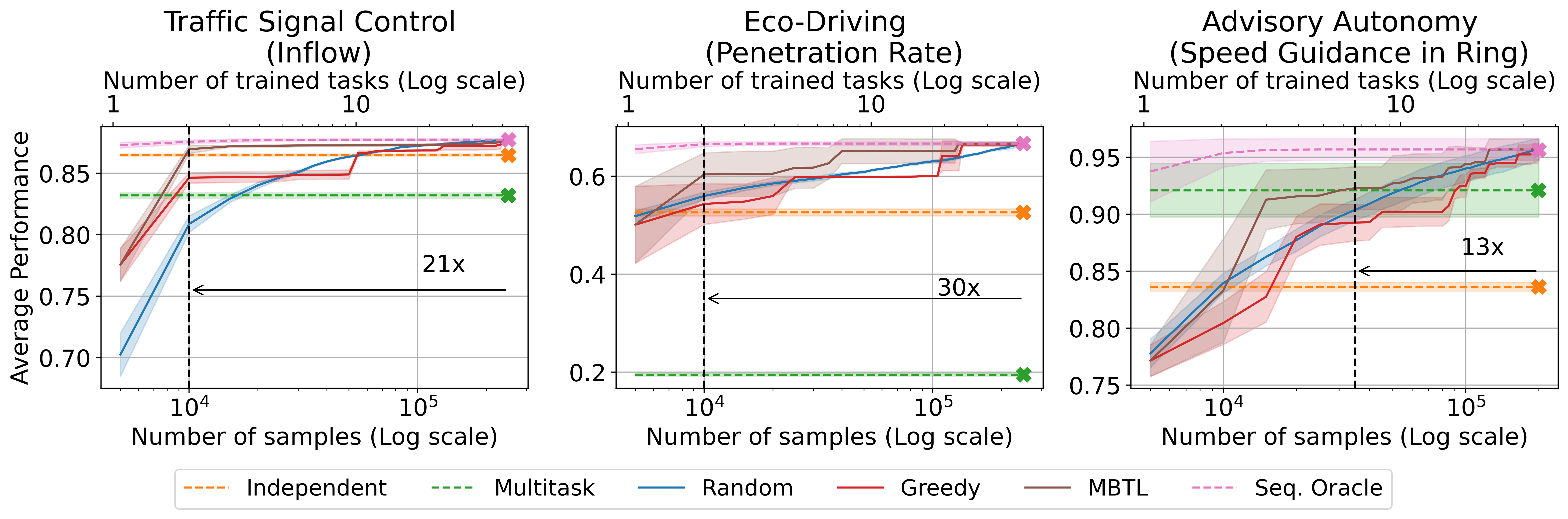}
    \caption{\cwwwinline{\textbf{Traffic CMDP results.} Method comparison of normalized performance over $N$ tasks. MBTL efficiently selects source training tasks. The black dotted line indicates the first training step within MBTL that exceeds both independent and multi-task baselines, \jhhedit{with up to 30x fewer samples.}}
    }
    \label{fig:result-main-traffic}
    \vspace{-15pt}
\end{figure}

\textbf{\cwwwinline{Results}.}
Table~\ref{tab:performance-traffic} and Figure~\ref{fig:result-main-traffic} \cwwwinline{summarize the results. Notably, the Oracle far outperforms the standard baselines (\jhhedit{independent and multi-task training}), indicating the potential for transfer learning and intelligent training of multiple models, respectively. MBTL rapidly approaches the Oracle within $\approx10$ transfer steps, indicating that the \jhhedit{GP} effectively models the training performance \jhedit{and linear generalization gap models the generalization performance}. It is important to note that multi-task RL studies commonly consider Independent training as a strong baseline due to its avoidance of negative transfer and other training instability issues. 
Indeed, Independent training often (but not always) outperforms multi-task training in our experiments. Yet, our experiments show that MBTL outperforms both Independent and Multi-task baselines and matches their performance with \textbf{\jhhedit{up to 30x} improved sample efficiency}. Among the \jhhedit{multi-policy baselines}, MBTL \jhhedit{often} outperforms the heuristic \jhhedit{multi-policy} baselines, indicating the value of adaptively selecting source tasks based on feedback. The \jhhedit{multi-policy baselines}, \jhhedit{such as random and greedy strategy}, also generally outperform Independent and Multi-task, indicating the general value of \cwwwinline{multi-policy} training for solving CMDPs.}
More results are provided in Appendix~\ref{appsec:experiment}.

\vspace{-5pt}
\subsection{\cwwwinline{Continuous} control benchmark \cwwwinline{experiments}}\label{sec:control}
\vspace{-5pt}

\cwwwinline{To probe the generality of MBTL,} we utilize context-extended versions of standard RL environments from CARL benchmark library \cite{benjamins_contextualize_2023} to \cwwwinline{evaluate} our methods under varied contexts.
For the Cartpole \jhedit{benchmark}, we \cwwwinline{considered} CMDPs with varying cart mass, pole length, and pole mass. In Pendulum, we vary the timestep duration, pendulum length, and pendulum mass. The BipedalWalker was tested under varying friction, gravity, and scale. In HalfCheetah \jhedit{domain}, we manipulated friction, gravity, and stiffness parameters. These variations critically influence the dynamics and physics of the environments. The range of context variations was selected by scaling the default values specified in CARL from 0.1 to 10 times \jhedit{($N=100$)}, enabling a comprehensive analysis of transfer learning under drastically different conditions. We provide more \jhhedit{experimental} details in Appendix~\ref{appsec:experiment}.

\begin{table}[t]
  \caption{Comparative performance of different methods on continuous control \jhedit{CMDP}s \jhhhedit{($K=15$)}}
  \label{tab:performance-control}
  \Large
  \resizebox{0.99\textwidth}{!}{
  \begin{tabular}{@{}ccccccccc@{}}
    \toprule
    \multicolumn{2}{c}{\textbf{Benchmark (CMDP)}}&\multicolumn{2}{c}{\textbf{Baselines}}&\multicolumn{2}{c}{\textbf{\jhhedit{Multi-policy Baselines}}}&\multicolumn{1}{c}{\textbf{MBTL}}&\multicolumn{1}{c}{\textbf{Oracle}}\\
    \midrule
    \midrule
    \textbf{Domain}     & \textbf{Context Variation}      & \textbf{Independent} & \textbf{Multi-task} & \textbf{Random}  & \textbf{Greedy} & \textbf{Ours} & \textbf{Sequential}\\
    \midrule
    \multicolumn{2}{c}{\textbf{Number of Trained Models}}&$N$&$1$&$K$&$K$&$K$&$N$\\
    \midrule
    \textbf{Pendulum} & Length & 0.7383 & 0.6830 & 0.7607 & \textbf{0.7774} & \textbf{0.7749} & 0.8073 \\
    \textbf{Pendulum} & Mass & 0.6237 & 0.5793 & 0.6647 & \textbf{0.6887} & \textbf{0.6933} & 0.7168 \\
    \textbf{Pendulum} & Timestep & 0.8135 & 0.7247 & \textbf{0.8331} & \textbf{0.8497} & \textbf{0.8310} & 0.8880 \\
    \midrule
    \textbf{Cartpole} & Mass of Cart & \textbf{0.9466} & 0.7153 & 0.8961 & 0.8299 & 0.9154 & 0.9998 \\
    \textbf{Cartpole} & Length of Pole & 0.9110 & 0.5441 & 0.9497 & 0.9424 & \textbf{0.9717} & 0.9995 \\
    \textbf{Cartpole} & Mass of Pole & 0.9560 & 0.6073 & 0.9870 & \textbf{0.9916} & \textbf{0.9941} & 1.0000 \\
    \midrule
    \textbf{BipedalWalker} & Gravity & 0.9281 & 0.7898 & 0.9654 & \textbf{0.9656} & \textbf{0.9669} & 0.9721 \\
    \textbf{BipedalWalker} & Friction & 0.9317 & 0.9051 & \textbf{0.9739} & \textbf{0.9738} & 0.9714 & 0.9779 \\
    \textbf{BipedalWalker} & Scale & 0.8694 & 0.7452 & \textbf{0.8910} & \textbf{0.8990} & \textbf{0.8864} & 0.9155 \\
    \midrule
    \textbf{HalfCheetah} & Gravity & 0.6679 & 0.6292 & 0.9086 & 0.9089 & \textbf{0.9308} & 0.9544 \\
    \textbf{HalfCheetah} & Friction & 0.6693 & 0.7242 & \textbf{0.9314} & \textbf{0.9184} & \textbf{0.9404} & 0.9663 \\
    \textbf{HalfCheetah} & Stiffness & 0.6561 & 0.7007 & \textbf{0.9191} & \textbf{0.9295} & \textbf{0.9214} & 0.9677 \\
    \midrule
    \multicolumn{2}{c}{\textbf{Average}} & 0.8093 & 0.6957 & 0.8901 & 0.8896 & 0.8998 & 0.9304 \\ \bottomrule
    \end{tabular}}
    \scriptsize{\textdagger Higher the better. Bold values represent the highest value(s) within the statistically significant range for each CMDP, excluding the oracle. Detailed results with variance for each method are provided in Appendix~\ref{appsec:table-full}.\\}
    \vspace{-15pt}
\end{table}

\begin{figure}[!ht]
    \centering
    \includegraphics[width=0.98\textwidth]{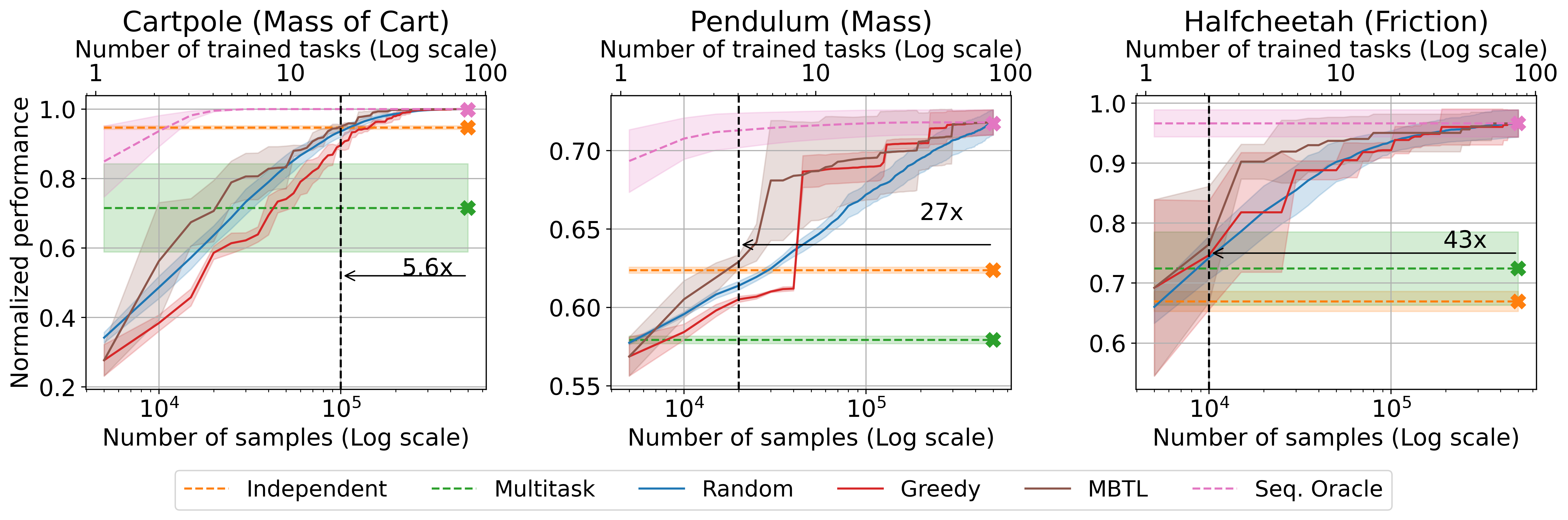}
    \caption{\cwwwinline{\textbf{Continuous control CMDP results.} Method comparison of normalized performance over $N$ tasks. The black dotted line indicates the first training step within MBTL that exceeds both independent and multi-task baselines, \jhhedit{with up to 43x improved sample efficiency.}}}
    \label{fig:result-main-control}
    \vspace{-10pt}
\end{figure}

\begin{figure}[!t]
    \centering
    \includegraphics[width=0.999\textwidth]{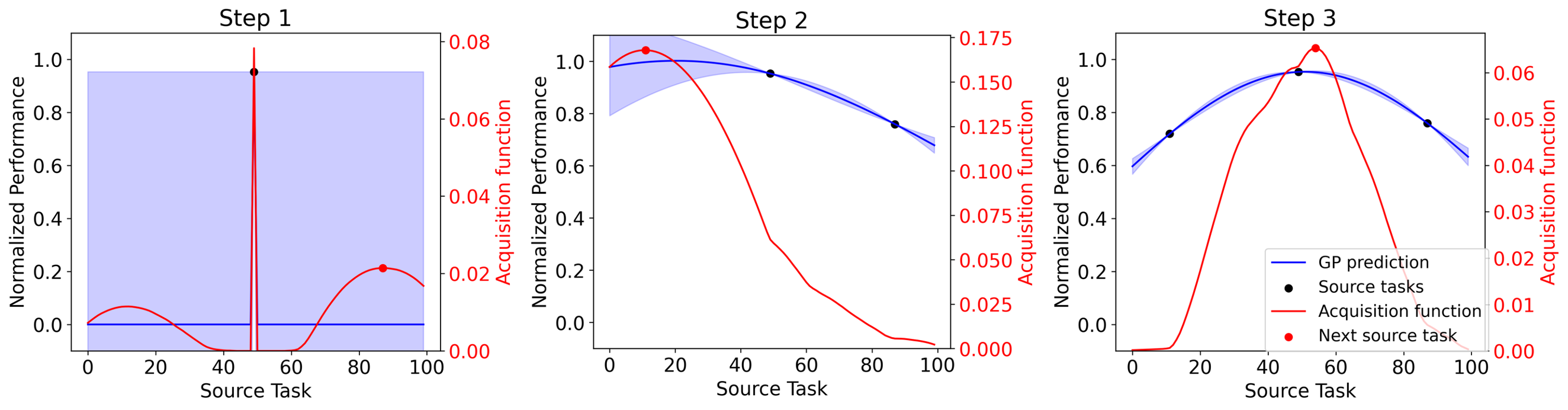}
    \caption{\jhedit{The GP sequentially updates \cwwwinline{estimates of the performance function} (blue) based on \cwwwinline{previously} trained models. \cwwwinline{Then}, MBTL selects the next source task that maximizes the acquisition function (red). (CMDP: Pendulum (Time step))}.}
    \label{fig:gp-pendulum}
    \vspace{-10pt}
\end{figure}

\textbf{\cwwwinline{Results}.} 
\cwwwinline{The results summarized in} Table~\ref{tab:performance-control} \jhedit{demonstrate sample efficiency and competitive performance \jhhedit{of multi-policy training including MBTL} across diverse control domains, often closely trailing the Oracle only with a small number of \jhhedit{trained policies}. \cwwwinline{\jhhedit{Figure~\ref{fig:result-main-control} shows that w}ith the exception of \jhhedit{a few context variations}, MBTL generally shows superior performance.}} 
\jhedit{Specifically, Figure~\ref{fig:gp-pendulum} illustrates the detailed process of how MBTL utilizes GP for performance estimation and chooses the next source task that maximizes the acquisition function that evaluates the expected improvement of generalized performance.} 
\jhedit{MBTL achieves comparable performance to multi-task or independent baselines with \jhhhedit{up to \textbf{43x} fewer samples}, highlighting its effectiveness in reducing training requirements.}

\begin{figure}[!t]
    \centering
     \includegraphics[width=0.98\linewidth]{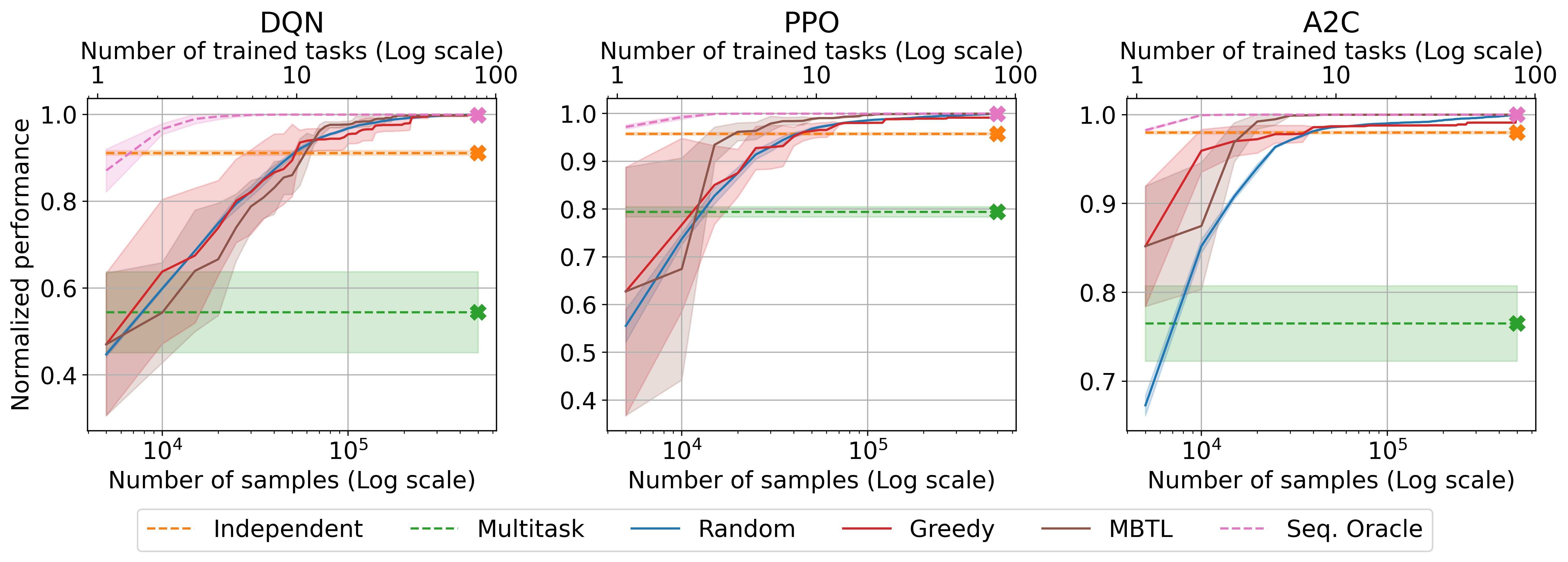}
     \captionof{figure}{\jhedit{Sensitivity analysis on the DRL algorithm \cwwwinline{underlying MBTL} (DQN, PPO, and A2C), tested on Cartpole with \cwwwinline{varying} length of pole}. MBTL \cwwwinline{remains effective}.}
     \vspace{-10pt}
    \label{fig:result-ablation-alg}
\end{figure}

\begin{wrapfigure}[12]{r}{.53\textwidth}
    \centering
    \vspace{-25pt}
     \includegraphics[width=\linewidth]{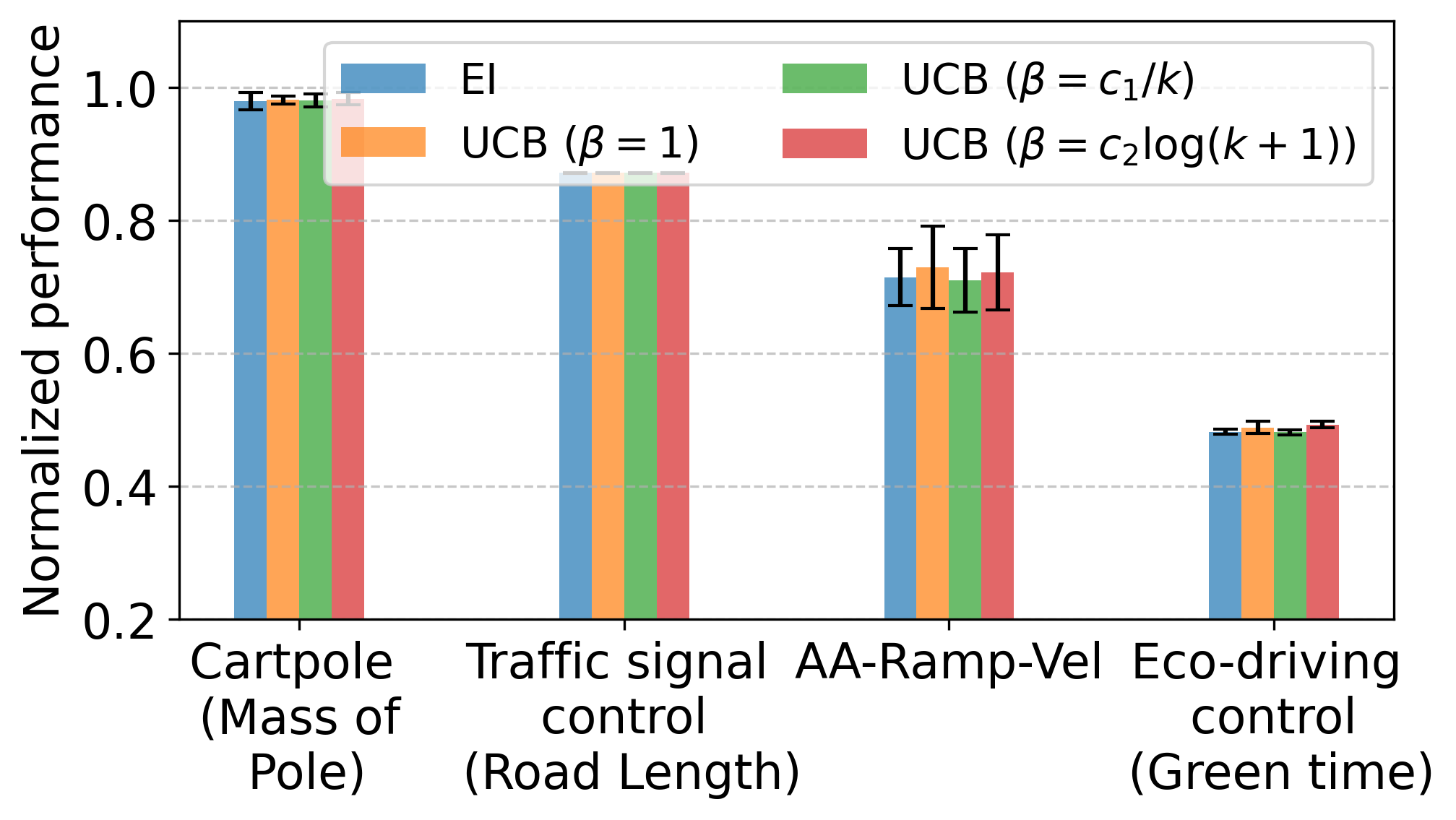}
     \captionof{figure}{\jhedit{Sensitivity analysis} on acquisition functions.}
     \vspace{-20pt}
    \label{fig:result-ablation-gp}
\end{wrapfigure}

\subsubsection{\jhedit{Sensitivity analysis}}

\textbf{DRL algorithms.}
Figure~\ref{fig:result-ablation-alg} shows \jhedit{that MBTL \cwwwinline{remains effective with different underlying} DRL algorithms—DQN, PPO, and Advantage Actor-Critic (A2C)~\cite{mnih_asynchronous_2016}—used for single-task training.}

\textbf{Acquisition functions.} Figure~\ref{fig:result-ablation-gp} assesses the role of acquisition functions in Bayesian optimization. 
While expected improvement (EI) focuses on promising marginal gains beyond the current best, UCB utilizes both mean and variance for balancing exploration and exploitation. \cwwwinline{Overall, we find that MBTL is not particularly sensitive to the choice of optimism representation in the acquisition function, which indicates that MBTL has a weak dependence on hyperparameters.}

\section{Related work}\label{sec:lit} 
\jhedit{\textbf{Contextual Reinforcement Learning.}} Robustness and generalization challenges in DRL are generally addressed by a few common techniques in the literature. The broader umbrella of such methods falls under CRL\jhedit{, which} utilizes side information about the problem variations to improve the generalization and robustness. In particular, CRL formalizes generalization in DRL using \jhedit{CMDP}s \cite{hallak_contextual_2015, modi_markov_2018, benjamins_contextualize_2023}, which incorporate context-dependent dynamics, rewards, and initial state distributions into the formalism of MDPs. 
The contexts of CMDPs are not always visible during training~\cite{kirk_survey_2023}. When they are visible, they can be directly used as side information by conditioning the policy on them \cite{sodhani2021multi}. 
\jhhedit{In this paper, we focus on a scenario where the learner can choose which context-MDP to train on. This contrasts with other CRL works that assume context-MDPs arrive from a fixed distribution.}

\jhhedit{\textbf{Multi-task training.}} 
\jhhedit{Multi-task methods can help address CRL by exploiting shared structure across tasks. Prior work has leveraged techniques such as policy sketches for task decomposition \cite{andreas_modular_2017} and distilled policies that capture common behaviors \cite{teh_distral_2017}. However, a key limitation arises when the context is unobserved, effectively transforming the CMDP into a partially observable setting \cite{kirk_survey_2023, cobbe2020leveraging}, which complicates multi-task training. Another challenge is negative transfer, wherein training on tasks that are too dissimilar leads to instability or outright failure \cite{kang_learning_2011, standley_which_2020, sun_adashare_2020}. Although more recent multi-task approaches such as MOORE \cite{hendawy_multi_2023} and PaCo \cite{sun_paco_2022} have shown promise, they often focus on discrete task sets and are thus less suited to CRL, where tasks span a broad continuum of contexts. In this work, we include multi-task learning as a baseline to benchmark our methods.}

\jhedit{\textbf{Zero-shot transfer and policy reuse.}} 
\jhhedit{Zero-shot transfer—where models trained for one environment directly perform in new, unseen settings without additional training \cite{kirk_survey_2023}—is an important strategy in CRL settings.}
\jhhedit{For solving CMDP problems, prior works attempted to utilize zero-shot transfer to solve CMDP problems by approximation on RL algorithm and hypernetworks that maps from parameterized \jhedit{CMDP} to a family of near-optimal solutions \cite{rezaei-shoshtari_hypernetworks_2023}.}
\citet{sinapov_learning_2015} use meta-data to learn inter-task transferability to learn the expected benefit of transfer given a source-target task pair. \citet{bao_information-theoretic_2019} propose a metric for evaluating transferability based on information-theoretic feature representations across tasks.
\jhhedit{Taken together, these approaches highlight the importance of policy reuse, where efficiently selecting or adjusting a pre-trained policy accelerates learning and improves robustness in new contexts.}

\jhedit{\textbf{Source task selection.}} \jhedit{In the context of transfer learning, selecting appropriate source tasks is crucial. \citet{li_optimal_2018} proposes an optimal online method for dynamically selecting the most relevant single source policy in reinforcement learning.}
Beyond RL, \citet{meiseles_source_2020} emphasizes structural alignment in time-series source models to prevent performance degradation, while \citet{poth_pre_2021} finds that selecting aligned intermediate tasks in natural language processing boosts transfer effectiveness. \jhhedit{Building upon these insights, we formulate the source task selection problem for CRL, enabling zero-shot transfer by estimating training performance online and leveraging structural generalization across context variations.}

\section{Conclusion}\label{sec:concl}
\cwwwinline{This study introduces a method called Model-based Transfer Learning (MBTL), which layers on top of existing RL methods to effectively solve CMDPs.
Rather than independent or multi-task training, which trains $N$ or 1 models, respectively, MBTL intelligently selects an intermediate number of models to train.
MBTL has two key components: an explicit model of the generalization gap and \jhedit{a Gaussian process} component to estimate training performance. MBTL achieves \jhhedit{up to 43x} improved sample efficiency on standard and real-world benchmarks. Furthermore, MBTL achieves sublinear regret in the number of training tasks.}
A \textbf{limitation} is that MBTL \cwwwinline{is designed for a single-dimensional} context variation \jhhedit{with a reliance on the explicit similarity of context variables}. \cwwwinline{Promising directions of future work include studying high-dimensional context spaces and \jhhedit{formalizing task similarity}, as well as the development of new real-world CMDP benchmarks}.

\begin{ack}
The authors acknowledge the MIT SuperCloud and Lincoln Laboratory Supercomputing Center for providing HPC resources that have contributed to the research results reported within this paper. This work was supported by the National Science Foundation (NSF) CAREER award (\#2239566), the Kwanjeong Educational Foundation Ph.D. scholarship program, and an Amazon Robotics Ph.D. Fellowship. The authors would like to thank the anonymous reviewers for their valuable feedback.
\end{ack}


{
\small
\bibliography{reference}
\medskip
}

\newpage
\appendix
\section{Appendix}
\localtableofcontents

\clearpage
\subsection{Notation}\label{appsec:notation}
Table~\ref{tab:notation} describes the notation used in this paper.

\begin{table}[ht]
\centering
\caption{Notation used in the problem formulation}
\label{tab:notation}
\begin{tabular}{cc}
\hline
\textbf{Symbol} & \textbf{Description}                                \\ \hline
$x$             & Source task ($x \in X$)                             \\
$x'$            & Target task ($x' \in X$)                            \\
$\jhedit{\pi_x}$             & \jhedit{Trained policy from source task ($x \in X$)}                \\
$x_{k}$       & Selected source task at transfer step $k$ ($k=1,...,K$) \\
$\mathcal{M}_x$ & Contextual MDP parameterized by $x$                 \\
$J(\jhedit{\pi_x,}x)$          & Performance of task $\mathcal{M}_x$               \\
$\jhedit{J(\pi_x,x')}$ & Generalization performance (source: $x$ (or $\mathbf{x}$), target: $x'$) \\
$\Delta {J}(\jhedit{\pi_x,}x')$ & Generalization gap (source: $x$, target: $x'$) \\
$V(x';\jhedit{\pi_x})$          & Expected generalization performance of source model $x$ evaluated on all $x' \in X$ \\ \hline
\end{tabular}
\end{table}

Figure~\ref{fig:illust-for-u} helps understand the discrepancy between the observed generalized performance and the predicted one.
\Cref{fig:illust-for-V_pred} illustrates how to calculate the marginal improvement of expected generalized performance (\jhhedit{$\hat{V}(x;\pi_{1:k-1}) - V(x_{1:k-1})$}).
\begin{figure}[!ht]
    \centering
    \begin{tikzpicture}[domain=0:5]
        \draw[very thin,color=gray] (-0.1,-0.1) grid (4.9,4.1);
        
        \draw[->] (-0.2,0) -- (5.2,0) node[below=0.4, pos=0.5] {Target task $x'$};
        \draw[->] (0,-0.2) -- (0,5.2) node[left] {$U$};
        \draw[->] (5,-0.2) -- (5,5.2) node[] {};

        \draw[black,dotted,thick] (2.5,4) -- (5,4) node[right] {$J(\jhedit{\pi_{x_1},}x_1)$};
        \draw[black,dotted] (2.5,0) node[below] {$x_1$} -- (2.5,4.9);
        \draw[color=black]    plot (\x,{3*sin(2*\x/3.14 r)+1})             node[right] {$J(\jhedit{\pi_{x_1},x'})$};
        \draw[black,dashed,thick] (0,2) -- (2.5,4);
        \draw[black,dashed,thick] (2.5,4) -- (5,2) node[right] {$\hat{J}\jhedit{(\pi_{x_1},x')}$};
    \end{tikzpicture}
    \caption{Illustration of the discrepancy between observed ($J$) and predicted ($\hat{J}$) generalized performance after training on source task $x_1$ and attempting zero-shot transfer to $x'$.}
    \label{fig:illust-for-u}
\end{figure}

\begin{figure}
    \centering
    \begin{tikzpicture}[domain=0:5]
        \draw[very thin,color=gray] (-0.1,-0.1) grid (4.9,4.2);
        
        \draw[->] (-0.2,0) -- (5.2,0) node[below=0.4, pos=0.5] {Target task $x'$};
        \draw[->] (0,-0.2) -- (0,5.2) node[left] {$U$};
        \draw[->] (5,-0.2) -- (5,5.2) node[] {};

        \draw[black,dotted,thick] (2.5,4) -- (5,4) node[right] {$J(\jhedit{\pi_{1},}x_1)$};
        \draw[black,dotted] (2.5,0) node[below] {$x_1$} -- (2.5,4);
        \draw[black,dotted,thick] (0.9,3.5) -- (0,3.5) node[left] {$\hat{J}(\jhedit{\pi_x,}x)$};
        \draw[black,dotted] (0.9,0) node[below] {$x$} -- (0.9,3.5);
        \fill[pink!30] (0,1) -- (0,2) -- (0.9,3.5) -- (1.15,3) -- cycle;
        \draw[color=black]    plot (\x,{3*sin(2*\x/3.14 r)+1})             node[right] {$J(\jhedit{\pi_{1},x'})$};
        \draw[black,dashed,thick] (0,2) node[left] {$\hat{J}\jhedit{(\pi_{x},x')}$} -- (0.9,3.5);
        \draw[black,dashed,thick] (0.9,3.5) -- (3,0);
        \draw[red, ultra thick, domain=1.2:5, samples=100] plot (\x, {3*sin(2*\x/3.14 r)+1});
        \draw[red, ultra thick]    (0,2) -- (0.9,3.5);
        \draw[red, ultra thick]    (0.9,3.5) -- (1.2,3);
        \draw[black] (4.1,2.6) -- (4.7, 3) node[right] {$\max \left(\hat{J}\jhedit{(\pi_{x},x')},J(\jhedit{\pi_{1},x'}) \right)$};
        \draw[black] (0.5,2.1) -- (-0, 2.7) node[left] {\jhhedit{$\hat{V}(x;\pi_1) - V(x_1)$}};
    \end{tikzpicture}
    \caption{Step for choosing $x_2$ that maximizes the estimated marginal improvement ($\hat{V}(x;\pi_1) - V(x_1)$). $\hat{V}(x;\pi_1)$ corresponds to the red area under the red line and $V(x_1)$ as the area under $J(\jhedit{\pi_{1},x'})$.}
    \label{fig:illust-for-V_pred}
\end{figure}

\subsection{Model-Based Transfer Learning (MBTL) Algorithm}\label{appsec:alg}

\begin{tcolorbox}[colback=white!10!white, colframe=white!20!black, title=Model-based Transfer Learning (MBTL)] 

\begin{algorithmic}[1]
\State \textbf{Input:} \jhedit{CMDP}s $\mathcal{M}_x$, Task (context) set $X$, Training budget $K$
\State \textit{Initialize} : $J, V=0\ \forall x\in X$, $\fpi=\{\}$, $k=1$
\While{$k\leq K$}
    \State \textcolor{purple}{\% Estimate training performance}
    \State $\mu, \sigma \gets \mathcal{GP}(\mathbb{E}[\jhedit{{J}}(\jhedit{\pi_x,}x)], k(x, \jhedit{\tilde{x}})))$
    \State \textcolor{purple}{\% Calculate marginal generalized performance and acquisition function}
    \State Calculate $a(x;x_{1:k-1})$ with Eq.~\ref{eqn:acquisition}
    \State \textcolor{purple}{\% Select the next training task}
    \State $x_k=\arg\max_x a(x;x_{1:k-1})$
    \State $\pi_{k} \gets \textbf{Train}(\mathcal{M}_{x_k})$
    \State $\fpi \gets \fpi\cup\{\pi_{k}\}$
    \State $k \gets k+1$
\EndWhile
\State Zero-shot transfer and calculate generalization performance $V(x_1,...,x_k)$
\State \textbf{Output:} Set of \cwwwinline{policies} $\fpi$ and generalization performance $V$
\end{algorithmic}
\label{alg:mbtl}
\end{tcolorbox}

\subsection{Theoretical analysis}\label{appsec:theory}
\jhhhedit{This section provides detailed proofs of the regret bounds introduced in Theorem~\ref{theorem:regret-delta-beta-log-x-k}, Corollary~\ref{cor:regret-x-k-logk}, and Corollary~\ref{cor:regret-pseudo-x-k} from the main text. Our analysis adapts key results from \cite{srinivas_information-theoretic_2012} to settings where the search space is restricted at each iteration.}

\subsubsection{Proof of Theorem~\ref{theorem:regret-delta-beta-log-x-k}}\label{appsec:proof-log-x-k}
\regretdeltaxk*
\begin{proof}
    \jhhhedit{We begin by introducing two lemmas (Lemmas~\ref{lemma:regret_x_k} and \ref{lemma:regret-k}) that extend the results in \cite{srinivas_information-theoretic_2012} to handle the restricted search space $X_k \subseteq X$ at each iteration.}
    \begin{lemma}
        \label{lemma:regret_x_k}
        For $t\geq 1$, if $|f(x)-\mu_{k-1}(x)|\leq \beta_k^{1/2}\sigma_{k-1}(x) \quad \forall x \in X_k$, then the regret $r_t$ is bounded by $2|X_k|\beta_k^{1/2}\sigma_{k-1}(x)/|X|$.
    \end{lemma}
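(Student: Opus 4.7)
}

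The plan is to adapt the standard pointwise regret bound for GP-UCB (Lemma 5.2 of Srinivas et al.~2012) to the restricted search space $X_k$, where the extra factor $|X_k|/|X|$ accounts for the fact that the generalized-performance objective $g(\cdot)$ is an expectation over the full task space $X$, but only tasks in $X_k$ can contribute a nonzero marginal improvement under the reduced-search-space definition.

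First, I would use the confidence-bound hypothesis together with the UCB selection rule in the usual three-step chain. By hypothesis, for every $x\in X_k$ we have $f(x)\leq \mu_{k-1}(x)+\beta_k^{1/2}\sigma_{k-1}(x)$ and $f(x_k)\geq \mu_{k-1}(x_k)-\beta_k^{1/2}\sigma_{k-1}(x_k)$; and by the greedy choice in the acquisition rule, $\mu_{k-1}(x_k)+\beta_k^{1/2}\sigma_{k-1}(x_k) \geq \mu_{k-1}(x^\ast_k)+\beta_k^{1/2}\sigma_{k-1}(x^\ast_k)\geq f(x^\ast_k)$. Chaining these three inequalities on $X_k$ yields the pointwise bound $f(x^\ast_k)-f(x_k)\leq 2\beta_k^{1/2}\sigma_{k-1}(x_k)$, exactly as in the original lemma.

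Second, I would translate this pointwise gap into the regret $r_k=g(x^\ast_k)-g(x_k)$ on the expected generalized performance. Because $g(x)$ is an average of $J_{\text{pred}}(x)-\Delta J(x,x')$ over $x'\in X$, and by the definition of the reduced search space only target tasks $x'\in X_k$ admit any further improvement from the newly selected source (for $x'\notin X_k$, the existing $U(x';x_{(1:k-1)})$ already dominates anything $x_k$ can provide via the linear generalization gap, so the $\max$ in \Cref{eqn:U_obs} leaves that coordinate unchanged and contributes zero to the marginal improvement), the pointwise gap gets effectively averaged only over $X_k$. Multiplying the pointwise bound by the weight $|X_k|/|X|$ of the active coordinates then gives $r_k \leq 2(|X_k|/|X|)\beta_k^{1/2}\sigma_{k-1}(x_k)$, matching the claim.

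The main obstacle I anticipate is the second step: making rigorous the claim that tasks $x'\notin X_k$ contribute nothing to the marginal improvement, and that therefore the correct normalization is $|X_k|/|X|$ rather than $1$. This requires a careful case analysis on the $\max$ operator in \Cref{eqn:U_obs} combined with Assumption~\ref{assume:linear-generalization} to ensure that the argument applies uniformly in $x$ and does not spill over into coordinates already ``solved'' by earlier source selections. Once that weighting is established, the remainder of the argument is a direct transcription of the GP-UCB pointwise-regret lemma restricted to the subset $X_k$.
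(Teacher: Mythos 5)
Your first step is exactly the standard GP-UCB chaining argument (Lemma 5.2 of Srinivas et al.), and that is precisely what the paper gestures at: the paper's own ``proof'' of this lemma is a single sentence saying it is ``basically considering the cardinality of restricted search space of $X_k$ instead of $X$'' applied to the lemmas in the literature, with no further detail. So there is no more rigorous argument in the paper to compare against; your write-up is already more explicit than the source. One small point on the chaining step: the hypothesis only gives the confidence bound for $x \in X_k$, so the chain requires both $x_k$ and the comparator $x^*_k$ to lie in $X_k$; you use this implicitly and it should be stated (it holds if the algorithm restricts its selection to $X_k$ and the optimum of the restricted problem is taken over $X_k$, but that is a modeling choice, not a consequence of the hypothesis).

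The obstacle you flag in your second step is a genuine gap, and it is also unaddressed in the paper. The regret is defined in Section 4.4 as $r_k = g(x^*_k) - g(x_k)$ with $g(x) = J_{\text{pred}}(x) - \mathbb{E}_{x' \in X}[\Delta J(x,x')]$, which is a uniform average over \emph{all} of $X$ and makes no reference to $U(x';x_{(1:k-1)})$ or to the $\max$ in the update rule. Under that definition every coordinate $x'$ contributes to the difference $g(x^*_k)-g(x_k)$, and there is no mechanism by which coordinates outside $X_k$ contribute zero; the $|X_k|/|X|$ factor therefore does not follow. Your case analysis on the $\max$ operator is the right idea, but it applies to the \emph{marginal improvement} of $V$ (i.e., the acquisition function with the $[\cdot]_+$ and the $U(x';x_{(1:k-1)})$ baseline), not to $g$ as the paper defines it. To close the gap one must either redefine the per-step regret in terms of the marginal improvement of $U$ over the incumbent, or add an assumption that $g(x^*_k)$ and $g(x_k)$ agree on $X \setminus X_k$. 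Neither you nor the paper does this, so the lemma as stated remains unproven in both; your proposal at least makes the missing step visible.
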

    
    \begin{lemma}
        \label{lemma:regret-k}
        Setting $\delta' \in (0,1)$, $\beta_k=2 \log(|X| \pi^2 k^2 / 6\delta')$, and $C_1:=\frac{8}{\log(1+\sigma^{-2})}\geq8\sigma^2$, we have $Pr\left[\sum_{k=1}^K r_k\left(\frac{|X|}{|X_k|}\right)^2\leq C_1 \beta_K \gamma_K \quad \forall K\geq 1\right]\geq1-\delta'$.
    \end{lemma}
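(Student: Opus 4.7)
The plan is to follow the template of Lemma 5.4 in Srinivas et al.~(2012), adapted to the restricted search space. Note that the statement as written contains what appears to be a typographical omission: for Theorem~\ref{theorem:regret-delta-beta-log-x-k} to follow by Cauchy--Schwarz from the form $R_K = \sum_k r_k = \sum_k [r_k(|X|/|X_k|)]\cdot[|X_k|/|X|]$, the quantity inside the probability should be $\sum_{k=1}^K r_k^2 (|X|/|X_k|)^2$. I would therefore prove the statement in that corrected form.

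First, I would establish the uniform confidence bound. Because $X$ is finite (but $X_k \subseteq X$ depends on $k$), apply the standard Gaussian tail inequality $\Pr[|f(x) - \mu_{k-1}(x)| > \beta_k^{1/2}\sigma_{k-1}(x)] \leq e^{-\beta_k/2}$, then union-bound over $x \in X$ and over $k \geq 1$. Using $\beta_k = 2\log(|X|\pi^2 k^2/6\delta)$, the per-$(x,k)$ failure probability is $6\delta/(|X|\pi^2 k^2)$, and summing $\sum_k 1/k^2 = \pi^2/6$ over $X$ gives total failure $\leq \delta$. Thus with probability $\geq 1-\delta$, the confidence bound holds for every $x \in X$ (hence for every $x \in X_k \subseteq X$) and every $k\geq 1$, which is exactly the precondition of Lemma~\ref{lemma:regret_x_k}.

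Second, on the event that the confidence bound holds for all $k$, Lemma~\ref{lemma:regret_x_k} yields $r_k \leq 2(|X_k|/|X|)\beta_k^{1/2}\sigma_{k-1}(x_k)$, so
\begin{equation*}
r_k^2 \left(\tfrac{|X|}{|X_k|}\right)^2 \leq 4\,\beta_k\,\sigma_{k-1}^2(x_k) \leq 4\,\beta_K\,\sigma_{k-1}^2(x_k),
\end{equation*}
using monotonicity of $\beta_k$. Summing over $k = 1,\dots,K$ reduces the problem to bounding $\sum_k \sigma_{k-1}^2(x_k)$.

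Third, I would invoke the standard information-gain trick: for $s \in [0,1]$ one has $s \leq \log(1+s)/\log(2)$, and more generally $\sigma^{-2} s \leq \log(1+\sigma^{-2}s)/\log(1+\sigma^{-2})$ whenever $\sigma^{-2}s \in [0,\sigma^{-2}]$. Applying this to $s = \sigma_{k-1}^2(x_k)$ (assuming $k(x,x)\leq 1$, the usual kernel normalization), we get
\begin{equation*}
4\sigma_{k-1}^2(x_k) \leq \tfrac{4}{\log(1+\sigma^{-2})} \log\!\bigl(1+\sigma^{-2}\sigma_{k-1}^2(x_k)\bigr) = C_1 \cdot \tfrac{1}{2}\log\!\bigl(1+\sigma^{-2}\sigma_{k-1}^2(x_k)\bigr).
\end{equation*}
Summing, the right-hand side is exactly $C_1$ times the mutual information $I(y_{1:K}; f_{1:K})$, which is bounded by the maximum information gain $\gamma_K$. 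Multiplying by $\beta_K$ gives $\sum_{k=1}^K r_k^2(|X|/|X_k|)^2 \leq C_1 \beta_K \gamma_K$, which holds for all $K \geq 1$ simultaneously on the good event.

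The main obstacle is ensuring the union bound is done once, up front, over all of $X$ (not over the time-varying $X_k$), so that the same high-probability event supports the confidence bound on every $X_k$; once this is handled, the rest is a direct adaptation of the Srinivas et al.\ argument, with the ratio $|X_k|/|X|$ factor from Lemma~\ref{lemma:regret_x_k} being algebraically absorbed by multiplying through by $(|X|/|X_k|)^2$ before applying the information-gain inequality.
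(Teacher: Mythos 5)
Your proposal is correct and follows essentially the same route the paper intends: the paper offers no detailed proof of Lemma~\ref{lemma:regret-k}, deferring to Lemmas 5.1--5.4 of \citet{srinivas_information-theoretic_2012} with the $|X_k|/|X|$ factor inserted, and your argument is a faithful reconstruction of exactly that chain (union bound over all of $X$ and $k$, the per-step bound from Lemma~\ref{lemma:regret_x_k}, then the information-gain inequality). Your observation that the summand must be $r_k^2(|X|/|X_k|)^2$ rather than $r_k(|X|/|X_k|)^2$ is a genuine and correct catch --- the statement as printed is a typo, since only the squared form supports the Cauchy--Schwarz step used to derive Theorem~\ref{theorem:regret-delta-beta-log-x-k}.
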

    
    \jhhhedit{Using Lemma~\ref{lemma:regret_x_k}, we can bound each $r_t$ in terms of the restricted search space $X_k$. Then, applying Lemma~5.3 from \cite{srinivas_information-theoretic_2012} (which controls the deviation of the GP-UCB estimator) together with Lemma~\ref{lemma:regret-k} (which sums these instantaneous regrets under restricted search spaces), and finally invoking the Cauchy--Schwarz inequality, we derive a bound on the cumulative regret. Specifically, with probability at least $1 - \delta'$, we have:}
    \begin{equation}
        R_K=\sum_{k=1}^K r_k\leq\sqrt{\sum_{k=1}^Kr_k\left(\frac{|X|}{|X_k|}\right)^2 \sum_{k=1}^K\left(\frac{|X_k|}{|X|}\right)^2}\leq\sqrt{C_1\beta_K\gamma_K\sum_{k=1}^K\left(\frac{|X_k|}{|X|}\right)^2}.
    \end{equation}
\end{proof}

\subsubsection{Proof of Corollary~\ref{cor:regret-x-k-logk}}\label{appsec:proof-regret-logk}
\regretlogk*
\begin{proof}
    Recall that $\sum_{k=1}^K\frac{1}{k}\leq \log K$.
    
    Calculating the sum of squares for the reduced segments, we have:
    \begin{equation}
        \sum_{k=1}^K|X_k|^2=\sum_{k=1}^K\frac{1}{k}|X|^2\leq|X|^2\log K
    \end{equation}
    The cumulative regret can be bounded as below:
    \begin{equation}
        R_K=\sum_{k=1}^K r_k\leq\sqrt{C_1\beta_K\gamma_K\sum_{k=1}^K\left(\frac{|X_k|}{|X|}\right)^2}\leq\sqrt{C_1\beta_K\gamma_K\log K}.
    \end{equation}
\end{proof}

\subsubsection{Proof of Corollary~\ref{cor:regret-pseudo-x-k}}\label{appsec:proof-regret-ps}
\regretps*
\begin{proof}
    Calculating the sum of squares for the reduced segments, we have:
    \begin{equation}
        \sum_{k=1}^K|X_k|^2=\sum_{k=1}^K2^{-2\lfloor\log_2k\rfloor}|X|^2\leq\sum_{k=1}^K\frac{1}{k^2}|X|^2\leq\frac{\pi^2}{6}|X|^2
    \end{equation}
    The cumulative regret can be bounded as below:
    \begin{equation}
        R_K=\sum_{k=1}^K r_k\leq\sqrt{C_1\beta_K\gamma_K\sum_{k=1}^K\left(\frac{|X_k|}{|X|}\right)^2}\leq\sqrt{\frac{C_1\beta_K\gamma_K\pi^2}{6}}.
    \end{equation}
\end{proof}

\clearpage
\subsection{Experiment details}\label{appsec:experiment}
\subsubsection{Details about Gaussian process (GP) Regresstion}\label{appsec:detail-gp}
\jhhedit{We use the \texttt{GaussianProcessRegressor} implementation from \texttt{scikit-learn}, which follows Algorithm~2.1 of \cite{williams_gaussian_2006}. Specifically, we construct a kernel by multiplying a constant kernel 
\[
C(\theta) = 1.0, \quad \theta \in (10^{-3}, 10^{3}),
\]
by a radial basis function (RBF) kernel 
\[
k_{\mathrm{RBF}}(\mathbf{x},\mathbf{x}'; \ell) = \exp\Bigl(-\frac{\|\mathbf{x} - \mathbf{x}'\|^2}{2\ell^2}\Bigr)
\]
with an initial length scale $\ell = 1.0$ (constrained to lie in the range $[10^{-2}, 10^2]$).
To determine the hyperparameters, we begin by generating synthetic data that aligns with our modeling assumptions, including constant training performance and a linear generalization gap, while introducing noise to degrade generalization performance by up to 10\%, sampled from a uniform distribution. We vary the GP hyperparameters, including noise standard deviation over the set $\{0.001, 0.01, 0.1, 1\}$, the number of restarts for the optimizer over $\{5, 6, \dots, 15\}$, and explore several kernel configurations on the synthetic data. We then select the hyperparameter configuration that maximizes the average predictive performance. Specifically, we choose a noise standard deviation of $\sigma = 0.001$ and perform $15$ random restarts of the hyperparameter optimizer to reduce the risk of convergence to poor local minima.
We use the same GP hyperparameter configuration across all experiments and benchmarks.
}

\jhdelete{In our study, we conducted hyperparameter tuning experiments for Gaussian Process (GP) regression to optimize its performance. Specifically, we varied the noise standard deviation over the set $\{0.001, 0.01, 0.1, 1\}$ and the number of restarts for the optimizer over the set $\{5, 6, 7, 8, 9, 10, 11, 12, 13, 14, 15\}$ with several kernel configurations. We set noise standard deviation as $0.1$, the number of restarts for the optimizer over the set as $9$, and the kernel as the combination of a constant kernel (C) with the bound from 0.001 to 1000 and a radial basis function kernel (RBF) with a length scale ranging from 0.01 to 100. The noise level was set to the square of the noise standard deviation ($0.1^2$). }

\subsubsection{\jhhedit{Accuracy of generalization gap assumption}}
\jhhedit{In Figure~\ref{fig:gen_gap_corr}, we report the Pearson correlation between the observed generalization gap and the estimated gap under our linear assumption (Assumption~\ref{assume:linear-generalization}). Each histogram shows how strongly the two measures align across various tasks in standard control (blue) and traffic (red) benchmarks. Many tasks cluster around moderate positive correlations (0.3–0.5), suggesting that a linear function of context similarity can reasonably capture the gap in most scenarios. However, certain tasks—such as Eco-driving—exhibit higher correlations (above 0.6), whereas others—such as HalfCheetah—are closer to 0, indicating that the assumption holds more effectively in some domains than in others.}
\begin{figure}[H]
    \centering
    \includegraphics[width=0.7\textwidth]{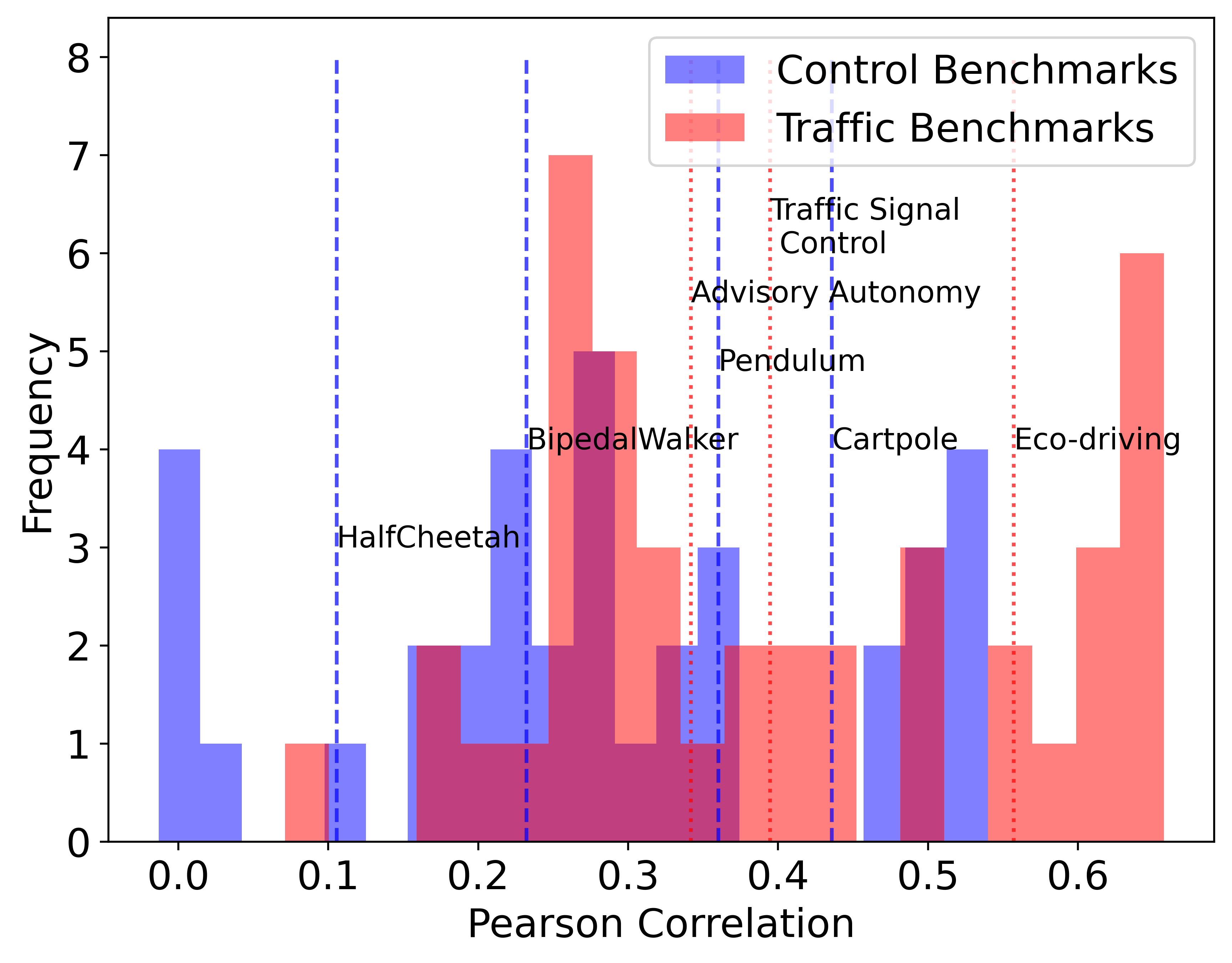}
    \caption{\jhhedit{\textbf{Accuracy of linear generalization gap assumption.} Pearson correlation analysis on the observed generalization gap and the estimated gap under our linear assumption.}}
    \label{fig:gen_gap_corr}
\end{figure}

\clearpage
\subsubsection{Results of table with standard deviation}\label{appsec:table-full}
\begin{table}[!ht]
  \caption{Comparative performance of different methods on context-variant traffic and control CMDPs (\jhhhedit{$K=15$})}
  \label{tab:performance}
  \Large
  \resizebox{0.999\textwidth}{!}{
  \begin{tabular}{ccccccccc}
    \toprule
    \multicolumn{2}{c}{\textbf{Benchmark (CMDP)}} & \multicolumn{2}{c}{\textbf{Baselines}} & \multicolumn{2}{c}{\textbf{Multi-policy Baselines}} & \multicolumn{1}{c}{\textbf{MBTL}} & \multicolumn{1}{c}{\textbf{Oracle}}\\
    \midrule
    \midrule
    \textbf{Domain} & \textbf{Context Variation} & \textbf{Independent} & \textbf{Multi-task} & \textbf{Random} & \textbf{Greedy} & \textbf{Ours} & \textbf{Sequential}\\
    \midrule

    \textbf{Traffic Signal} & Road Length & \begin{tabular}[c]{@{}c@{}}\textbf{0.9409}\\(0.0002)\end{tabular} & \begin{tabular}[c]{@{}c@{}}0.8242\\(0.0659)\end{tabular} & \begin{tabular}[c]{@{}c@{}}0.9366\\(0.0009)\end{tabular} & \begin{tabular}[c]{@{}c@{}}0.9349\\(0.0021)\end{tabular} & \begin{tabular}[c]{@{}c@{}}\textbf{0.9409}\\(0.0005)\end{tabular} & \begin{tabular}[c]{@{}c@{}}0.9432\\(0.0001)\end{tabular} \\
    \textbf{Traffic Signal} & Inflow & \begin{tabular}[c]{@{}c@{}}0.8646\\(0.0009)\end{tabular} & \begin{tabular}[c]{@{}c@{}}0.8319\\(0.0049)\end{tabular} & \begin{tabular}[c]{@{}c@{}}0.8699\\(0.0011)\end{tabular} & \begin{tabular}[c]{@{}c@{}}0.8682\\(0.0008)\end{tabular} & \begin{tabular}[c]{@{}c@{}}\textbf{0.8729}\\(0.0010)\end{tabular} & \begin{tabular}[c]{@{}c@{}}0.8773\\(0.0009)\end{tabular} \\
    \textbf{Traffic Signal} & Speed Limit & \begin{tabular}[c]{@{}c@{}}0.8857\\(0.0005)\end{tabular} & \begin{tabular}[c]{@{}c@{}}0.6083\\(0.0493)\end{tabular} & \begin{tabular}[c]{@{}c@{}}\textbf{0.8872}\\(0.0002)\end{tabular} & \begin{tabular}[c]{@{}c@{}}\textbf{0.8874}\\(0.0004)\end{tabular} & \begin{tabular}[c]{@{}c@{}}0.8866\\(0.0003)\end{tabular} & \begin{tabular}[c]{@{}c@{}}0.8877\\(0.0003)\end{tabular} \\
    \midrule
    \textbf{Eco-Driving} & Penetration Rate & \begin{tabular}[c]{@{}c@{}}0.5260\\(0.0087)\end{tabular} & \begin{tabular}[c]{@{}c@{}}0.1945\\(0.0070)\end{tabular} & \begin{tabular}[c]{@{}c@{}}0.6212\\(0.0041)\end{tabular} & \begin{tabular}[c]{@{}c@{}}0.5992\\(0.0007)\end{tabular} & \begin{tabular}[c]{@{}c@{}}\textbf{0.6519}\\(0.0301)\end{tabular} & \begin{tabular}[c]{@{}c@{}}0.6668\\(0.0046)\end{tabular} \\
    \textbf{Eco-Driving} & Inflow & \begin{tabular}[c]{@{}c@{}}0.4061\\(0.0094)\end{tabular} & \begin{tabular}[c]{@{}c@{}}0.2229\\(0.0012)\end{tabular} & \begin{tabular}[c]{@{}c@{}}0.5077\\(0.0114)\end{tabular} & \begin{tabular}[c]{@{}c@{}}\textbf{0.5299}\\(0.0456)\end{tabular} & \begin{tabular}[c]{@{}c@{}}\textbf{0.5356}\\(0.0125)\end{tabular} & \begin{tabular}[c]{@{}c@{}}0.5531\\(0.0095)\end{tabular} \\
    \textbf{Eco-Driving} & Green Phase & \begin{tabular}[c]{@{}c@{}}0.3850\\(0.0063)\end{tabular} & \begin{tabular}[c]{@{}c@{}}0.4228\\(0.0225)\end{tabular} & \begin{tabular}[c]{@{}c@{}}0.4724\\(0.0069)\end{tabular} & \begin{tabular}[c]{@{}c@{}}0.4678\\(0.0147)\end{tabular} & \begin{tabular}[c]{@{}c@{}}\textbf{0.4932}\\(0.0164)\end{tabular} & \begin{tabular}[c]{@{}c@{}}0.5058\\(0.0047)\end{tabular} \\
    \midrule
    \textbf{AA-Ring-Acc} & Hold Duration & \begin{tabular}[c]{@{}c@{}}0.8362\\(0.0048)\end{tabular} & \begin{tabular}[c]{@{}c@{}}\textbf{0.9219}\\(0.0381)\end{tabular} & \begin{tabular}[c]{@{}c@{}}\textbf{0.9307}\\(0.0118)\end{tabular} & \begin{tabular}[c]{@{}c@{}}0.9021\\(0.0154)\end{tabular} & \begin{tabular}[c]{@{}c@{}}\textbf{0.9329}\\(0.0250)\end{tabular} & \begin{tabular}[c]{@{}c@{}}0.9567\\(0.0116)\end{tabular} \\
    \midrule
    \textbf{AA-Ring-Vel} & Hold Duration & \begin{tabular}[c]{@{}c@{}}0.9589\\(0.0096)\end{tabular} & \begin{tabular}[c]{@{}c@{}}0.9688\\(0.0145)\end{tabular} & \begin{tabular}[c]{@{}c@{}}\textbf{0.9820}\\(0.0001)\end{tabular} & \begin{tabular}[c]{@{}c@{}}\textbf{0.9819}\\(0.0001)\end{tabular} & \begin{tabular}[c]{@{}c@{}}\textbf{0.9820}\\(0.0002)\end{tabular} & \begin{tabular}[c]{@{}c@{}}0.9822\\(0.0000)\end{tabular} \\
    \midrule
    \textbf{AA-Ramp-Acc} & Hold Duration & \begin{tabular}[c]{@{}c@{}}0.4276\\(0.0066)\end{tabular} & \begin{tabular}[c]{@{}c@{}}0.5374\\(0.1478)\end{tabular} & \begin{tabular}[c]{@{}c@{}}\textbf{0.6599}\\(0.0250)\end{tabular} & \begin{tabular}[c]{@{}c@{}}\textbf{0.6570}\\(0.0810)\end{tabular} & \begin{tabular}[c]{@{}c@{}}\textbf{0.6282}\\(0.0414)\end{tabular} & \begin{tabular}[c]{@{}c@{}}0.7120\\(0.0468)\end{tabular} \\
    \midrule
    \textbf{AA-Ramp-Vel} & Hold Duration & \begin{tabular}[c]{@{}c@{}}0.5473\\(0.0222)\end{tabular} & \begin{tabular}[c]{@{}c@{}}0.5257\\(0.0121)\end{tabular} & \begin{tabular}[c]{@{}c@{}}\textbf{0.7210}\\(0.0535)\end{tabular} & \begin{tabular}[c]{@{}c@{}}0.6461\\(0.0791)\end{tabular} & \begin{tabular}[c]{@{}c@{}}\textbf{0.7426}\\(0.0604)\end{tabular} & \begin{tabular}[c]{@{}c@{}}0.7691\\(0.0576)\end{tabular} \\
    \midrule
    \textbf{Pendulum} & Length & \begin{tabular}[c]{@{}c@{}}0.7383\\(0.0034)\end{tabular} & \begin{tabular}[c]{@{}c@{}}0.6830\\(0.0010)\end{tabular} & \begin{tabular}[c]{@{}c@{}}0.7607\\(0.0072)\end{tabular} & \begin{tabular}[c]{@{}c@{}}\textbf{0.7774}\\(0.0041)\end{tabular} & \begin{tabular}[c]{@{}c@{}}\textbf{0.7749}\\(0.0143)\end{tabular} & \begin{tabular}[c]{@{}c@{}}0.8073\\(0.0104)\end{tabular} \\
    \textbf{Pendulum} & Mass & \begin{tabular}[c]{@{}c@{}}0.6237\\(0.0023)\end{tabular} & \begin{tabular}[c]{@{}c@{}}0.5793\\(0.0051)\end{tabular} & \begin{tabular}[c]{@{}c@{}}0.6647\\(0.0065)\end{tabular} & \begin{tabular}[c]{@{}c@{}}\textbf{0.6887}\\(0.0116)\end{tabular} & \begin{tabular}[c]{@{}c@{}}\textbf{0.6933}\\(0.0346)\end{tabular} & \begin{tabular}[c]{@{}c@{}}0.7168\\(0.0107)\end{tabular} \\
    \textbf{Pendulum} & Timestep & \begin{tabular}[c]{@{}c@{}}0.8135\\(0.0103)\end{tabular} & \begin{tabular}[c]{@{}c@{}}0.7247\\(0.0597)\end{tabular} & \begin{tabular}[c]{@{}c@{}}\textbf{0.8331}\\(0.0084)\end{tabular} & \begin{tabular}[c]{@{}c@{}}\textbf{0.8497}\\(0.0322)\end{tabular} & \begin{tabular}[c]{@{}c@{}}\textbf{0.8310}\\(0.0238)\end{tabular} & \begin{tabular}[c]{@{}c@{}}0.8880\\(0.0199)\end{tabular} \\
    \midrule
    \textbf{Cartpole} & Mass of Cart & \begin{tabular}[c]{@{}c@{}}\textbf{0.9466}\\(0.0065)\end{tabular} & \begin{tabular}[c]{@{}c@{}}0.7153\\(0.2688)\end{tabular} & \begin{tabular}[c]{@{}c@{}}0.8961\\(0.0214)\end{tabular} & \begin{tabular}[c]{@{}c@{}}0.8299\\(0.0392)\end{tabular} & \begin{tabular}[c]{@{}c@{}}0.9154\\(0.0294)\end{tabular} & \begin{tabular}[c]{@{}c@{}}0.9998\\(0.0003)\end{tabular} \\
    \textbf{Cartpole} & Length of Pole & \begin{tabular}[c]{@{}c@{}}0.9110\\(0.0065)\end{tabular} & \begin{tabular}[c]{@{}c@{}}0.5441\\(0.1977)\end{tabular} & \begin{tabular}[c]{@{}c@{}}0.9497\\(0.0044)\end{tabular} & \begin{tabular}[c]{@{}c@{}}0.9424\\(0.0310)\end{tabular} & \begin{tabular}[c]{@{}c@{}}\textbf{0.9717}\\(0.0148)\end{tabular} & \begin{tabular}[c]{@{}c@{}}0.9995\\(0.0007)\end{tabular} \\
    \textbf{Cartpole} & Mass of Pole & \begin{tabular}[c]{@{}c@{}}0.9560\\(0.0128)\end{tabular} & \begin{tabular}[c]{@{}c@{}}0.6073\\(0.1161)\end{tabular} & \begin{tabular}[c]{@{}c@{}}0.9870\\(0.0050)\end{tabular} & \begin{tabular}[c]{@{}c@{}}\textbf{0.9916}\\(0.0030)\end{tabular} & \begin{tabular}[c]{@{}c@{}}\textbf{0.9941}\\(0.0083)\end{tabular} & \begin{tabular}[c]{@{}c@{}}1.0000\\(0.0000)\end{tabular} \\
    \midrule
    \textbf{BipedalWalker} & Gravity & \begin{tabular}[c]{@{}c@{}}0.9281\\(0.0034)\end{tabular} & \begin{tabular}[c]{@{}c@{}}0.7898\\(0.1136)\end{tabular} & \begin{tabular}[c]{@{}c@{}}0.9654\\(0.0004)\end{tabular} & \begin{tabular}[c]{@{}c@{}}\textbf{0.9656}\\(0.0021)\end{tabular} & \begin{tabular}[c]{@{}c@{}}\textbf{0.9669}\\(0.0011)\end{tabular} & \begin{tabular}[c]{@{}c@{}}0.9721\\(0.0008)\end{tabular} \\
    \textbf{BipedalWalker} & Friction & \begin{tabular}[c]{@{}c@{}}0.9317\\(0.0074)\end{tabular} & \begin{tabular}[c]{@{}c@{}}0.9051\\(0.0900)\end{tabular} & \begin{tabular}[c]{@{}c@{}}\textbf{0.9739}\\(0.0003)\end{tabular} & \begin{tabular}[c]{@{}c@{}}\textbf{0.9738}\\(0.0013)\end{tabular} & \begin{tabular}[c]{@{}c@{}}0.9714\\(0.0024)\end{tabular} & \begin{tabular}[c]{@{}c@{}}0.9779\\(0.0012)\end{tabular} \\
    \textbf{BipedalWalker} & Scale & \begin{tabular}[c]{@{}c@{}}0.8694\\(0.0087)\end{tabular} & \begin{tabular}[c]{@{}c@{}}0.7452\\(0.1148)\end{tabular} & \begin{tabular}[c]{@{}c@{}}\textbf{0.8910}\\(0.0079)\end{tabular} & \begin{tabular}[c]{@{}c@{}}\textbf{0.8990}\\(0.0135)\end{tabular} & \begin{tabular}[c]{@{}c@{}}\textbf{0.8864}\\(0.0159)\end{tabular} & \begin{tabular}[c]{@{}c@{}}0.9155\\(0.0023)\end{tabular} \\
    \midrule
    \textbf{HalfCheetah} & Gravity & \begin{tabular}[c]{@{}c@{}}0.6679\\(0.0162)\end{tabular} & \begin{tabular}[c]{@{}c@{}}0.6292\\(0.0317)\end{tabular} & \begin{tabular}[c]{@{}c@{}}0.9086\\(0.0078)\end{tabular} & \begin{tabular}[c]{@{}c@{}}0.9089\\(0.0235)\end{tabular} & \begin{tabular}[c]{@{}c@{}}\textbf{0.9308}\\(0.0203)\end{tabular} & \begin{tabular}[c]{@{}c@{}}0.9544\\(0.0221)\end{tabular} \\
    \textbf{HalfCheetah} & Friction & \begin{tabular}[c]{@{}c@{}}0.6693\\(0.0203)\end{tabular} & \begin{tabular}[c]{@{}c@{}}0.7242\\(0.1293)\end{tabular} & \begin{tabular}[c]{@{}c@{}}\textbf{0.9314}\\(0.0175)\end{tabular} & \begin{tabular}[c]{@{}c@{}}\textbf{0.9184}\\(0.0184)\end{tabular} & \begin{tabular}[c]{@{}c@{}}\textbf{0.9404}\\(0.0460)\end{tabular} & \begin{tabular}[c]{@{}c@{}}0.9663\\(0.0276)\end{tabular} \\
    \textbf{HalfCheetah} & Stiffness & \begin{tabular}[c]{@{}c@{}}0.6561\\(0.0101)\end{tabular} & \begin{tabular}[c]{@{}c@{}}0.7007\\(0.1379)\end{tabular} & \begin{tabular}[c]{@{}c@{}}\textbf{0.9191}\\(0.0100)\end{tabular} & \begin{tabular}[c]{@{}c@{}}\textbf{0.9295}\\(0.0169)\end{tabular} & \begin{tabular}[c]{@{}c@{}}\textbf{0.9214}\\(0.0164)\end{tabular} & \begin{tabular}[c]{@{}c@{}}0.9677\\(0.0287)\end{tabular} \\
    \midrule
    \end{tabular}}
    \scriptsize{* \textit{Note}: Bold values represent the highest value(s) within the statistically significant range for each task, excluding the oracle. Standard deviation across multiple runs in the parenthesis.\\}
    \scriptsize{\textdaggerdbl AA: Advisory autonomy tasks, Ring: Single lane ring, Ramp: Highway ramp, Acc: Acceleration guidance, Vel: Speed guidance.}
\end{table}

\clearpage

\subsubsection{Detailed sample complexity comparison results}\label{appsec:table-sample-efficiency}
\jhedit{Table~\ref{tab:sample-complexity} presents a comparison of sample complexity required for MBTL to perform as good as the best generalization performance of baselines (independent training and multi-task training) across various tasks in the \jhedit{CMDP}. Each row lists a different domain, the specific context variation applied (e.g., changes in physical properties or environmental parameters), and two key values: $k^*$ and $N$, where $k^*$ represents the number of models required by MBTL to reach a performance level comparable to the baseline. This value is shown as a range (e.g., $[3, 5, 3]$), indicating results from three random seeds.
$N$ represents the total number of contexts. The value $\frac{N}{k^*}$ helps represent the sample efficiency of MBTL.}

\begin{table}[!h]
  \caption{\jhhhedit{Sample complexity comparison to baseline performance on \jhedit{CMDP} tasks}}
  \label{tab:sample-complexity}
  \normalsize
  \centering
  \begin{tabular}{@{}cccccc@{}}
    \toprule
    \textbf{Task} & \textbf{Variation} & $k^*$ & $N$ & $k^*$ average & $N/k^*$ average \\ \midrule
    \textbf{Pendulum} & \textbf{Length} & [3, 5, 3] & 100 & 3.67 & 27.27 \\
    \textbf{Pendulum} & \textbf{Mass} & [4, 4, 3] & 100 & 3.67 & 27.27 \\
    \textbf{Pendulum} & \textbf{Timestep} & [8, 9, 10] & 100 & 9 & 11.11 \\ \midrule
    \textbf{Cartpole} & \textbf{Mass of Cart} & [18, 14, 22] & 100 & 18 & 5.56 \\
    \textbf{Cartpole} & \textbf{Length of Pole} & [13, 12, 10] & 100 & 11.67 & 8.57 \\
    \textbf{Cartpole} & \textbf{Mass of Pole} & [5, 5, 4] & 100 & 4.67 & 21.43 \\ \midrule
    \textbf{BipedalWalker} & \textbf{Gravity} & [3, 3, 9] & 100 & 5 & 20 \\
    \textbf{BipedalWalker} & \textbf{Friction} & [2, 4, 2] & 100 & 2.67 & 37.5 \\
    \textbf{BipedalWalker} & \textbf{Scale} & [3, 13, 1] & 100 & 5.67 & 17.65 \\ \midrule
    \textbf{HalfCheetah} & \textbf{Gravity} & [2, 2, 2] & 100 & 2 & 50 \\
    \textbf{HalfCheetah} & \textbf{Friction} & [1, 3, 3] & 100 & 2.33 & 42.86 \\
    \textbf{HalfCheetah} & \textbf{Stiffness} & [1, 3, 3] & 100 & 2.33 & 42.86 \\ \midrule
    \textbf{AA-Ring-Acc} & \textbf{Hold Duration} & [3, 3, 4] & 40 & 3.33 & 12 \\
    \textbf{AA-Ring-Vel} & \textbf{Hold Duration} & [1, 3, 5] & 40 & 3 & 13.33 \\
    \textbf{AA-Ramp-Acc} & \textbf{Hold Duration} & [32, 3, 4] & 40 & 13 & 3.08 \\
    \textbf{AA-Ramp-Vel} & \textbf{Hold Duration} & [3, 2, 2] & 40 & 2.33 & 17.14 \\ \midrule
    \textbf{Traffic Signal} & \textbf{Road Length} & [19, 15, 10] & 50 & 14.67 & 3.41 \\
    \textbf{Traffic Signal} & \textbf{Inflow} & [3, 2, 2] & 50 & 2.33 & 21.43 \\
    \textbf{Traffic Signal} & \textbf{Speed Limit} & [13, 8, 5] & 50 & 8.67 & 5.77 \\ \midrule
    \textbf{Eco-Driving} & \textbf{Penetration Rate} & [2, 2, 1] & 50 & 1.67 & 30 \\
    \textbf{Eco-Driving} & \textbf{Inflow} & [3, 1, 1] & 50 & 1.67 & 30 \\
    \textbf{Eco-Driving} & \textbf{Green Phase} & [2, 2, 3] & 50 & 2.33 & 21.43 \\ \midrule
    \bottomrule
    \end{tabular}
\end{table}

\subsubsection{Details about traffic signal control \jhhedit{benchmark}}\label{appsec:detail-traffic-signal}

Most traffic lights operate on fixed schedules, but adaptive traffic signal control using DRL can optimize the traffic flow using real-time information on the traffic \cite{chu_multi-agent_2020, li_traffic_2016}, though challenges persist in generalizing across various intersection configurations \cite{jayawardana_impact_2022}.

Figure~\ref{fig:traffic-signal-network} showcases the layout of traffic networks used in a traffic signal control task with several lanes and a signalized intersection in the middle.
The state space represents the presence of vehicles in discretized lane cells along the incoming roads. Actions determine which lane gets the green phase of the traffic signal, and rewards are based on changes in cumulative stopped time, the period when speed is zero. 
The global objective is to minimize the average waiting times at the intersection. Different configurations of intersections (e.g., road length, inflow, speed limits) are modeled to represent varying real-world conditions.
\jhhedit{We vary factors such as road length, inflow rate, and speed limits from 0.1 to 5 times; by default, the road length is 750 meters, the flow rate is 500 vehicles per hour, and the speed limit is 13.89 m/s.}
\begin{figure}[H]
    \centering
    \includegraphics[width=0.8\textwidth]{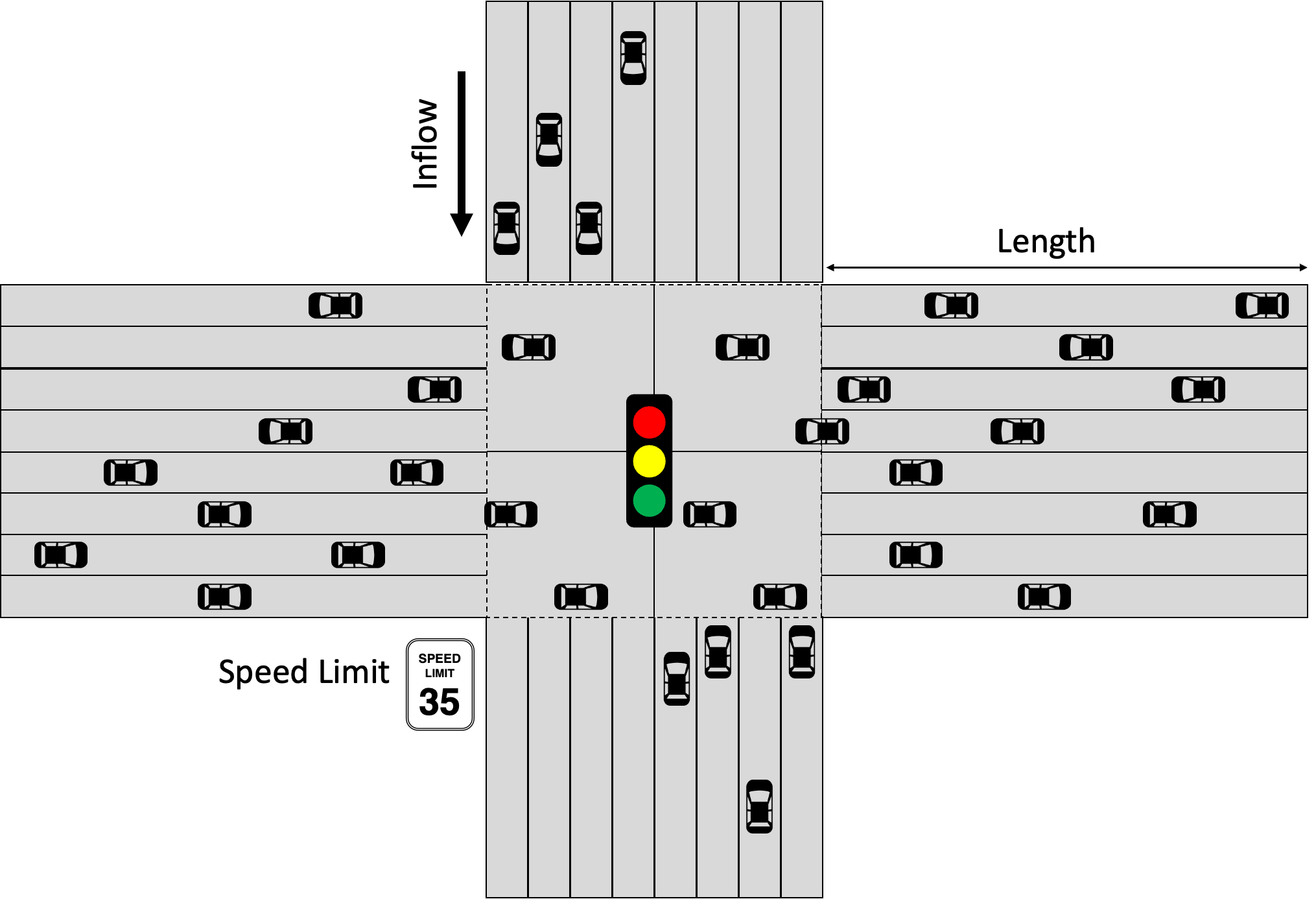}
    \caption{Illustration of the traffic networks in traffic signal control task.}
    \label{fig:traffic-signal-network}
\end{figure}

\paragraph{Training configuration}
We used the microscopic traffic simulation called Simulation of Urban MObility (SUMO) \cite{SUMO2018} v.1.16.0. \jhhhedit{For reinforcement learning, the Deep Q-Network (DQN) algorithm was employed with a neural network architecture comprising four hidden layers, each with 400 units. The learning rate was set to 0.001, and training was conducted over 800 epochs. The discount factor ($\gamma$) was configured at 0.75 to balance short-term and long-term rewards effectively.}
All experiments are done on a distributed computing cluster equipped with 48 Intel Xeon Platinum 8260 CPUs.
\paragraph{License} \jhhedit{Traffic signal control benchmark falls under MIT License.}

\paragraph{\jhhedit{Potential of multi-policy training and zero-shot transfer}}
\begin{figure}[H]
    \centering
    \includegraphics[width=0.99\textwidth]{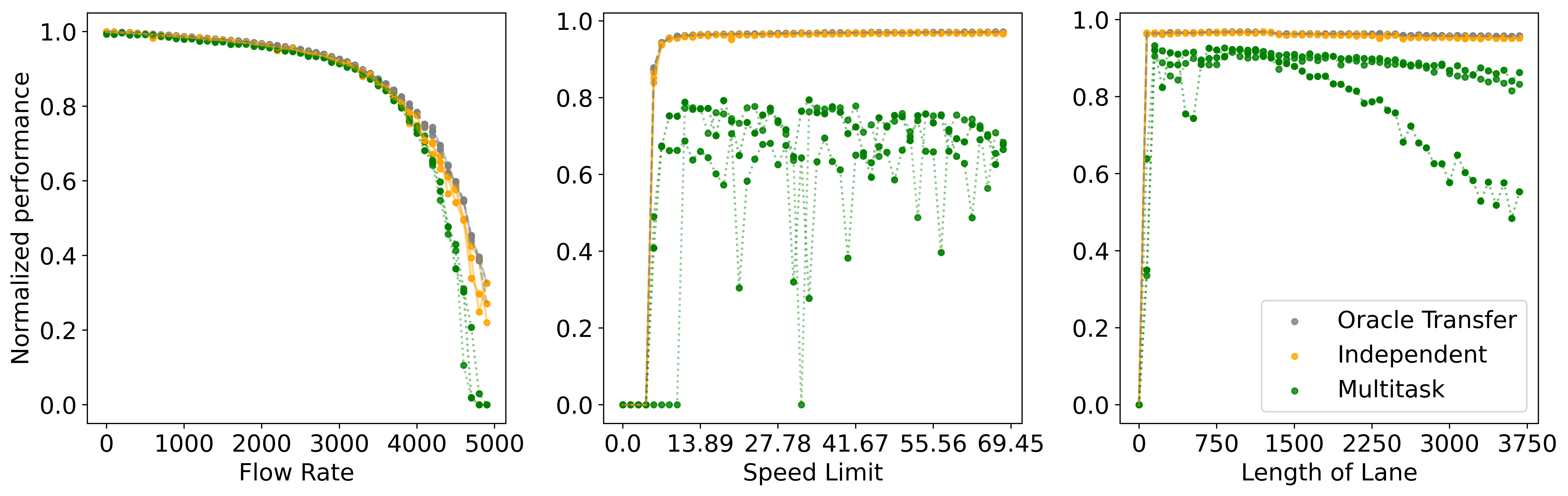}
    \caption{\jhhedit{Normalized performance of three DRL-based methods—Oracle Transfer (gray), independent training (orange), and multi-task training (green)—under different traffic-signal benchmarks: flow rate (left), speed limit (middle), and lane length (right). While independent and multi-task training approaches exhibit higher variance and reduced asymptotic performance, Oracle Transfer benefits from zero-shot transfer with full information, demonstrating more stable and generally higher performance.}}
    \label{fig:gap-traffic-signal}
\end{figure}
\jhhedit{Figure~\ref{fig:gap-traffic-signal} shows how each approach adapts to variations in flow rate, speed limit, and lane length for a traffic signal control benchmark. The y-axis shows normalized performance, with higher values indicating better control policies. Oracle Transfer consistently achieves superior performance across these different settings, owing to its ability to leverage full task information in a zero-shot transfer manner. By contrast, independent and multi-task training exhibit more pronounced performance drops and greater instability when faced with shifts in problem parameters, underscoring the challenges of generalizing policies in traditional DRL approaches.}

\paragraph{Transferability heatmap}

Figure~\ref{fig:heatmap-traffic-signal} presents heatmaps of transferability for different traffic signal control tasks, each varying a specific aspect: inflow, speed limit, and road length. The heatmaps display the effectiveness of strategy transfer from each source task (vertical axis) to each target task (horizontal axis). In terms of inflow variation, transferability drops when transferring from tasks with lower vehicle inflow to those with higher inflow. In speed limit variation, the transferability shows uniform effectiveness, suggesting less sensitivity to these changes. In road length variation, distinct blocks of high transferability indicate that different road lengths may require significantly tailored strategies.

\begin{figure}[H]
    \centering
    \begin{subfigure}[b]{0.32\textwidth}
        \includegraphics[width=\textwidth]{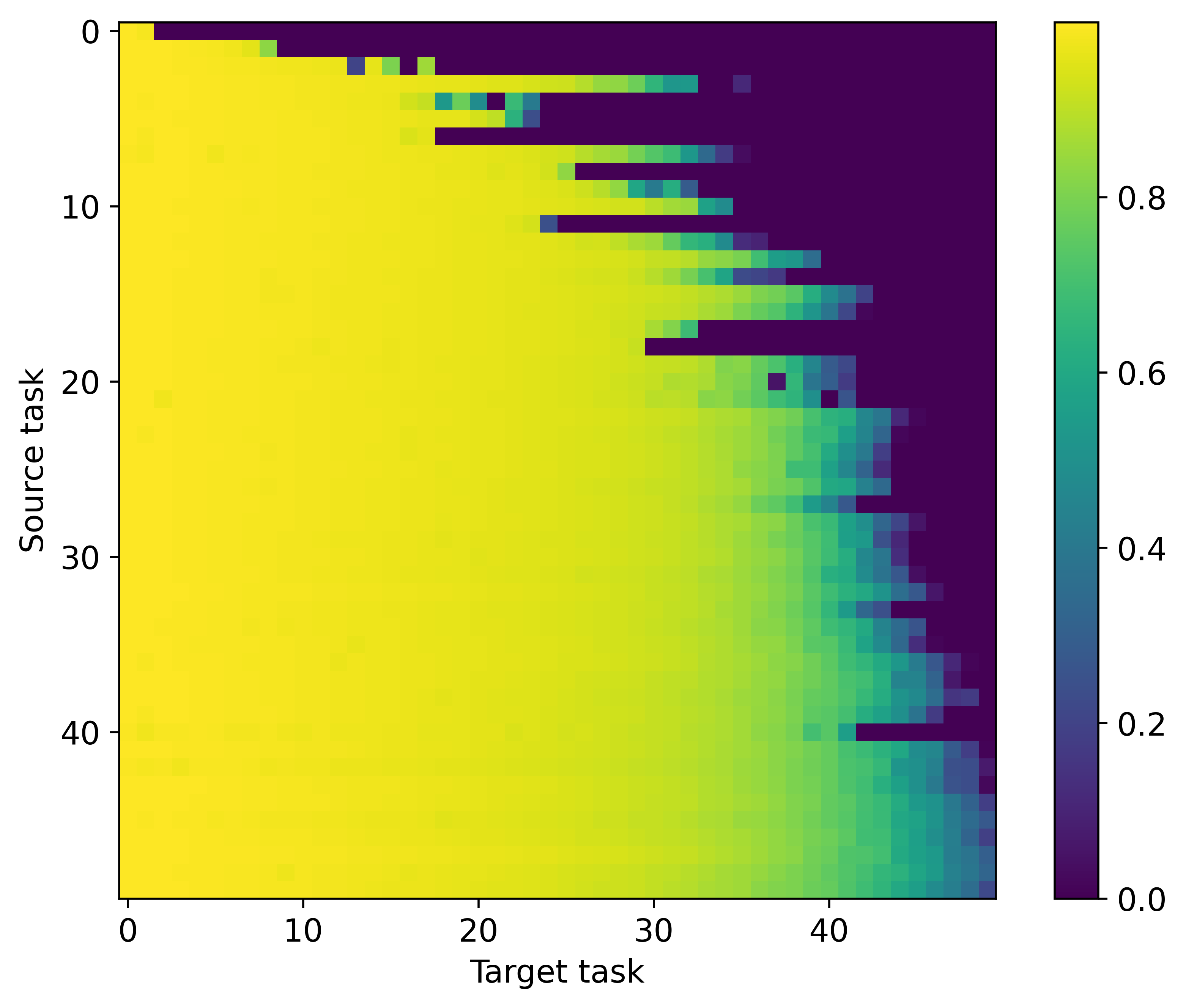}
        \caption{Inflow variation}
        \label{fig:heatmap-traffic-signal-flow}
    \end{subfigure}
    \hfill 
    \begin{subfigure}[b]{0.32\textwidth}
        \includegraphics[width=\textwidth]{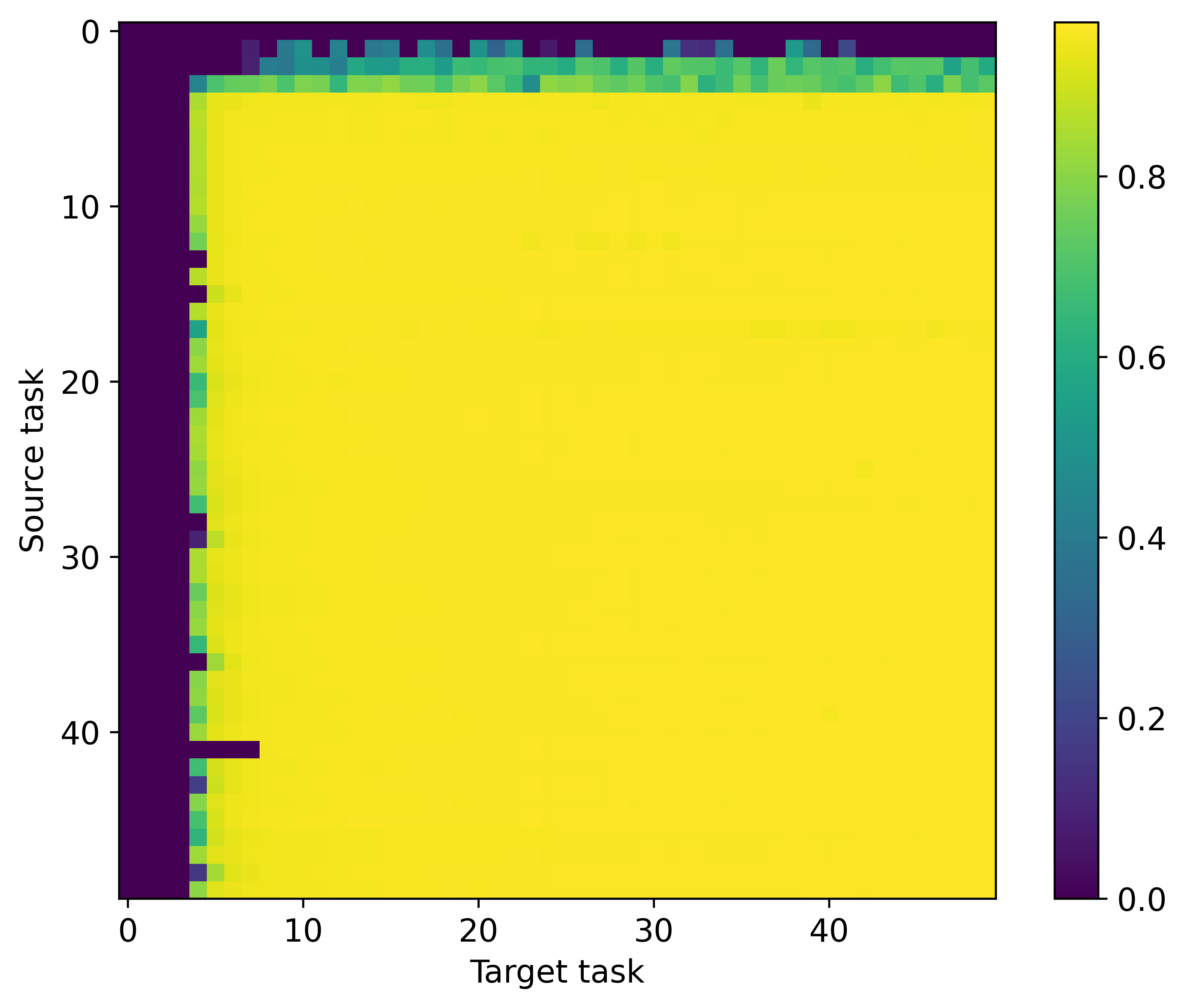}
        \caption{Speed limit variation}
        \label{fig:heatmap-traffic-signal-speed}
    \end{subfigure}
    \hfill 
    \begin{subfigure}[b]{0.32\textwidth}
        \includegraphics[width=\textwidth]{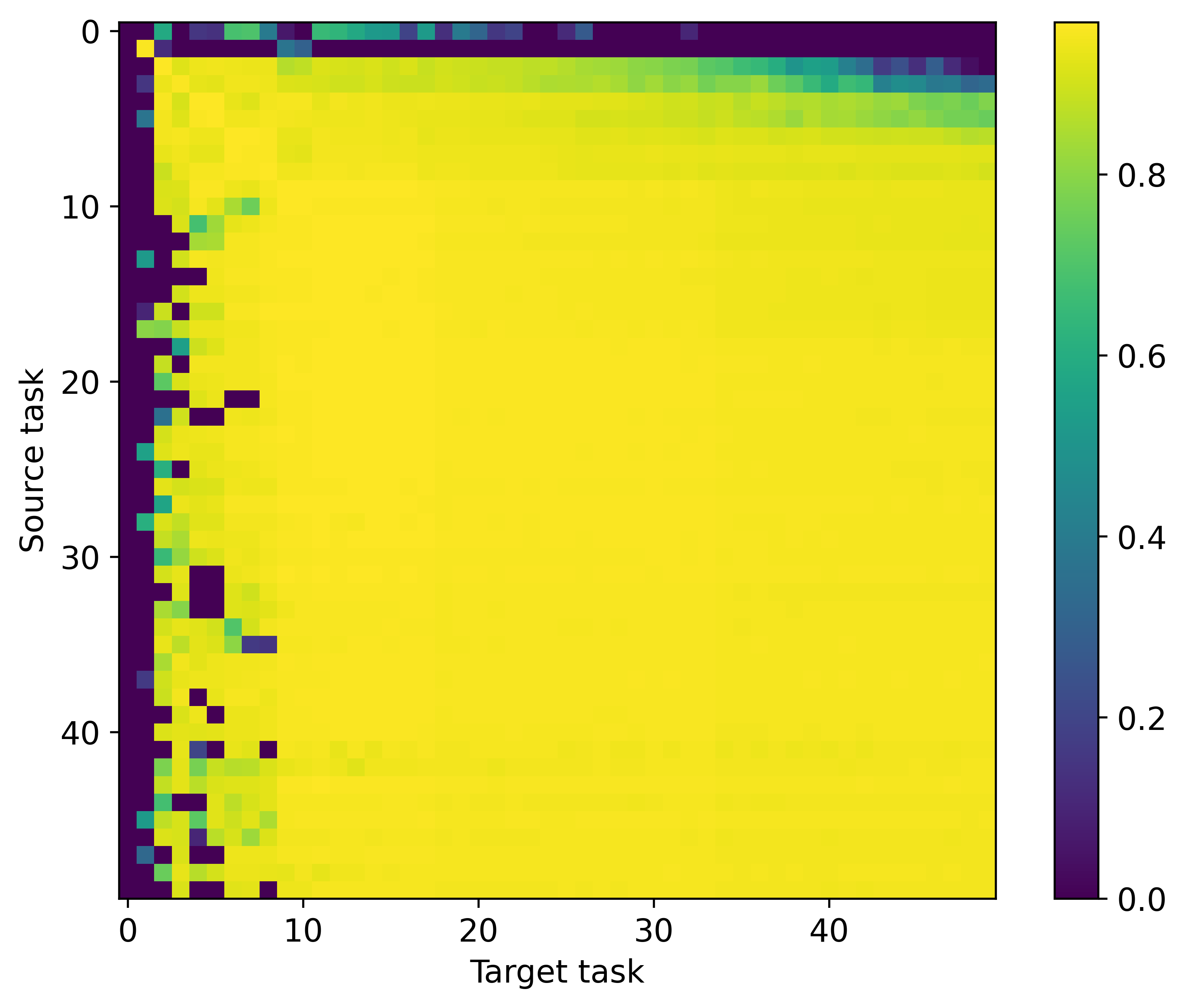}
        \caption{Road length variation}
        \label{fig:heatmap-traffic-signal-length}
    \end{subfigure}
    \caption{Examples of transferability heatmap for traffic signal control.}
    \label{fig:heatmap-traffic-signal}
\end{figure}

\paragraph{Results}

Figure~\ref{fig:result-traffic-signal} illustrates the normalized generalized performance across various traffic control tasks: inflow, speed limit, and road length. The plots display how different strategies adapt with increasing transfer steps:

\begin{itemize}
  \item \textbf{Inflow:} Performance improves as the number of transfer steps increases, with MBTL strategy consistently achieving the highest scores, demonstrating their effectiveness in adapting to changes in inflow conditions.
  \item \textbf{Speed Limit:} Here, performance levels are relatively stable across all strategies except for the multi-task training.
  \item \textbf{Road Length:} There is a general upward trend in performance for all strategies, particularly for MBTL, indicating robustness in adapting to different road lengths.
\end{itemize}

This data suggests that MBTL and Oracle are particularly effective across varying conditions, maintaining higher levels of performance adaptability.

\begin{figure}[H]
    \centering
    \includegraphics[width=0.999\textwidth]{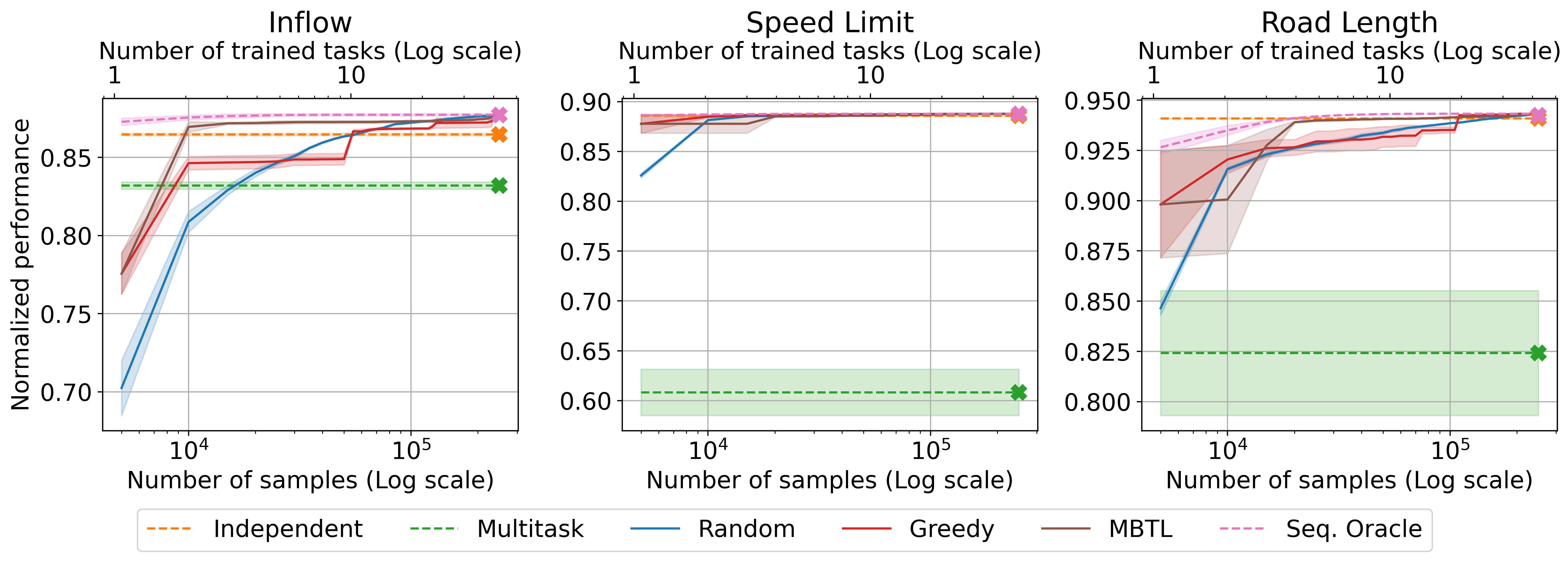}
    \caption{Comparison of normalized generalized performance of all target tasks: Traffic signal control.}
    \label{fig:result-traffic-signal}
\end{figure}

\clearpage
\subsubsection{Details about eco-driving control \jhhedit{benchmark}}\label{appsec:detail-eco-driving}
Given the significant portion of greenhouse gas emissions in the United States coming from the transportation sector \cite{us_epa_sources_2023}, eco-driving behaviors are critical for climate change mitigation. Deep reinforcement learning-based eco-driving strategies have been developed \cite{guo_hybrid_2021, wegener_automated_2021,jayawardana_learning_2022,jayawardana_mitigating_2024} but still have some issues of difficulties in generalization. We also extend to various intersection configurations with different traffic inflow rates, penetration rates of eco-driving systems, and durations of green phases at static traffic signals to optimize vehicle behaviors for reduced emissions.

Figure~\ref{fig:eco-driving-network} illustrates the traffic road network used in the eco-driving control task. The road network is depicted as traffic flowing vertically and horizontally, crossing the static phase traffic signal. There are both guided and default vehicles in the system. 
The state space includes the speed and position of the ego vehicle, the leading vehicle, and the following vehicles, supplemented by the current traffic signal phase and relevant context features, including lane length and green phase durations. The action space specifically focuses on the ego vehicle's acceleration control. The reward mechanism is designed to optimize the driving strategy by balancing the average speed of the vehicles against penalties for emissions, thereby promoting eco-friendly driving behaviors within the traffic system.
\jhhhedit{The traffic simulation used a default inflow of 400 vehicles per hour, a CAV penetration rate of 0.2, and a green phase time of 35 seconds to simulate realistic urban traffic conditions, with parameters varied from 0.1 to 5 times for CMDP.}
\begin{figure}[H]
    \centering
    \includegraphics[width=0.8\textwidth]{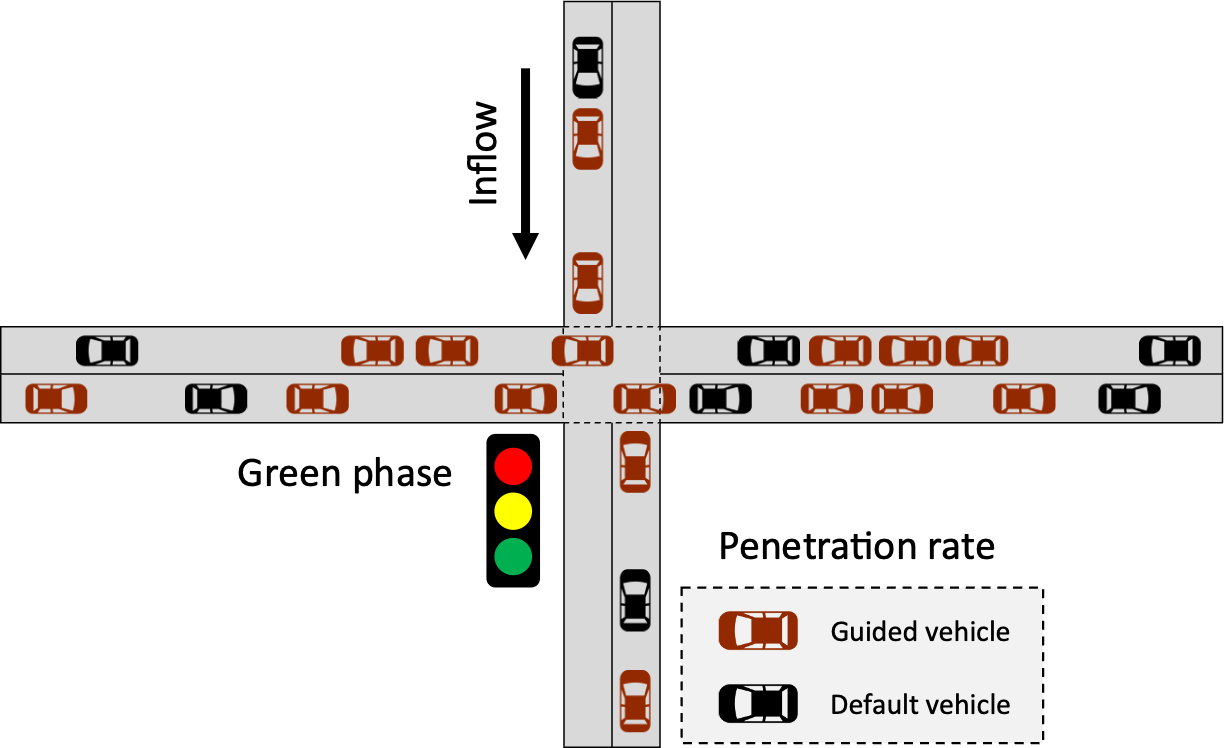}
    \caption{Illustration of the traffic networks in eco-driving control task.}
    \label{fig:eco-driving-network}
\end{figure}

\paragraph{Training configuration}
We also used the microscopic traffic simulation called Simulation of Urban MObility (SUMO) \cite{SUMO2018} v.1.16.0 and PPO for RL algorithm \cite{schulman_proximal_2017}. \jhhhedit{The Proximal Policy Optimization (PPO) algorithm was configured with a policy clipping parameter of 0.03 and an initial KL divergence coefficient of 0.1, targeting a KL divergence of 0.02 during training. The value function was clipped with a value clip of 3, and the value loss coefficient was set to 1. Entropy regularization was applied with a coefficient of 0.005 to encourage exploration. Gradient updates were performed over 10 steps per epoch, with training spanning 5000 epochs and 10 episodes per epoch. Each episode had a horizon of 1500 steps, and mini-batches of size 40 were used. The Adam optimizer was employed with a learning rate of 0.0001, weight decay of 0.97, and betas set to (0.9, 0.999). A neural network with four hidden layers, each 256 units wide, used the tanh activation function and orthogonal weight initialization. The simulation warmup steps were set to 50, and the simulation step size was 0.5 seconds. The discount factor ($\gamma$) was 0.99.
For detailed experimental details and RL hyperparameter configurations, please refer to \cite{jayawardana_mitigating_2024}.}

\paragraph{License} \jhhedit{Eco-driving benchmark falls under MIT License \cite{jayawardana_intersectionzoo_2024}.}

\paragraph{\jhhedit{Potential of multi-policy training and zero-shot transfer}}
\jhhedit{Figure~\ref{fig:gap} shows how each RL training paradigm adapts to variations in green phase time, penetration rate, and inflow rate in the eco-driving control benchmark. Oracle Transfer remains the strongest method across all configurations, benefiting from zero-shot transfer. independent training shows unstable performance across different task variations, performance, while multi-task training lags behind. Overall, the trends highlight the advantage of leveraging zero-shot transfer in traffic CMDPs.}

\paragraph{Transferability heatmap}
Figure~\ref{fig:heatmap-eco-driving} displays heatmaps for the eco-driving control task, with each heatmap varying an aspect such as green phase, inflow, and penetration rate. These visuals illustrate the transferability of strategies from source tasks (vertical axis) to target tasks (horizontal axis), highlighting the impact of traffic light phases, vehicle inflow, and the proportion of guided vehicles on strategy effectiveness. Notably, longer green phases correlate with improved performance and transferability. For inflow variations, reduced inflow typically yields better outcomes. However, variations in the penetration rate of guided vehicles show minimal impact on performance differences.

\begin{figure}[!h]
    \centering
    \begin{subfigure}[b]{0.32\textwidth}
        \includegraphics[width=\textwidth]{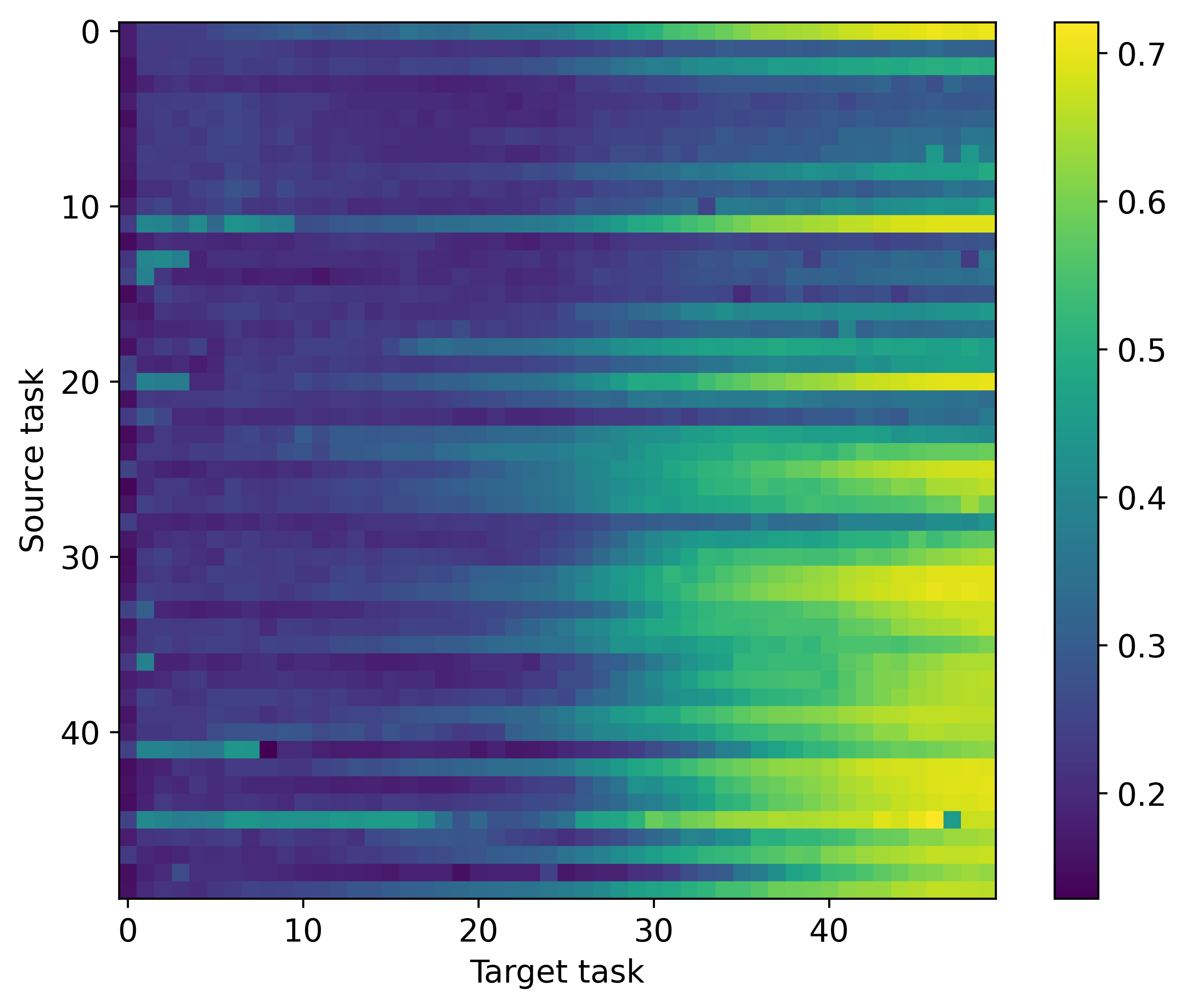}
        \caption{Green phase variation}
        \label{fig:heatmap-eco-driving-green}
    \end{subfigure}
    \hfill 
    \begin{subfigure}[b]{0.32\textwidth}
        \includegraphics[width=\textwidth]{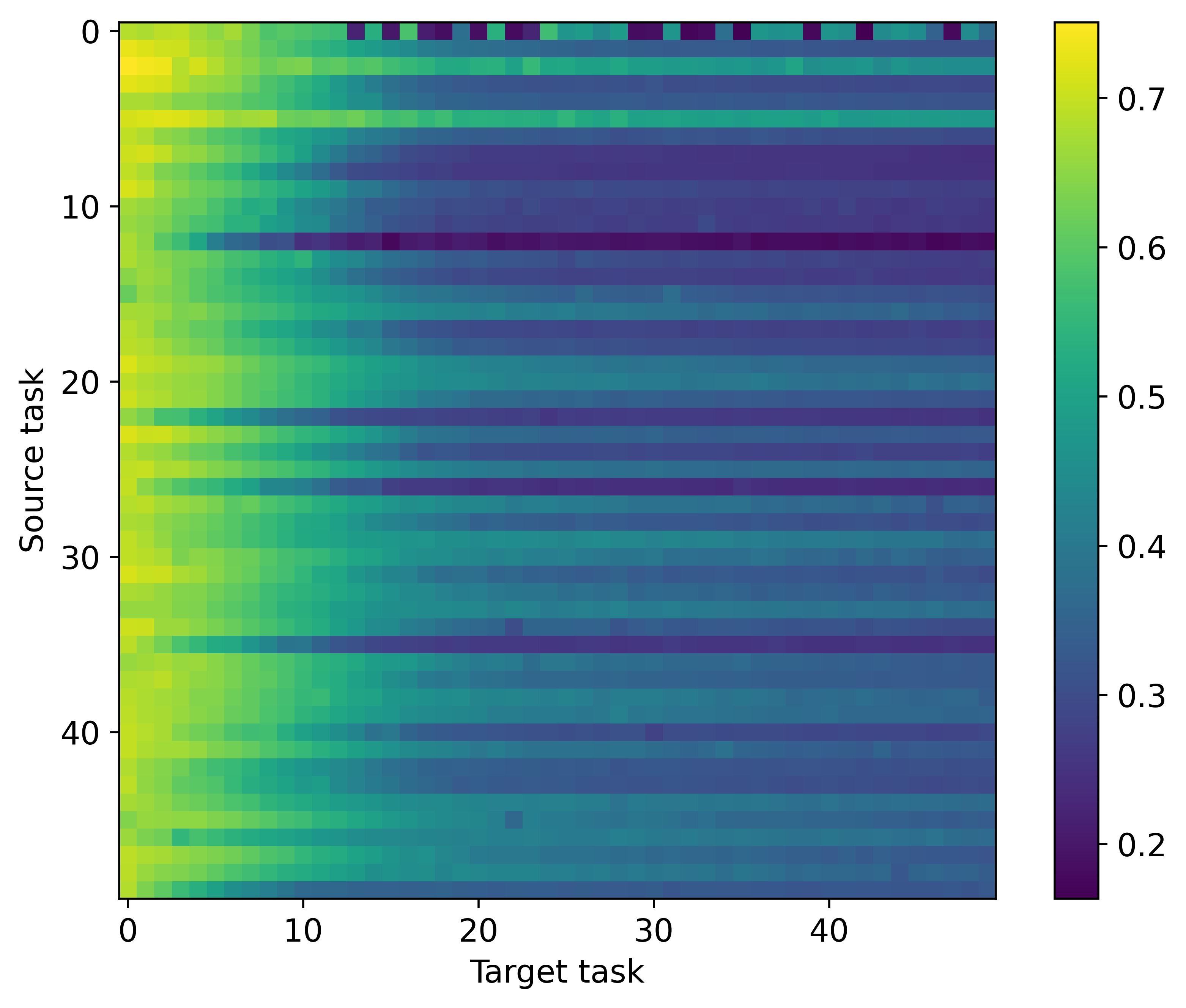}
        \caption{Inflow variation}
        \label{fig:heatmap-eco-driving-inflow}
    \end{subfigure}
    \hfill 
    \begin{subfigure}[b]{0.32\textwidth}
        \includegraphics[width=\textwidth]{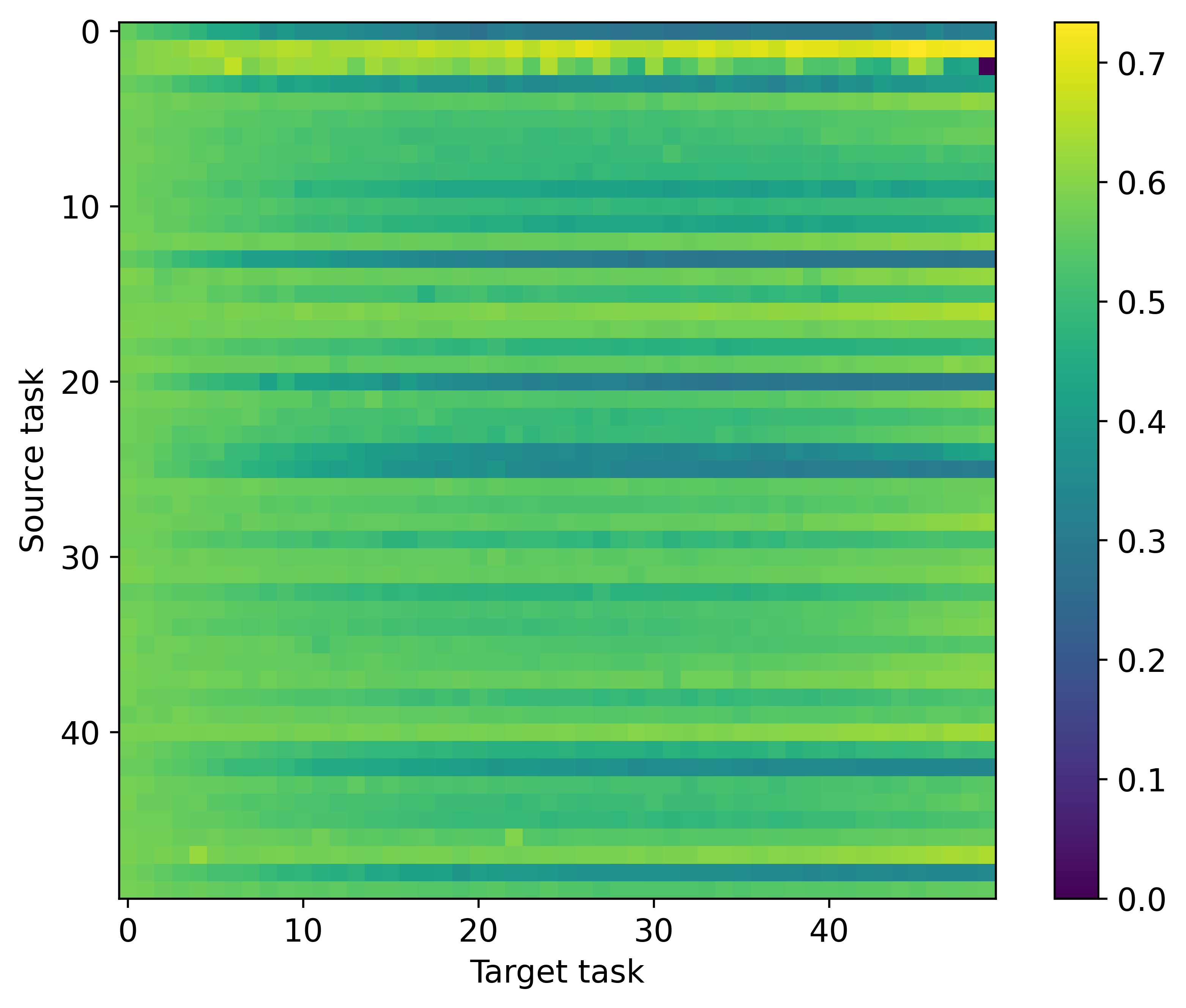}
        \caption{Penetration rate variation}
        \label{fig:heatmap-eco-driving-penrate}
    \end{subfigure}
    \caption{Examples of transferability heatmap for eco-driving control.}
    \label{fig:heatmap-eco-driving}
\end{figure}

\paragraph{Results}

Figure~\ref{fig:result-eco-driving} illustrates the normalized generalized performance across variants of eco-driving control tasks, specifically looking at variations in green phase time, inflow, and penetration rate. The graphs depict performance enhancement over transfer steps for different strategies. Notably, MBTL consistently demonstrates superior performance across all variations, indicating robust adaptability to changing task parameters.

\begin{figure}[!h]
    \centering
    \includegraphics[width=0.999\textwidth]{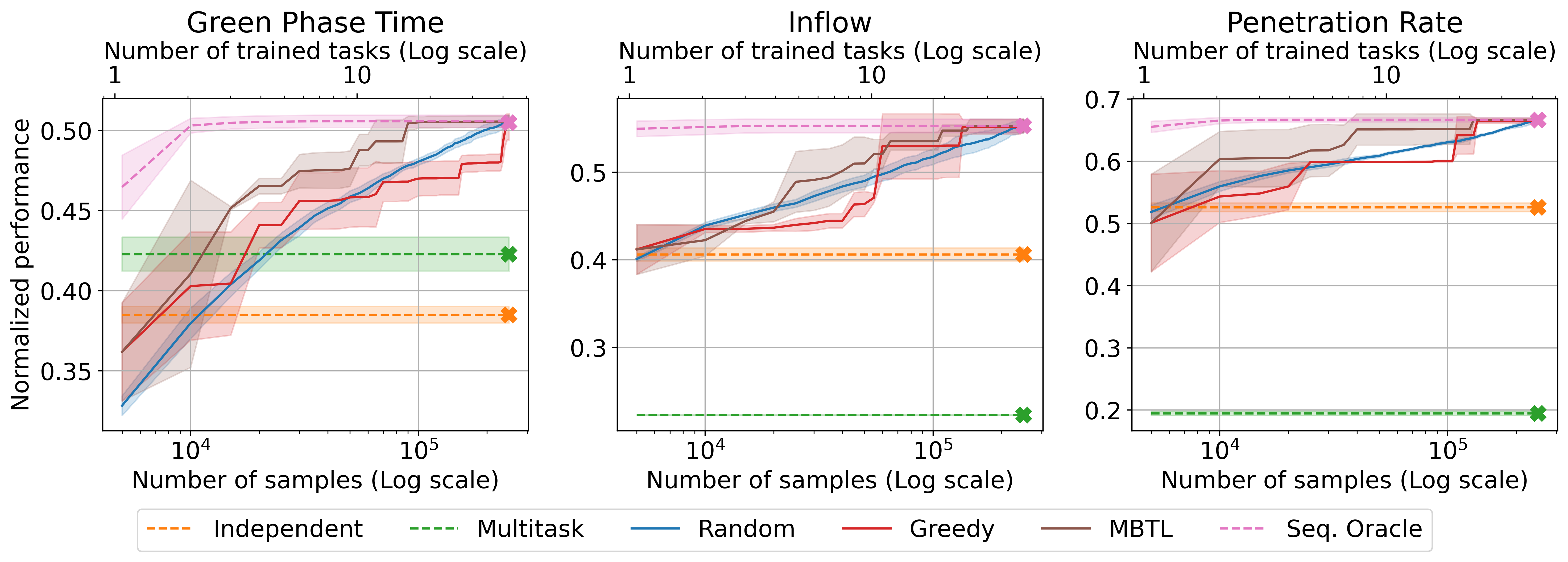}
    \caption{Comparison of normalized generalized performance of all target tasks: Eco-driving control.}
    \label{fig:result-eco-driving}
\end{figure}

\subsubsection{Details about advisory autonomy \jhhedit{benchmark}}\label{appsec:detail-advisory-autonomy}

Advisory autonomy involves a real-time speed advisory system that enables human drivers to emulate the system-level performance of autonomous vehicles in mixed autonomy systems \cite{sridhar_piecewise_2021, cho_temporal_2023, hasan_cooperative_2024}. Instead of direct and instantaneous control, human drivers receive periodic guidance, which varies based on road type and guidance strategy. Here, we consider the different frequencies of this periodic guidance as contextual MDPs since the zero-order hold action affects the transition function.

Figure~\ref{fig:advisory-network} illustrates two distinct traffic network configurations used in the advisory autonomy task: a single-lane ring and a highway ramp. The single-lane ring features 22 vehicles circulating the ring, with only one being actively controlled, presenting a relatively controlled environment for testing vehicle guidance systems. The highway ramp scenario introduces a more complex dynamic, where vehicles not only travel along the highway but also merge from ramps, creating potential stop-and-go traffic patterns that challenge the adaptability of autonomous guidance systems. \jhhhedit{The road network consists of a pre-merge distance of 400 m, a merge distance of 100 m, and a post-merge distance of 30 m. Traffic inflow rates were set to 2000 vehicles per hour on the highway and 300 vehicles per hour on the ramp.}

\paragraph{Problem Definition} 
In a single-lane ring scenario, the state space includes the speeds of the ego and leading vehicles, along with the headway. Vehicle dynamics incorporated acceleration and deceleration limits of 0.5 m/s$^2$. For highway ramp scenarios, additional states cover the relative positions and speeds of adjacent vehicles. Actions vary by guidance type: for acceleration guidance, the action space is continuous, ranging from $-1$ to $1$; for speed guidance, it has ten discrete actions compared to the speed limit. Rewards are based on system throughput or average speed of all vehicles in the system.

\paragraph{Context Variations} 
We explore different durations of coarse-grained guidance holds to test various levels of human compatibility, adjusting the model based on observed driver behaviors and system performance.

\paragraph{\jhhhedit{Training configuration}}
\jhhhedit{Advisory autonomy experiments utilized Trust Region Policy Optimization (TRPO) \cite{schulman_trust_2015}, with a discount factor ($\gamma$) of 0.999, a learning rate of $10^{-3}$, and the Adam optimizer configured with weight decay of 0.97 and betas (0.9, 0.999). Policies were modeled with a four-layer neural network, each with 256 units and tanh activation, using orthogonal weight initialization. The KL divergence constraint was set to 0.02, with an initial KL coefficient of 0.1 dynamically adjusted during training. Rewards were normalized and centered, and regularization penalties were applied to ensure stable and robust policy optimization.}

\paragraph{License} \jhhedit{Advisory autonomy benchmark falls under MIT License \cite{sridhar_piecewise_2021}.}

\begin{figure}[!ht]
    \centering
    \includegraphics[width=0.8\textwidth]{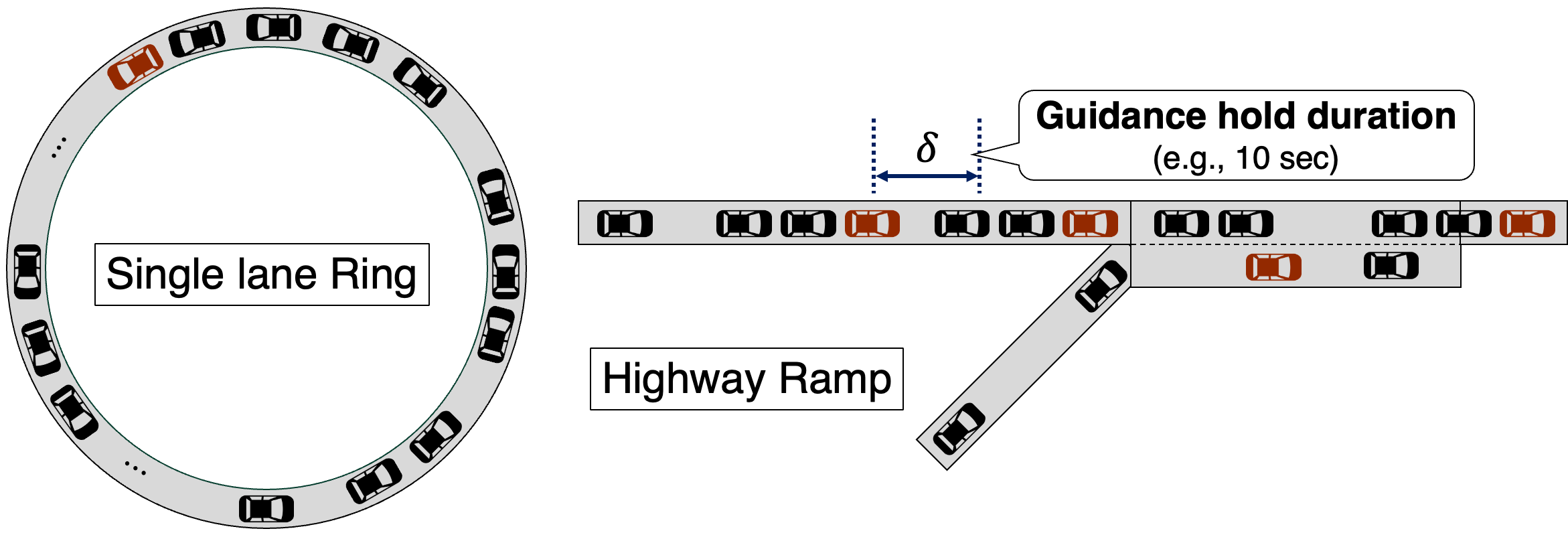}
    \caption{Illustration of the traffic networks in advisory autonomy task.}
    \label{fig:advisory-network}
\end{figure}

\paragraph{\jhhedit{Potential of multi-policy training and zero-shot transfer}}
\begin{figure}[H]
    \centering
    \includegraphics[width=0.99\textwidth]{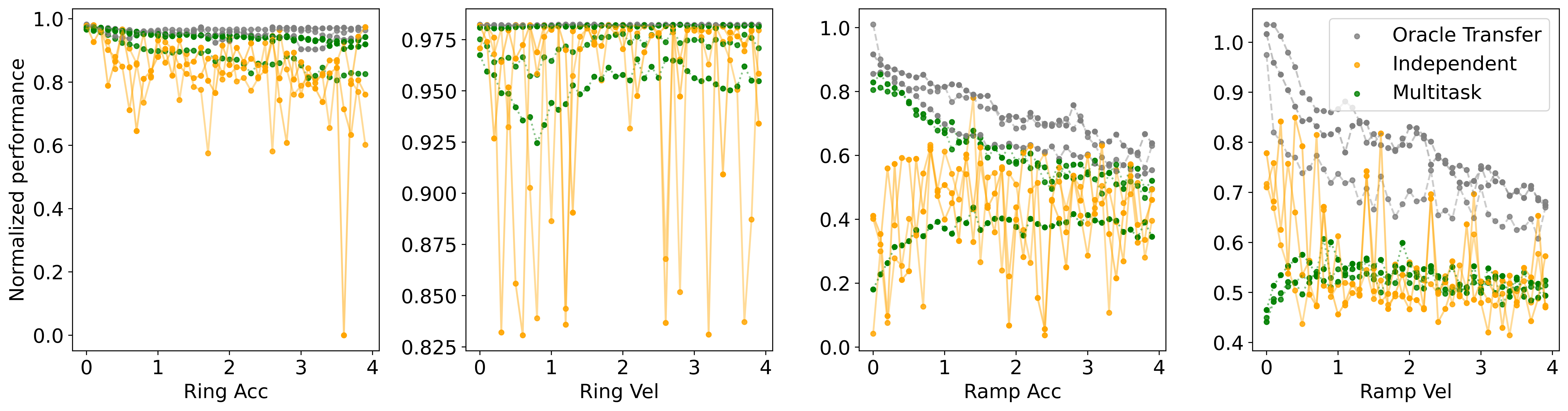}
    \caption{\jhhedit{Normalized performance of Oracle Transfer, independent training, and multi-task training under Advisory Autonomy benchmark with human compatibility task variations.}}
    \label{fig:gap-advisory}
\end{figure}
\jhhedit{Figure~\ref{fig:gap-advisory} shows that ring-road networks tend to yield higher performance and smaller gaps compared to highway ramp scenarios. In addition, independent training exhibits greater performance drop and variability due to training instability. Oracle Transfer retains clear potential improvements over other baselines.}

\paragraph{Transferability heatmap}
Figure~\ref{fig:heatmap-advisory-autonomy} showcases heatmaps of transferability for advisory autonomy tasks, each varying in specific aspects: acceleration guidance and speed guidance across a single lane ring and a highway ramp. These heatmaps demonstrate the effectiveness of strategy transfer from each source task (vertical axis) to each target task (horizontal axis), capturing how variations in task conditions influence adaptability. For acceleration guidance in a ring setup (a), transferability is generally higher among tasks with similar acceleration demands. In contrast, speed guidance on a ramp (d) reveals more variability in transferability, potentially due to the complexity of speed adjustments in ramp scenarios.

\begin{figure}[!h]
    \centering
    \begin{subfigure}[t]{0.24\textwidth}
        \includegraphics[width=\textwidth]{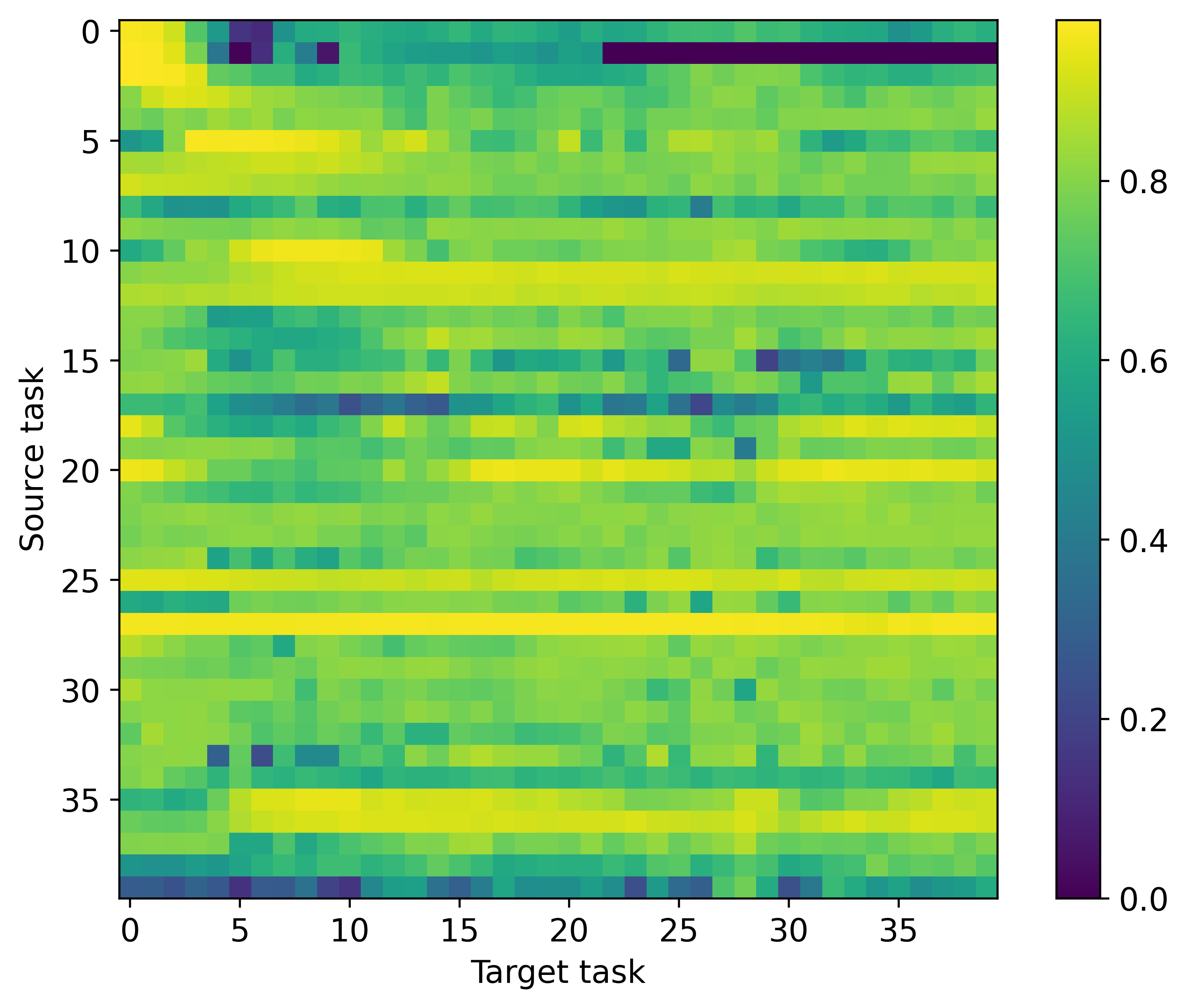}
        \caption{Ring with acceleration guide}
        \label{fig:heatmap-aa-ring-acc}
    \end{subfigure}
    \hfill 
    \begin{subfigure}[t]{0.24\textwidth}
        \includegraphics[width=\textwidth]{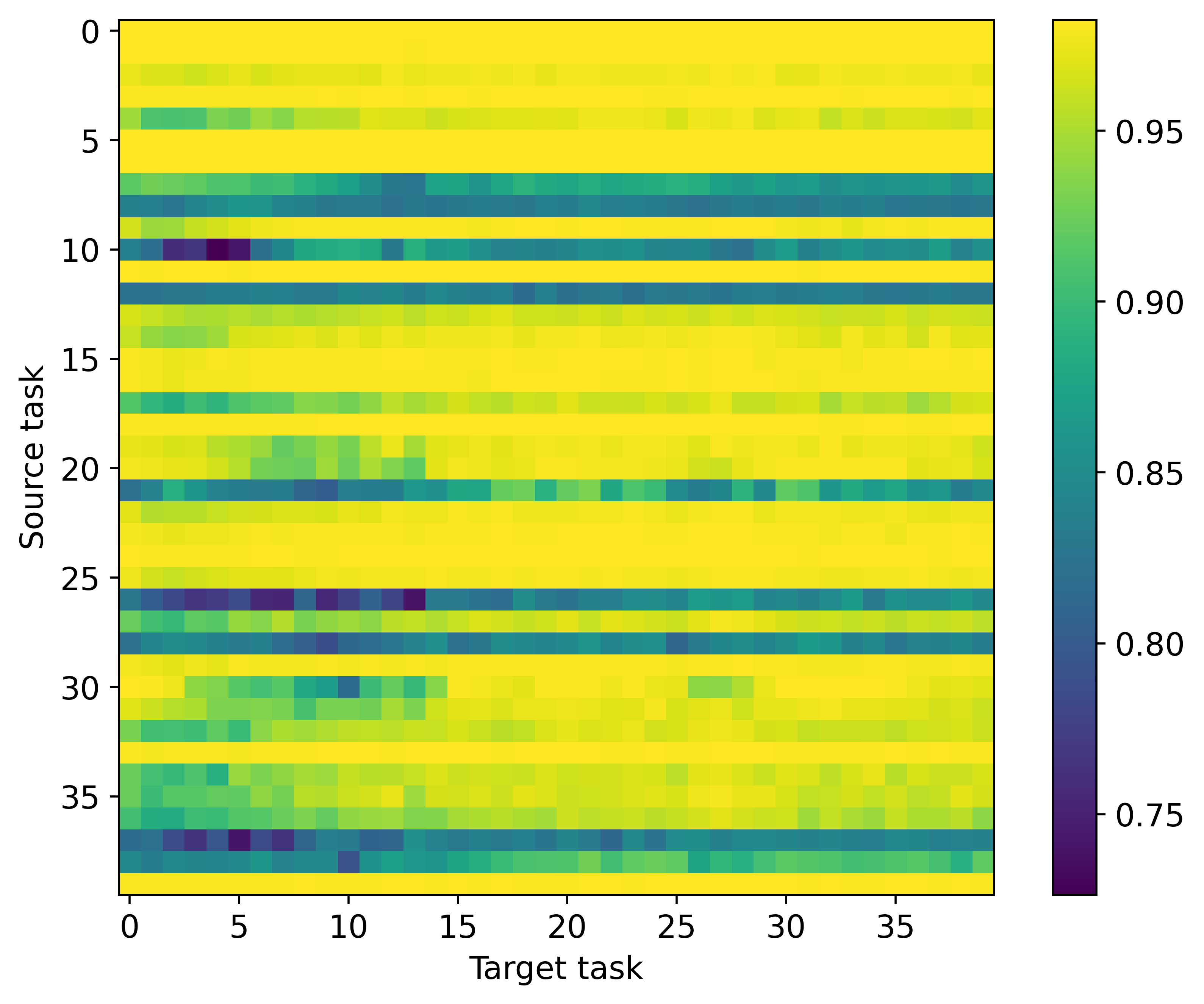}
        \caption{Ring with speed guide}
        \label{fig:heatmap-aa-ring-vel}
    \end{subfigure}
    \hfill 
    \begin{subfigure}[t]{0.24\textwidth}
        \includegraphics[width=\textwidth]{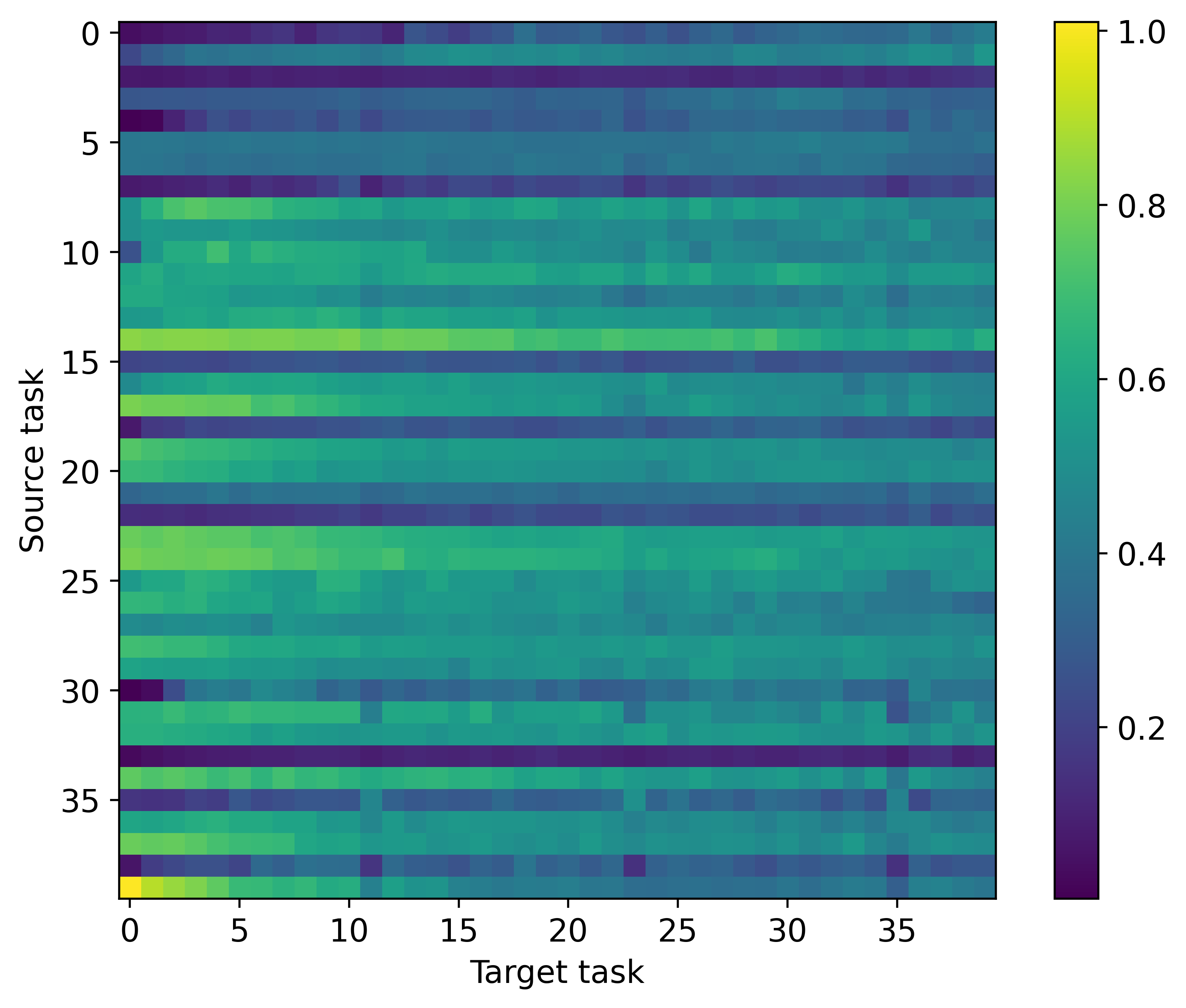}
        \caption{Ramp with acceleration guide}
        \label{fig:heatmap-aa-ramp-acc}
    \end{subfigure}
    \hfill 
    \begin{subfigure}[t]{0.24\textwidth}
        \includegraphics[width=\textwidth]{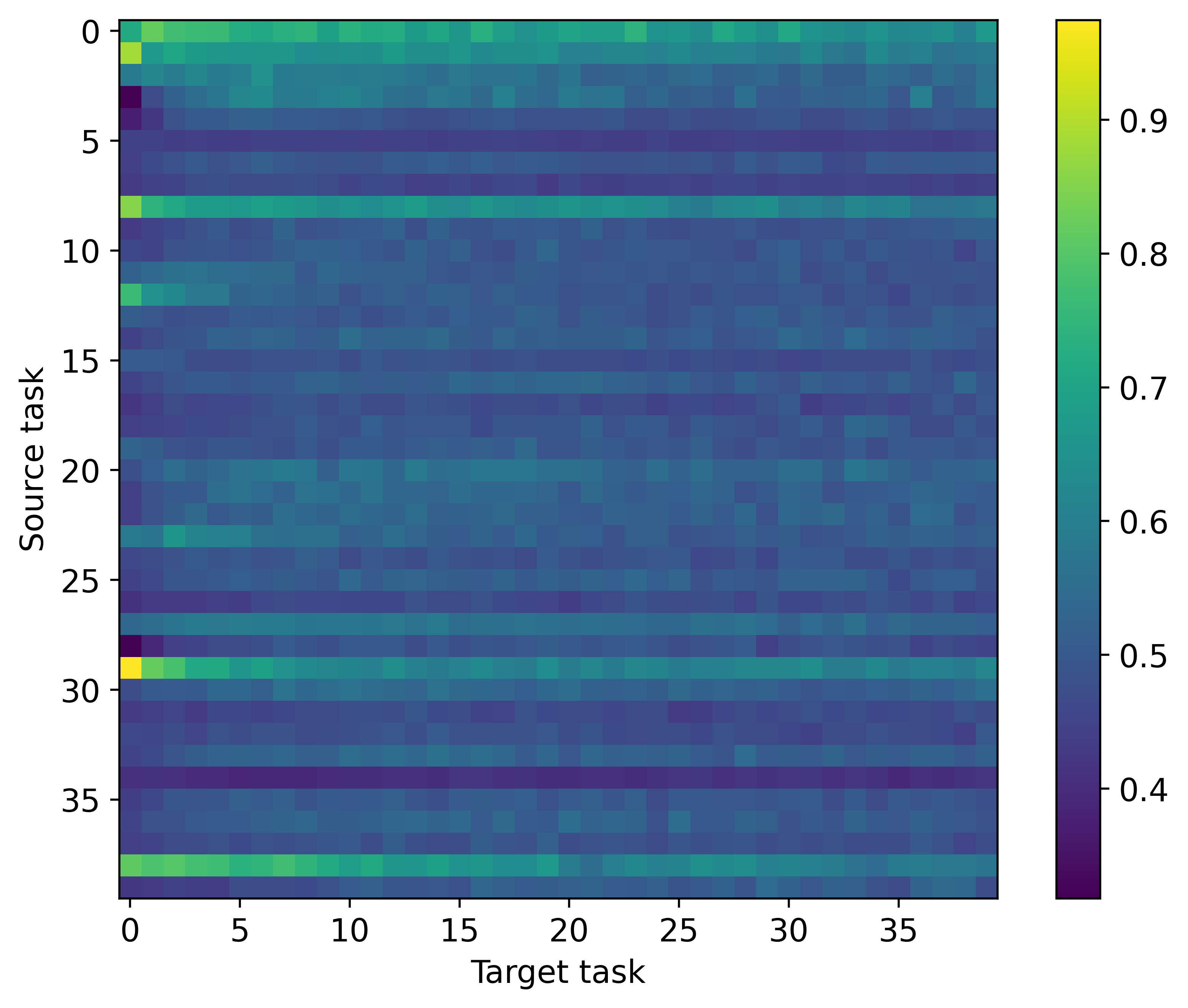}
        \caption{Ramp with speed guide}
        \label{fig:heatmap-aa-ramp-vel}
    \end{subfigure}
    \caption{Examples of transferability heatmap for advisory autonomy.}
    \label{fig:heatmap-advisory-autonomy}
\end{figure}

\paragraph{Results}

Figure~\ref{fig:result-advisory-autonomy} illustrates the comparison of normalized generalized performance for advisory autonomy tasks, specifically acceleration and speed guidance in a ring and acceleration guidance on a ramp. The graphs demonstrate that MBTL consistently exhibits higher performance across all tasks. Particularly, acceleration guidance in both ring and ramp scenarios shows significant performance improvements over transfer steps, with MBTL closely matching in some instances.

\begin{figure}[!h]
    \centering
    \includegraphics[width=0.999\textwidth]{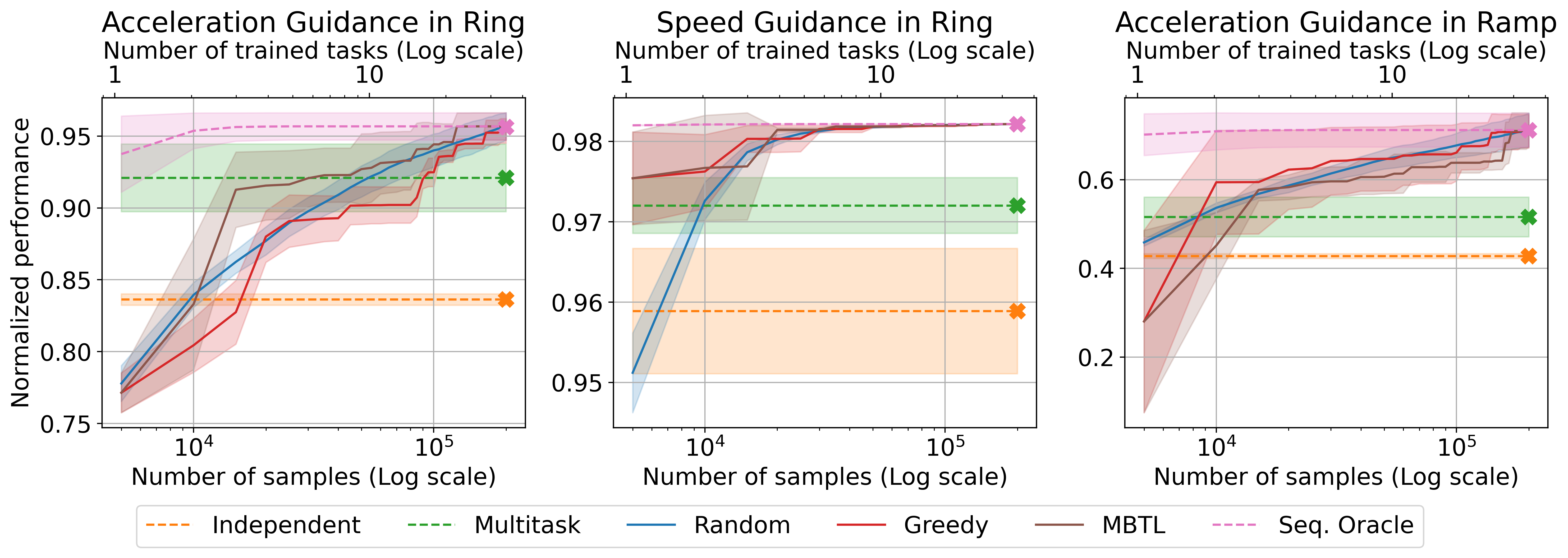}
    \caption{Comparison of normalized generalized performance of all target tasks: Advisory autonomy.}
    \label{fig:result-advisory-autonomy}
\end{figure}

\subsubsection{Details of control \jhhedit{benchmarks}}\label{appsec:detail-control}

For this experimental phase, we selected context-extended versions of standard RL environments from the CARL benchmark library, including Cartpole, Pendulum, BipedalWalker, and Halfcheetah. These environments were chosen to rigorously test the robustness and adaptability of our MBTL algorithm under varied conditions that mirror the complexity encountered in real-world scenarios.

\textbf{Context Variations:} In the Cartpole tasks, we explored \jhedit{CMDP}s with varying cart masses, pole lengths, and pole masses. For the Pendulum, the experiments involved adjusting the timestep duration, pendulum length, and pendulum mass. The BipedalWalker was tested under different settings of friction, gravity, and scale. Similarly, in the Halfcheetah tasks, we manipulated parameters such as friction, gravity, and stiffness to simulate different physical conditions. These variations critically influence the dynamics and physics of the environments, thereby presenting unique challenges that test the algorithm’s capacity to generalize from previous learning experiences without the need for extensive retraining. The range of context variations was established by scaling the default values specified in the CARL framework from 0.1 to 10 times, enabling a comprehensive examination of each model’s performance under drastically different conditions.

\paragraph{\jhhhedit{Implementation Details.}} \jhhhedit{We employed the \texttt{PPO} algorithm from \texttt{stable\_baselines3} (v1.5.0) \cite{stable-baselines3} with its default hyperparameters, including a learning rate of $3\times10^{-4}$, \texttt{n\_steps}$= 2048$, batch size $64$, discount factor $\gamma = 0.99$, GAE parameter $\lambda = 0.95$, clipping parameter $0.2$, entropy coefficient $0$, and a value function loss coefficient of $0.5$.}

\textbf{License:} CARL falls under the Apache License 2.0 as is permitted by all work that we use \cite{benjamins_contextualize_2023}.

\subsubsection{Details about Cartpole \jhhedit{benchmark}}\label{appsec:result-cartpole}

\paragraph{\jhhhedit{Environment Details.}} \jhhhedit{We utilized the CARL benchmark library’s \emph{default} environment parameters for the Cartpole task, training for five million total timesteps. Specifically, Cartpole used the default length of a pole of $0.5$, the mass of the cart of $1.0$, and the mass of a pole of $0.1$.}

\paragraph{\jhhedit{Potential of multi-policy training and zero-shot transfer}}
\jhhedit{Cartpole may be considered a simpler benchmark than traffic benchmarks, yet independent and multi-task training methods still face notable difficulty when faced with context variations. Figure~\ref{fig:gap-cartpole} shows that Oracle Transfer performs at the highest performance across problem variations, while independent training or multi-task training shows a larger variance in performance.}
\begin{figure}[H]
    \centering
    \includegraphics[width=0.99\textwidth]{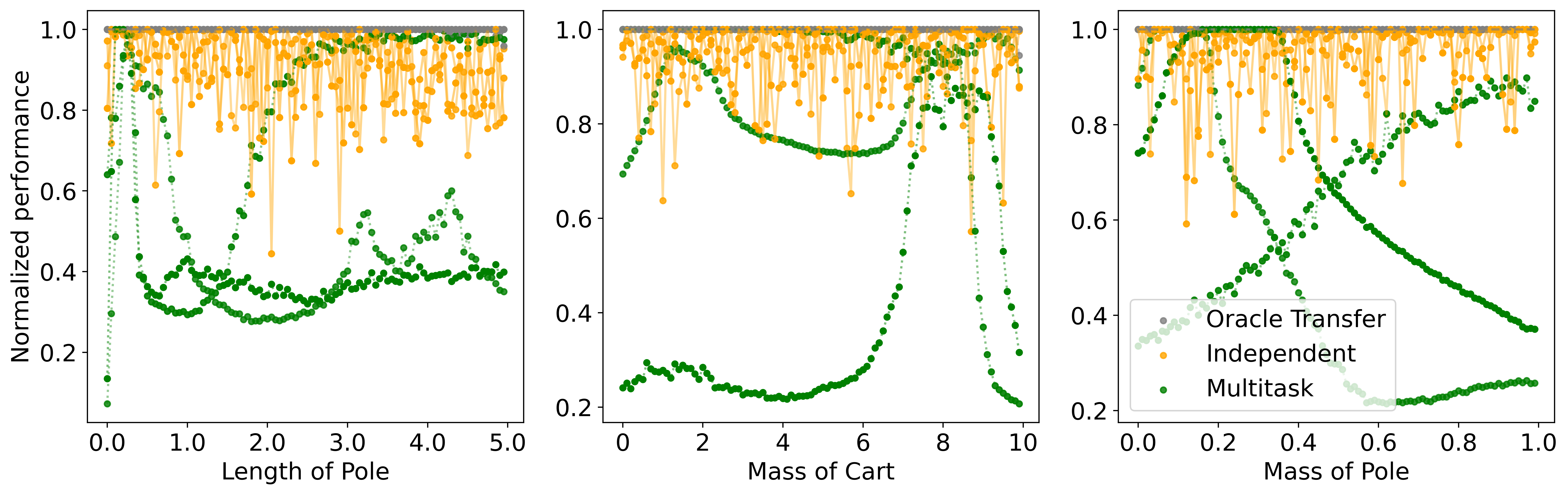}
    \caption{\jhhedit{Normalized performance of Oracle Transfer, independent training, and multi-task training in Cartpole benchmarks.}}
    \label{fig:gap-cartpole}
\end{figure}

\paragraph{Transferability heatmap}
Figure~\ref{fig:heatmap-cartpole} presents transferability heatmaps for the Cartpole task with variations in three physical properties: mass of the cart, length of the pole, and mass of the pole. Each heatmap illustrates how well strategies transfer from source tasks (vertical axis) to target tasks (horizontal axis), depicting the influence of each parameter on control strategy effectiveness. For the mass of the cart variation (a), transferability decreases as the mass difference increases. In the length of the pole variation (b), strategies are less transferable between significantly different pole lengths. Similarly, for the mass of the pole variation (c), variations show divergent transferability depending on the extent of mass change.

\begin{figure}[!h]
    \begin{subfigure}[t]{0.32\textwidth}
        \includegraphics[width=\textwidth]{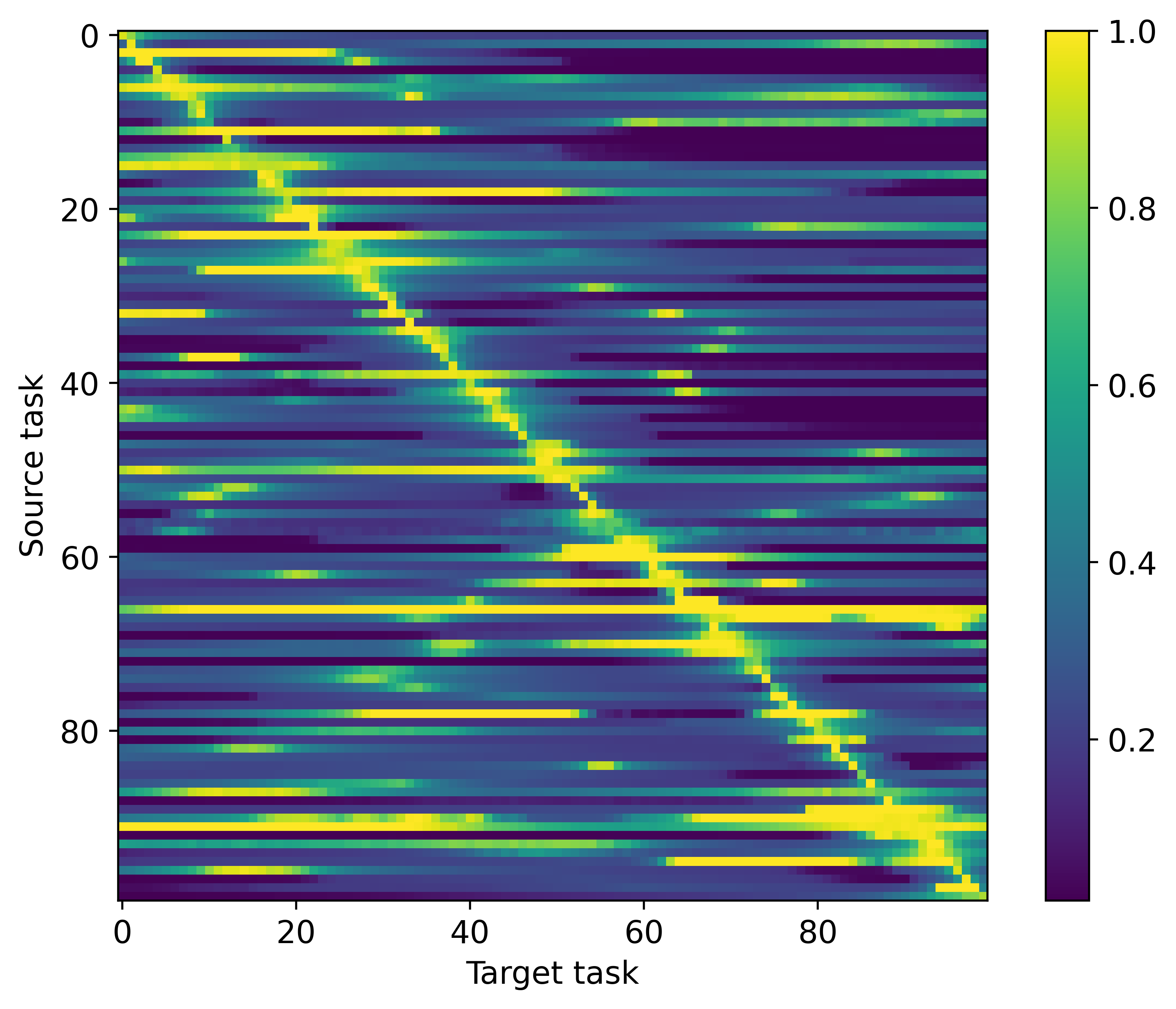}
        \caption{Mass of cart variation}
        \label{fig:heatmap-cartpole-masscart}
    \end{subfigure}
    \hfill 
    \centering
    \begin{subfigure}[t]{0.32\textwidth}
        \includegraphics[width=\textwidth]{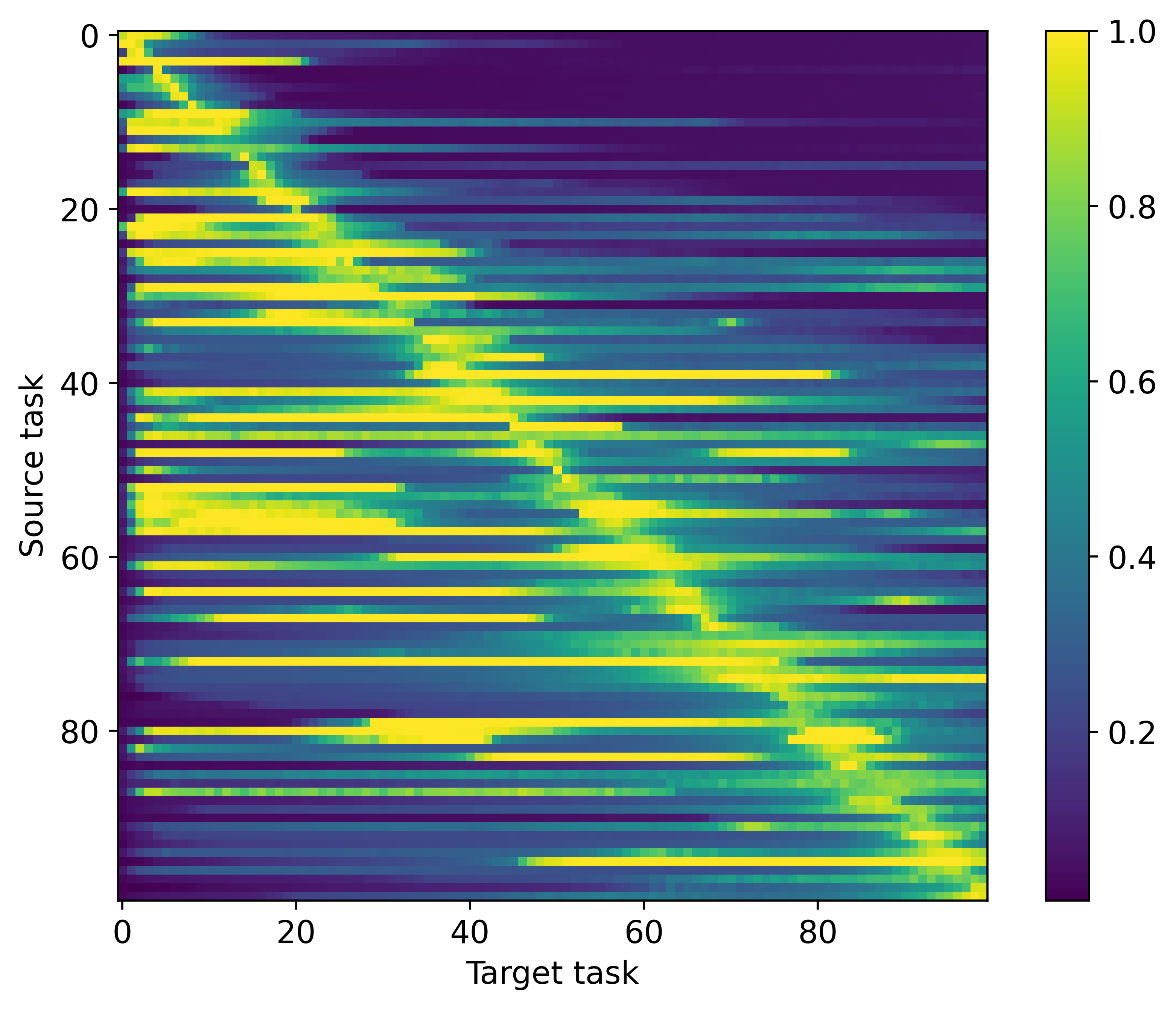}
        \caption{Length of the pole variation}
        \label{fig:heatmap-cartpole-lenpole}
    \end{subfigure}
    \hfill 
    \begin{subfigure}[t]{0.32\textwidth}
        \includegraphics[width=\textwidth]{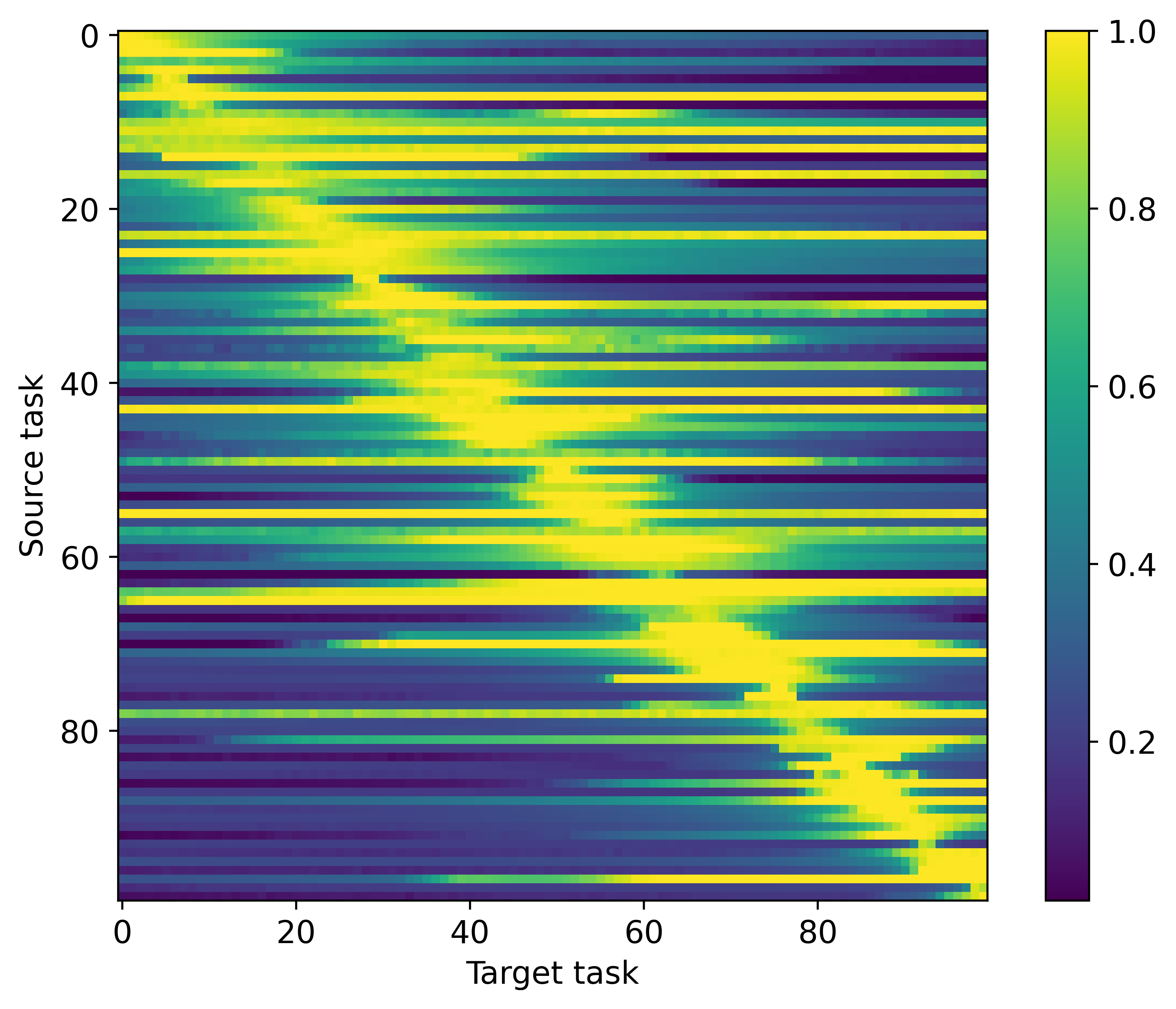}
        \caption{Mass of pole variation}
        \label{fig:heatmap-cartpole-masspole}
    \end{subfigure}
    \caption{Examples of transferability heatmap for Cartpole.}
    \label{fig:heatmap-cartpole}
\end{figure}

\paragraph{Results}

Figure~\ref{fig:result-cartpole} presents a comparison of normalized generalized performance for the Cartpole task across different strategies when varying the mass of the cart, length of the pole, and mass of the pole. 
In the mass of cart variation, performance generally increases with transfer steps, with MBTL strategies achieving the highest scores. This indicates robust adaptability to changes in cart mass. Similar trends are observed with length variation and mass of pole variation. MBTL shows close to Oracle performance.

\begin{figure}[!h]
    \centering
    \includegraphics[width=0.999\textwidth]{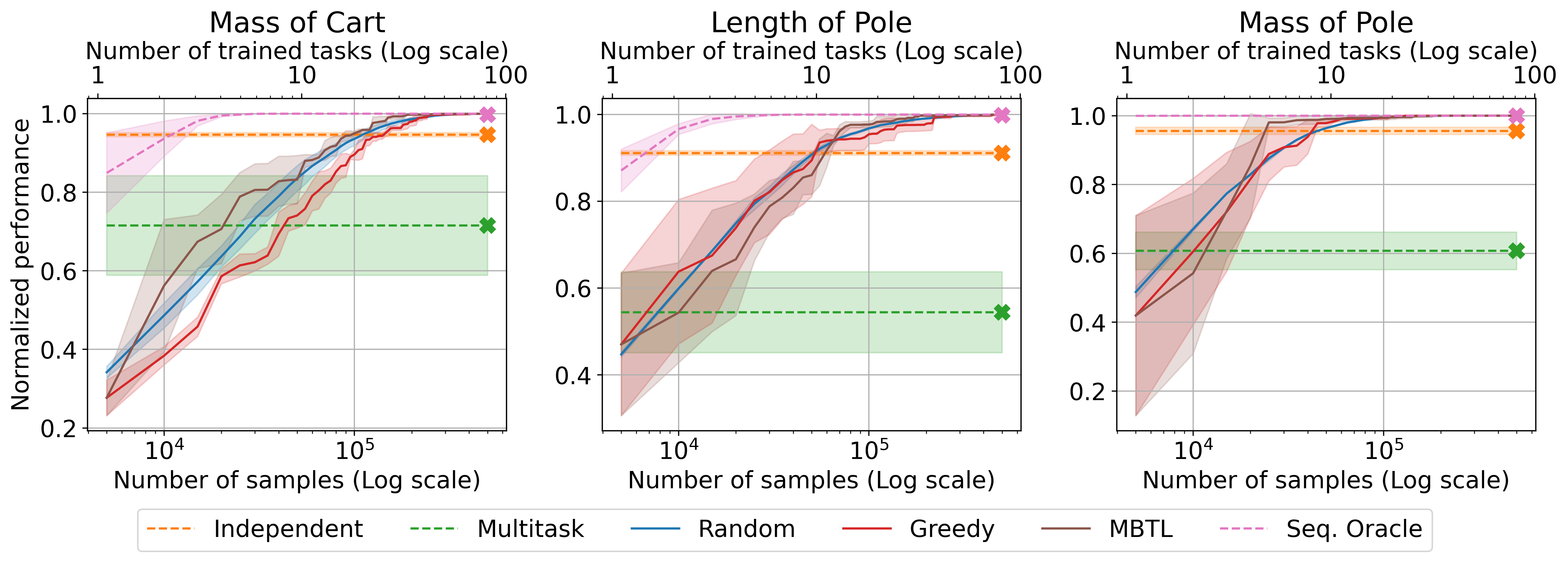}
    \caption{Comparison of normalized generalized performance of all target tasks: Cartpole.}
    \label{fig:result-cartpole}
\end{figure}

\subsubsection{Details about Pendulum \jhhedit{benchmark}}\label{appsec:result-pendulum}
\paragraph{\jhhhedit{Environment Details.}} \jhhhedit{We utilized the CARL benchmark library’s \emph{default} environment parameters for the Pendulum task, training for a million total timesteps. Specifically, we used the default length of $1.0$, the mass of $1.0$, and the simulation timestop of $0.05$.}

\paragraph{\jhhedit{Potential of multi-policy training and zero-shot transfer}}
\jhhedit{Pendulum is also one of the simplest benchmarks in classic control. In the Pendulum benchmark, we vary three key parameters: time step (left), pole length (middle), and ball mass (right). As shown in Figure~\ref{fig:gap-pendulum}, a few well-trained policies in specific contexts transfer effectively to new tasks, particularly under Oracle Transfer. For certain configurations (e.g., shorter poles, lighter balls), Independent and Oracle Transfer both excel, while multi-task struggles. These results suggest a remaining performance gap that multi-policy training and zero-shot transfer could help bridge.}
\begin{figure}[H]
    \centering
    \includegraphics[width=0.99\textwidth]{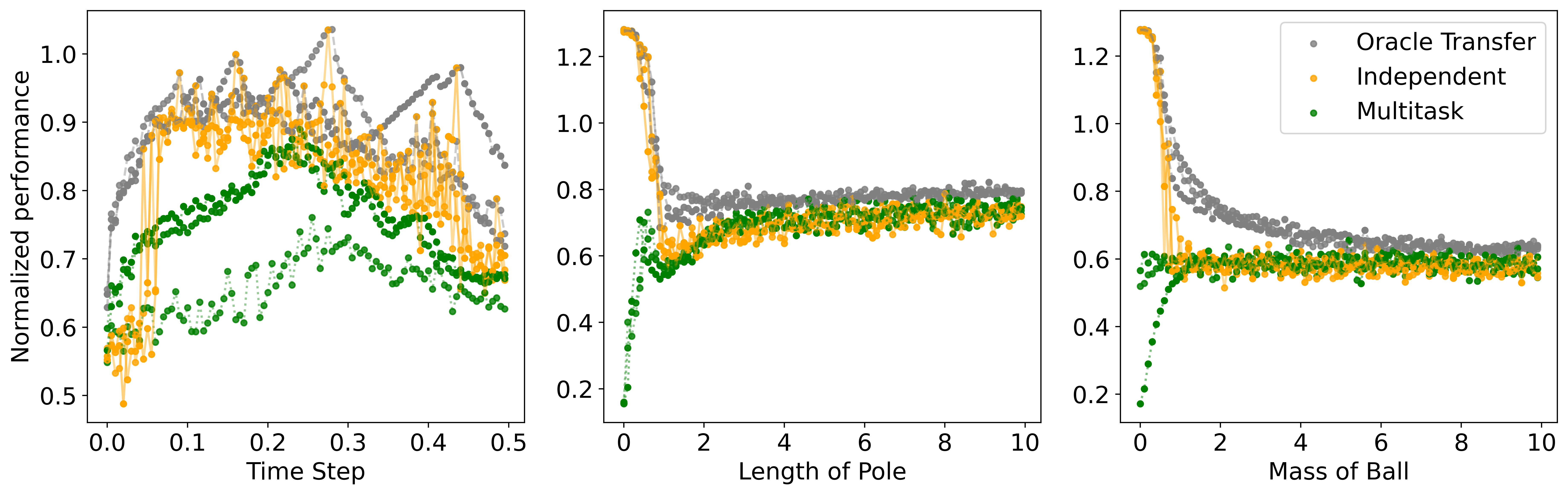}
    \caption{\jhhedit{Normalized performance of Oracle Transfer, independent training, and multi-task training in Pendulum benchmarks.}}
    \label{fig:gap-pendulum}
\end{figure}

\jhhedit{Figure~\ref{fig:gap-pendulum-link} provides a visual mapping of which trained policy is being applied for each specific context in the Pendulum benchmark. CMDP with time step variation demonstrates that the tasks are covered by a few ``good" policies nearby. In addition, contexts with shorter poles and lighter balls often gravitate toward a single high-performing policy, whereas more challenging configurations may require a specialized or distinct policy. This illustrates how Oracle Transfer can seamlessly select from a suite of learned policies, demonstrating robust zero-shot transfer and stronger adaptability.}

\begin{figure}[H]
    \centering
    \includegraphics[width=0.99\textwidth]{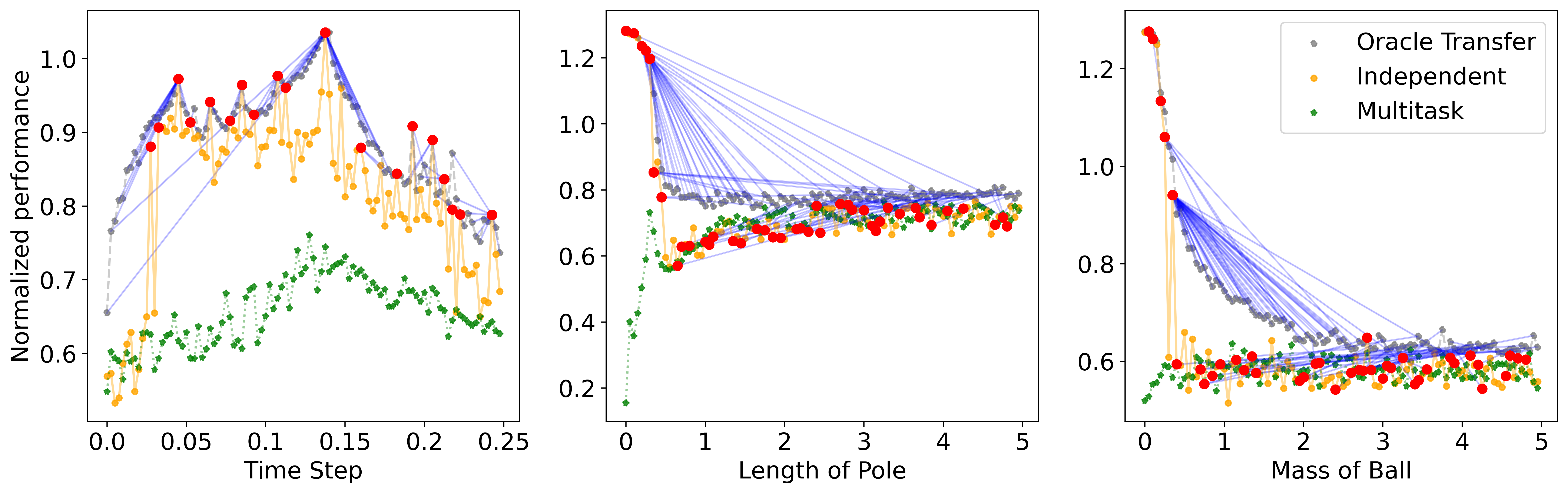}
    \caption{\jhhedit{Visulazation on which policy is used to solve specific context-MDP in Pendulum CMDP.}}
    \label{fig:gap-pendulum-link}
\end{figure}

\paragraph{Transferability heatmap}
Figure~\ref{fig:heatmap-pendulum} presents transferability heatmaps for the Pendulum task with variations in three physical properties: timestep, length of the pendulum, and mass of the pendulum. Each heatmap illustrates how effectively strategies transfer from source tasks (vertical axis) to target tasks (horizontal axis), highlighting the impact of each parameter on control strategy effectiveness. For the timestep variation (a), there appears to be high consistency in transferability across different timesteps, especially around the diagonal axis. In the length of the pendulum variation (b), transferability decreases with greater length differences. Similarly, for the mass of the pendulum variation (c), transferability shows variability dependent on the extent of mass changes.

\begin{figure}[!h]
    \begin{subfigure}[t]{0.32\textwidth}
        \includegraphics[width=\textwidth]{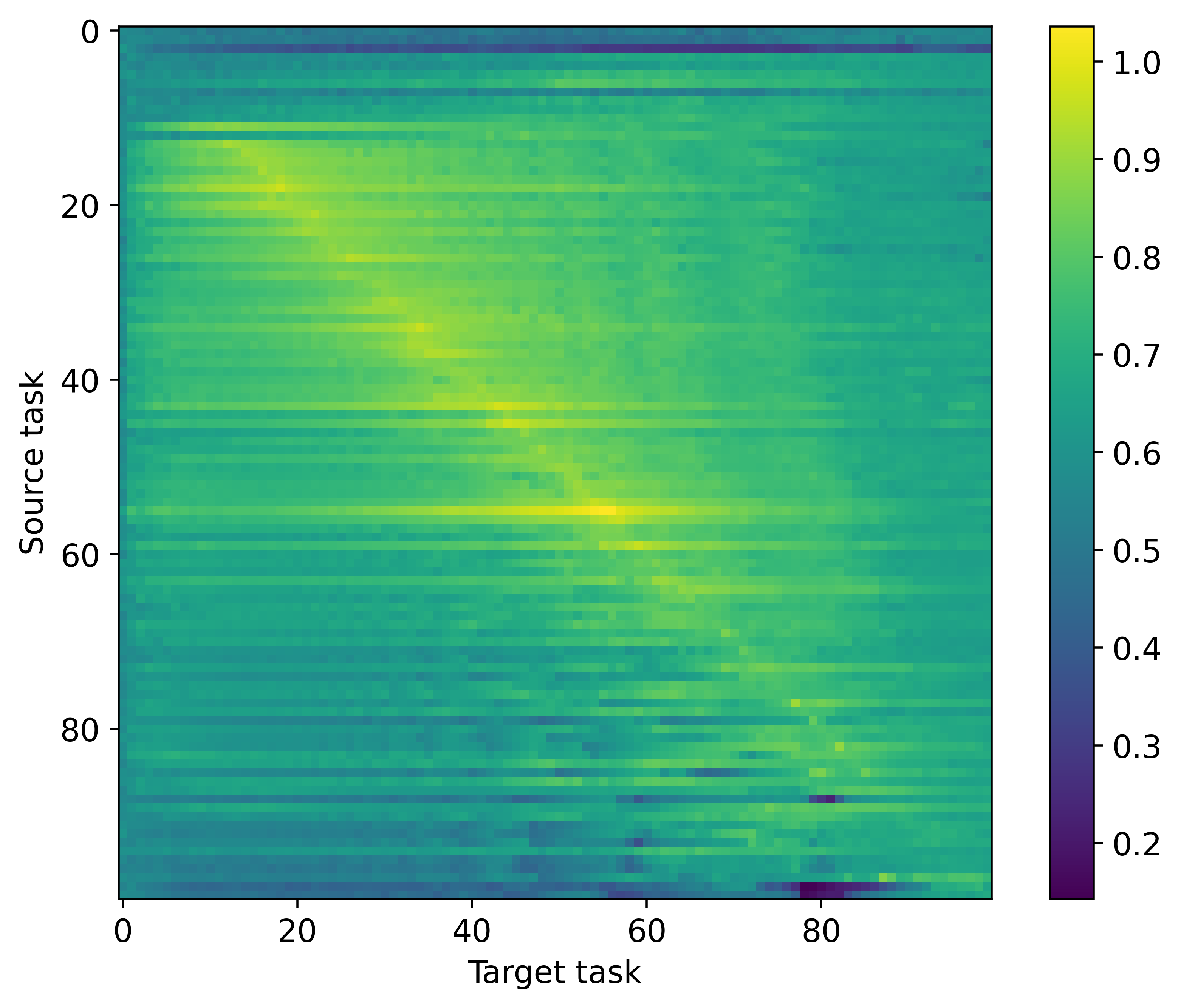}
        \caption{Timestep variation}
        \label{fig:heatmap-pendulum-dt}
    \end{subfigure}
    \hfill 
    \centering
    \begin{subfigure}[t]{0.32\textwidth}
        \includegraphics[width=\textwidth]{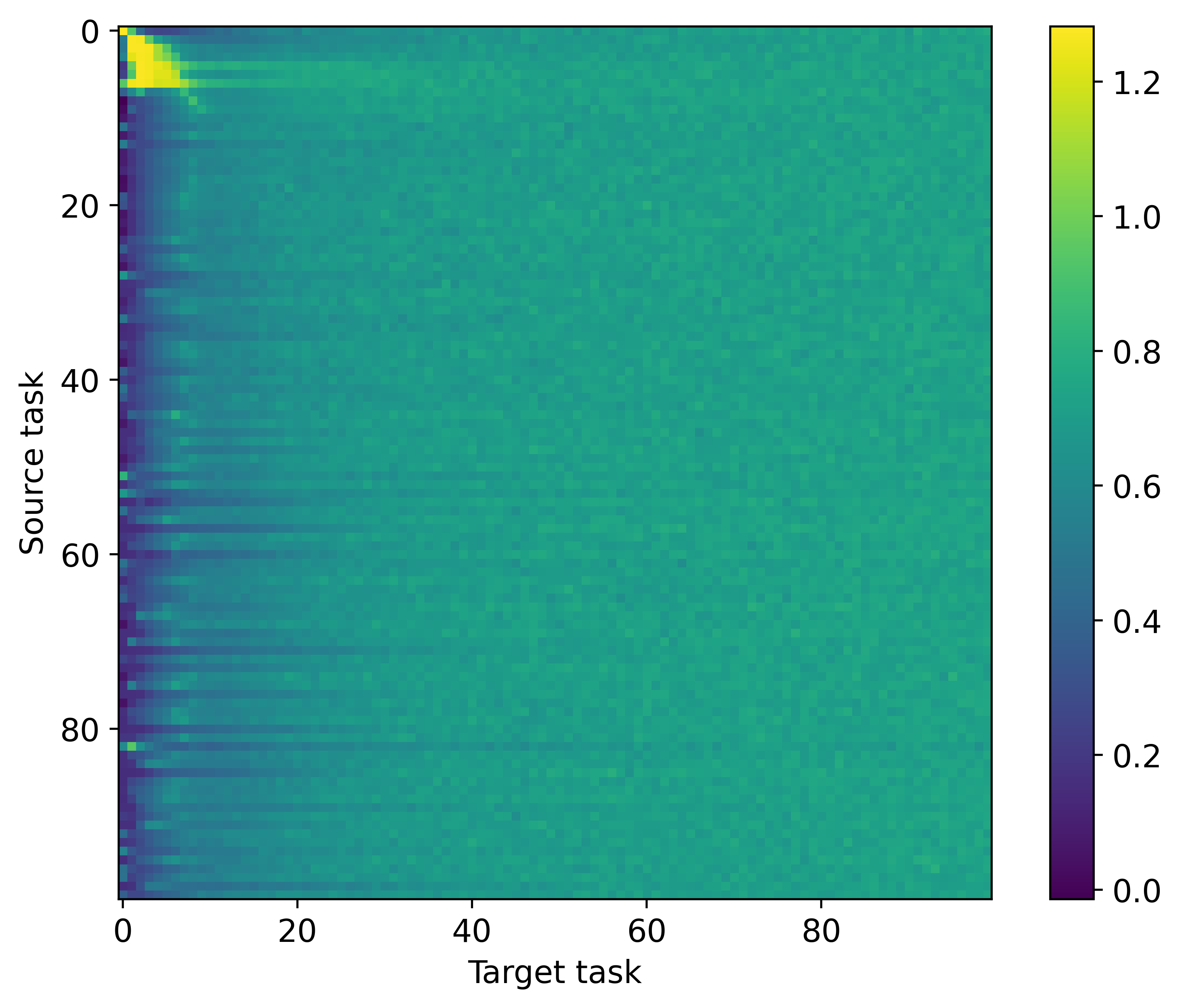}
        \caption{Length of the pendulum variation}
        \label{fig:heatmap-pendulum-l}
    \end{subfigure}
    \hfill 
    \begin{subfigure}[t]{0.32\textwidth}
        \includegraphics[width=\textwidth]{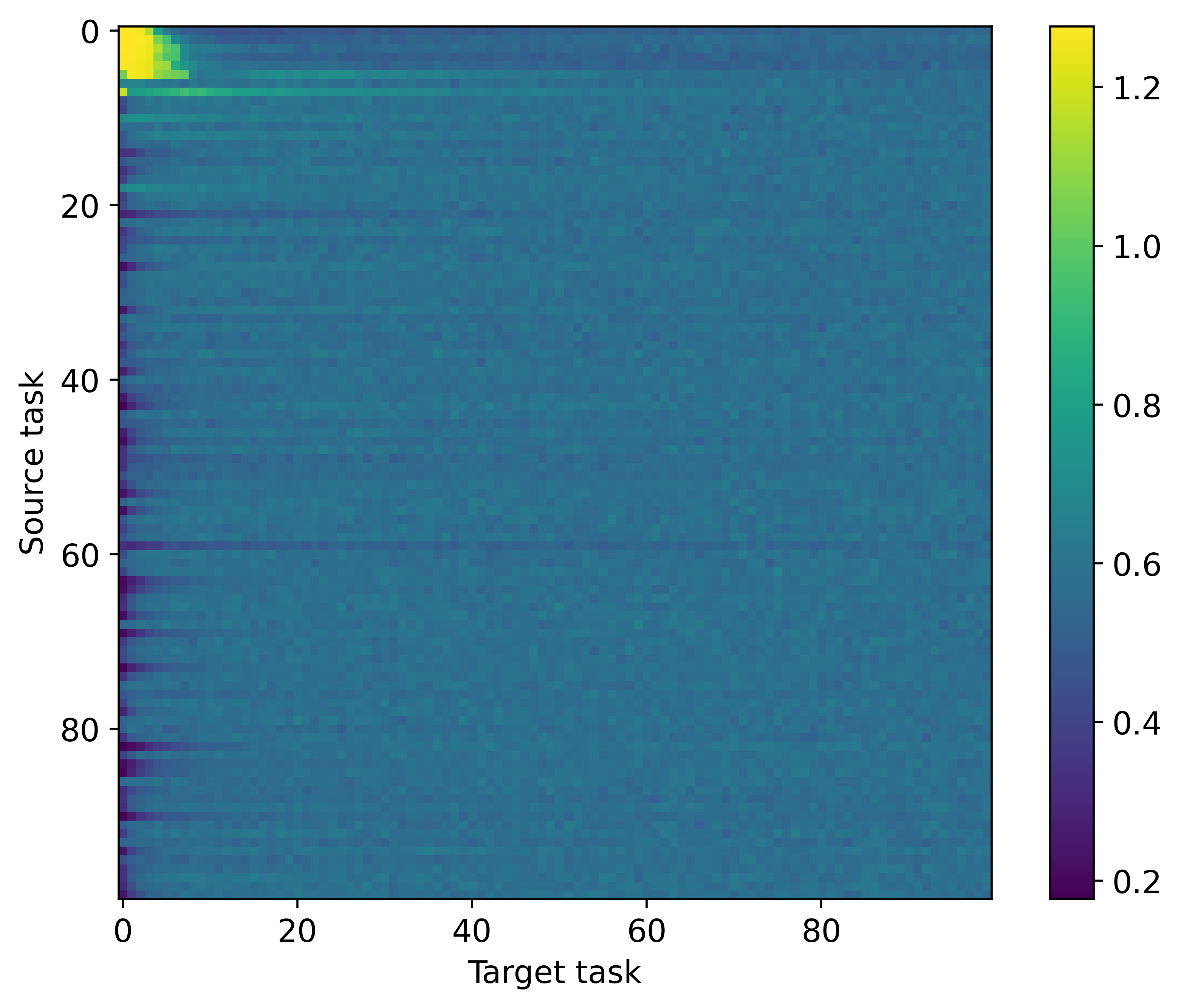}
        \caption{Mass of pendulum variation}
        \label{fig:heatmap-pendulum-m}
    \end{subfigure}
    \caption{Examples of transferability heatmap for Pendulum.}
    \label{fig:heatmap-pendulum}
\end{figure}

\paragraph{Results}

Figure~\ref{fig:result-pendulum} shows a comparison of normalized generalized performance for the Pendulum task across different strategies when varying the timestep, length of the pendulum, and mass of the pendulum. For the length of the pendulum variation and mass of the pendulum one, MBTL strategies demonstrate the highest scores, suggesting robust adaptability to changes in pendulum dynamics. MBTL shows performance close to that of the Oracle across all variations, indicating its effectiveness in handling dynamic changes in system parameters.

\begin{figure}[!h]
    \centering
    \includegraphics[width=0.999\textwidth]{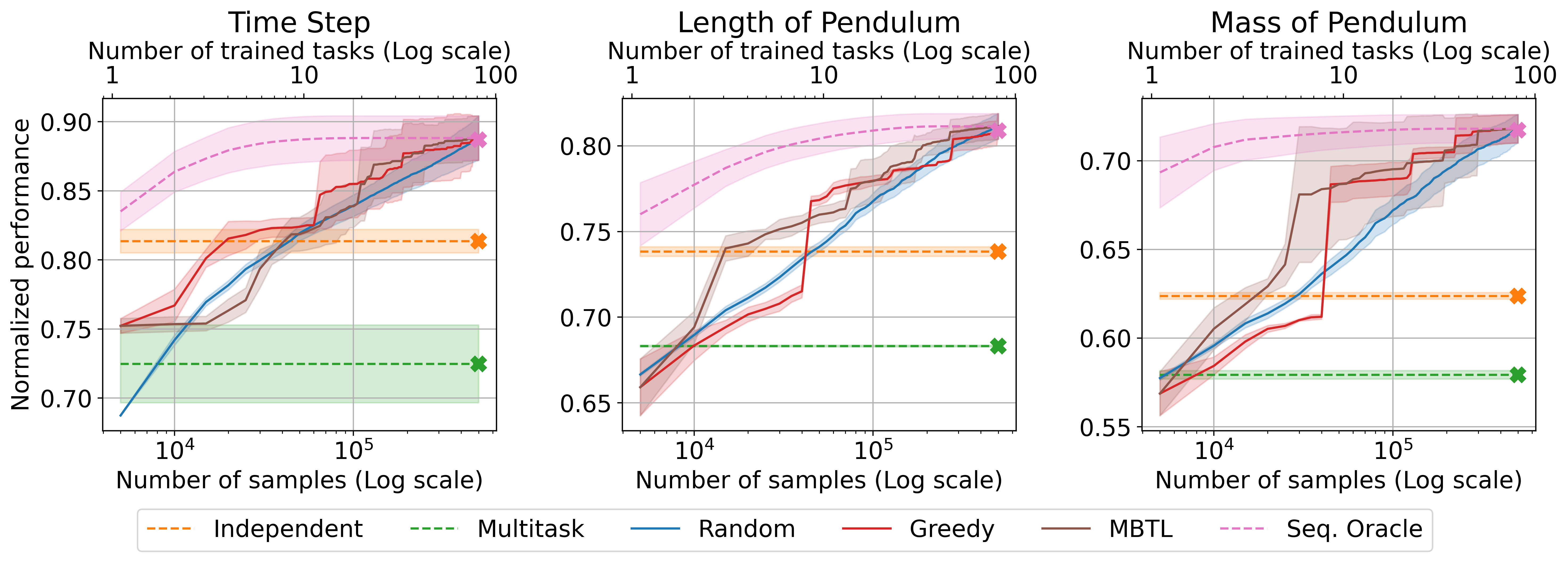}
    \caption{Comparison of normalized generalized performance of all target tasks: Pendulum.}
    \label{fig:result-pendulum}
\end{figure}

\subsubsection{Details about BipedalWalker \jhhedit{benchmark}}\label{appsec:result-walker}

\paragraph{\jhhhedit{Environment Details.}} \jhhhedit{We utilized the CARL benchmark library’s \emph{default} environment parameters for the BipedalWalker task, training for five million total timesteps. Specifically, BipedalWalker used the default friction of $2.5$, scale of $30$, and gravity of $10$.}

\paragraph{\jhhedit{Potential of multi-policy training and zero-shot transfer}}
\jhhedit{Figure~\ref{fig:gap-walker} compares the performance of different RL training methods in the BipedalWalker benchmark. independent training typically performs nearly as well but suffers intermittent dips. Similarly, multi-task training experiences larger swings, occasionally collapsing to low performance in certain parameter regions. However, Oracle Transfer remains near-perfect across every setting. These patterns highlight how multi-policy training with zero-shot transfer and per-task training generally fare better than a single universal model when faced with diverse environment dynamics.}
\begin{figure}[H]
    \centering
    \includegraphics[width=0.99\textwidth]{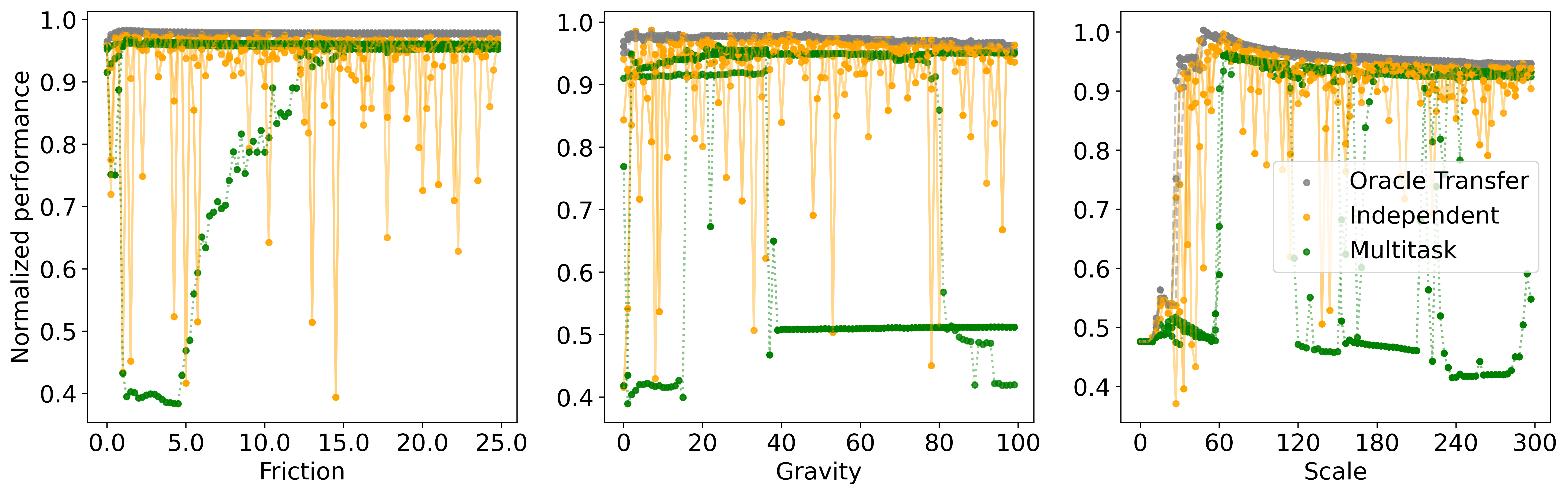}
    \caption{\jhhedit{Normalized performance of Oracle Transfer, independent training, and multi-task training in BipedalWalker benchmarks.}}
    \label{fig:gap-walker}
\end{figure}

\paragraph{Transferability heatmap}

Figure~\ref{fig:heatmap-walker} presents transferability heatmaps for the BipedalWalker task, focusing on three variations: friction, gravity, and scale. Each heatmap illustrates the effectiveness of strategy transfer from source tasks (vertical axis) to target tasks (horizontal axis), highlighting how each parameter influences control strategy adaptability. For friction variation (a), strategies show uniform transferability across different friction levels. In gravity variation (b), transferability is highly variable, suggesting that strategies need specific tuning for different gravity levels. For scale variation (c), the heatmap indicates variable transferability, reflecting the challenges of scaling control strategies.

\begin{figure}[!h]
    \begin{subfigure}[t]{0.32\textwidth}
        \includegraphics[width=\textwidth]{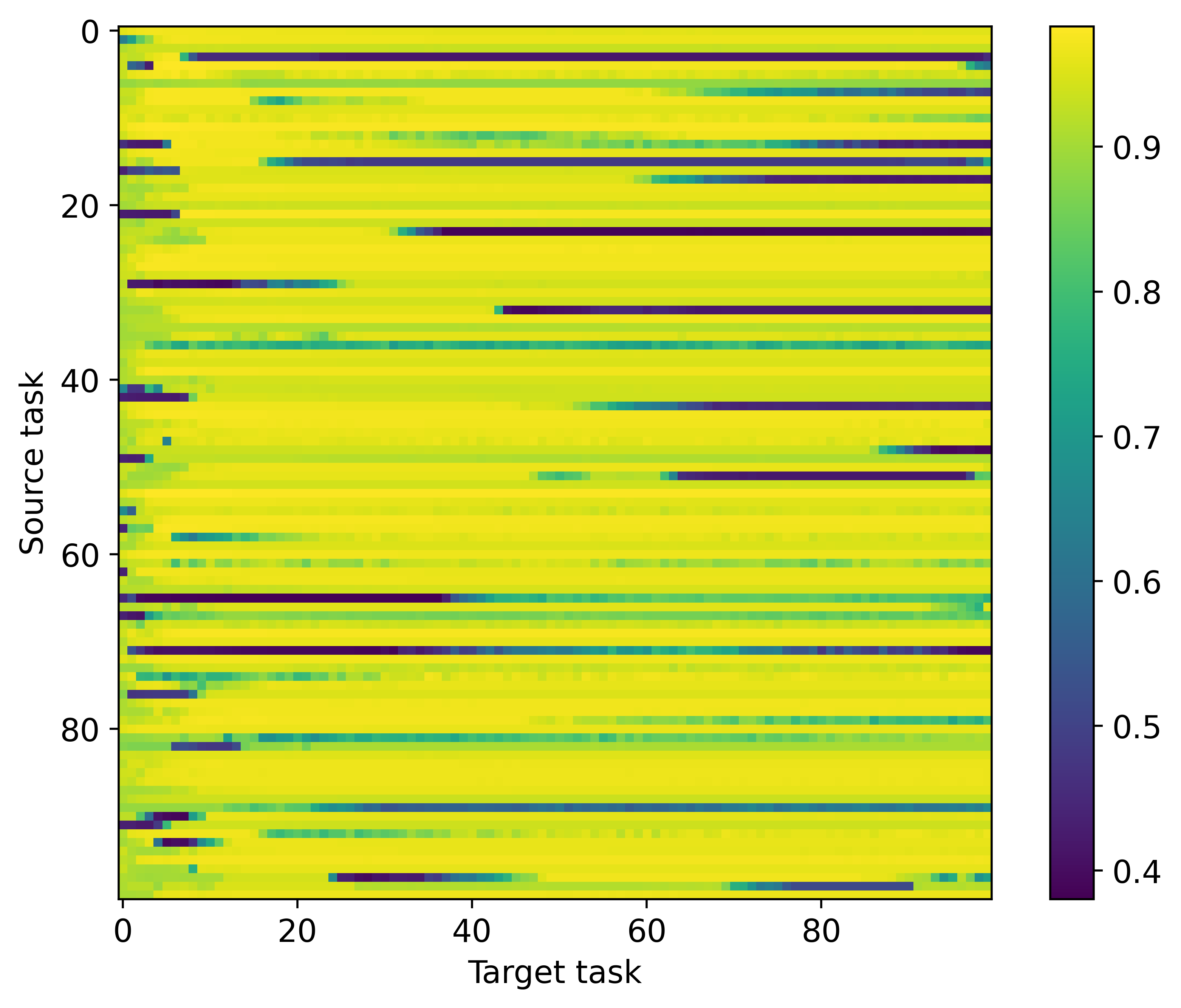}
        \caption{Friction variation}
        \label{fig:heatmap-walker-friction}
    \end{subfigure}
    \hfill 
    \centering
    \begin{subfigure}[t]{0.32\textwidth}
        \includegraphics[width=\textwidth]{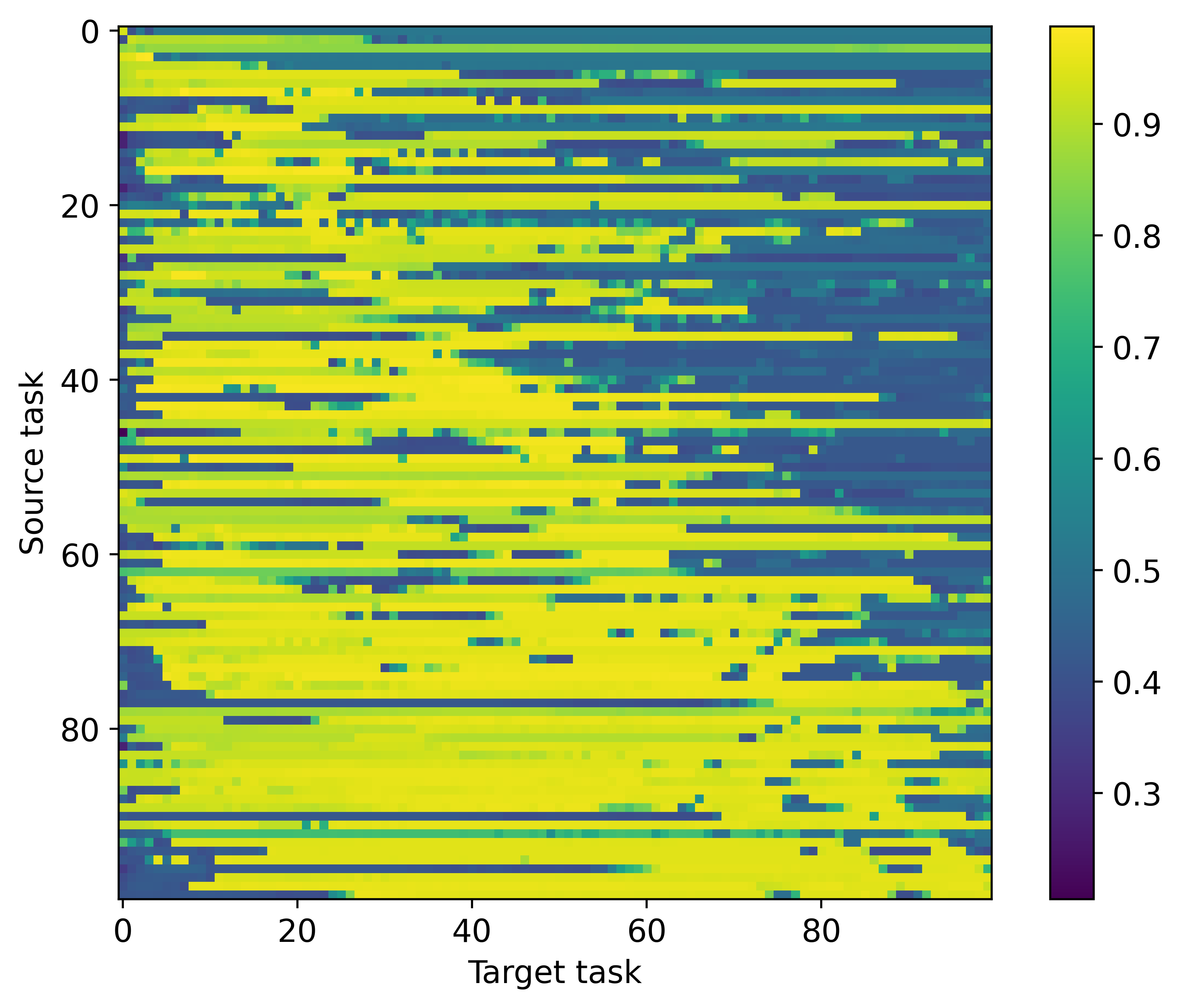}
        \caption{Gravity variation}
        \label{fig:heatmap-walker-gravity}
    \end{subfigure}
    \hfill 
    \begin{subfigure}[t]{0.32\textwidth}
        \includegraphics[width=\textwidth]{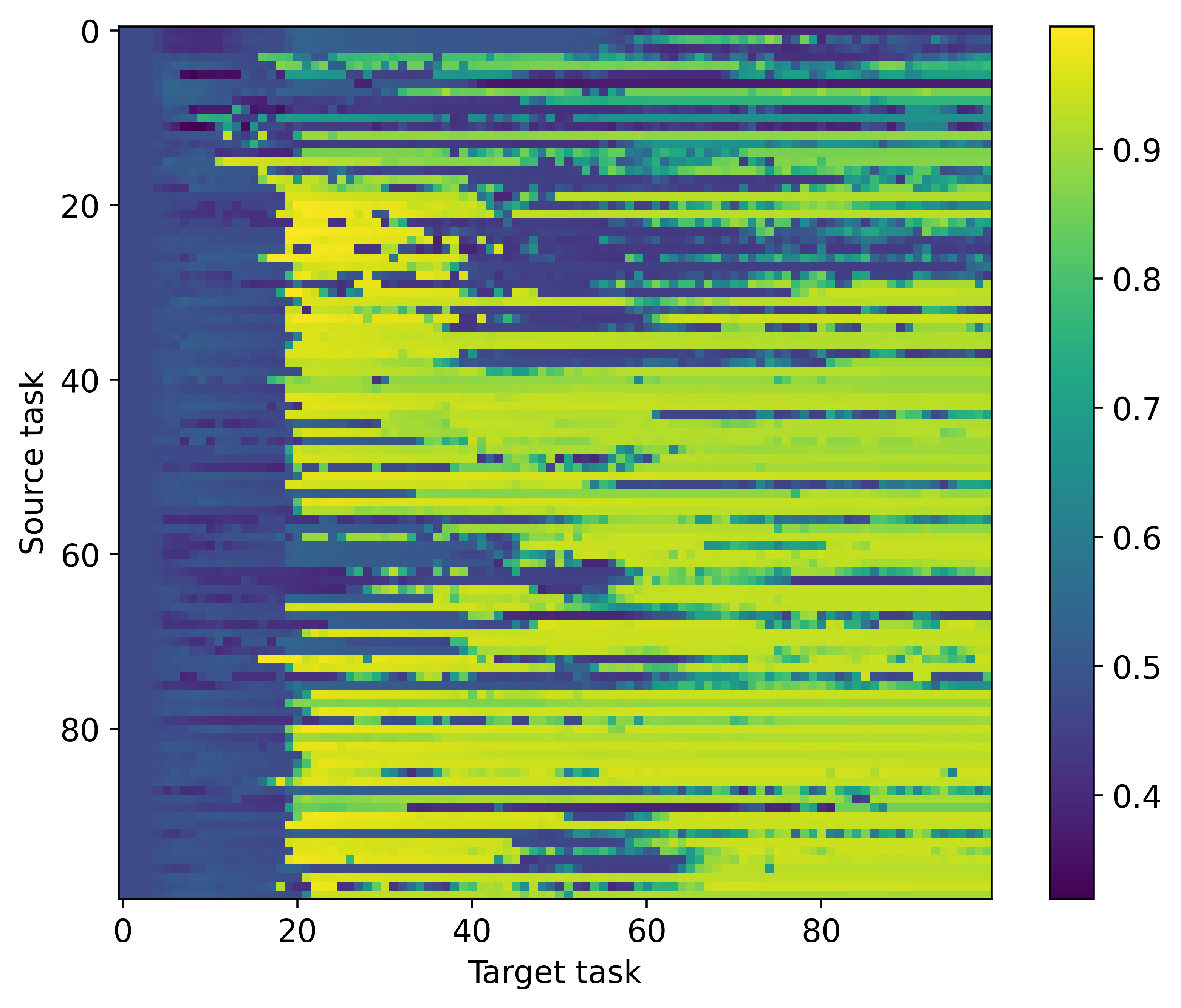}
        \caption{Scale variation}
        \label{fig:heatmap-walker-scale}
    \end{subfigure}
    \caption{Examples of transferability heatmap for BipedalWalker.}
    \label{fig:heatmap-walker}
\end{figure}

\paragraph{Results}
Figure~\ref{fig:result-walker} shows the comparison of normalized generalized performance for all variations within the BipedalWalker task. There is no huge difference in performance for all three cases, but if we look into the tabualr results in Table~\ref{tab:performance-control}, MBTL shows the highest performance across varying conditions, indicating their robustness in adapting to changes in physical parameters of the model. This suggests that these strategies are more effective in handling the complexities introduced by different frictions, gravities, and scales compared to other baselines.

\begin{figure}[!h]
    \centering
    \includegraphics[width=0.999\textwidth]{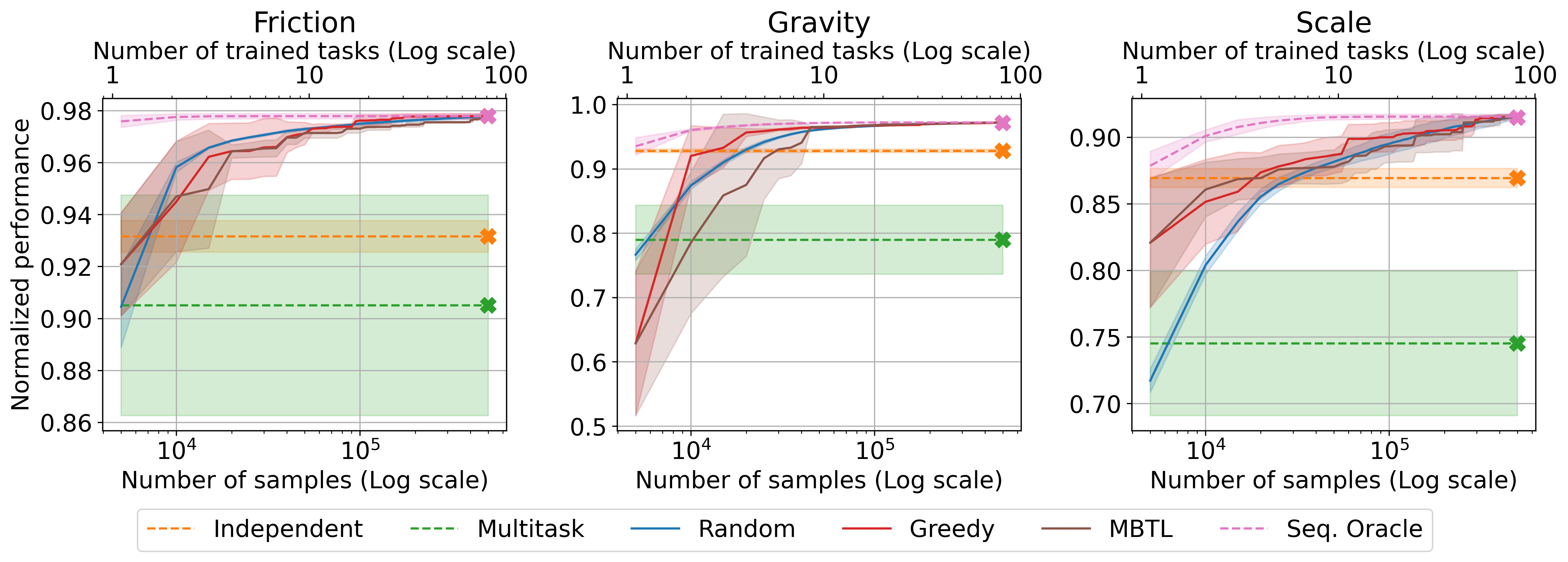}
    \caption{Comparison of normalized generalized performance of all target tasks: BipedalWalker.}
    \label{fig:result-walker}
\end{figure}

\subsubsection{Details about HalfCheetah \jhhedit{benchmark}}\label{appsec:result-halfcheetah}

\paragraph{\jhhhedit{Environment Details (HalfCheetah).}} \jhhhedit{We utilized the CARL benchmark library’s \emph{default} environment parameters for the HalfCheetah task, training for five million total timesteps. Specifically, HalfCheetah used the default joint stiffness of $15000$, gravity of $9.8$, and friction of $0.6$.}

\paragraph{\jhhedit{Potential of multi-policy training and zero-shot transfer}}
\jhhedit{In this HalfCheetah benchmark, each subplot examines how policies adapt to changing friction, gravity, and stiffness (Figure~\ref{fig:gap-halfcheetah}). Oracle Transfer maintains nearly perfect scores for all parameter ranges, indicating robust zero-shot adaptability. In contrast, independent training experiences larger fluctuations, while multi-task training remains consistent yet at a lower performance plateau. The clear gap between Oracle Transfer and the other methods highlights the advantage of leveraging specialized multi-policy training solutions that effectively transfer across diverse dynamics.}
\begin{figure}[H]
    \centering
    \includegraphics[width=0.99\textwidth]{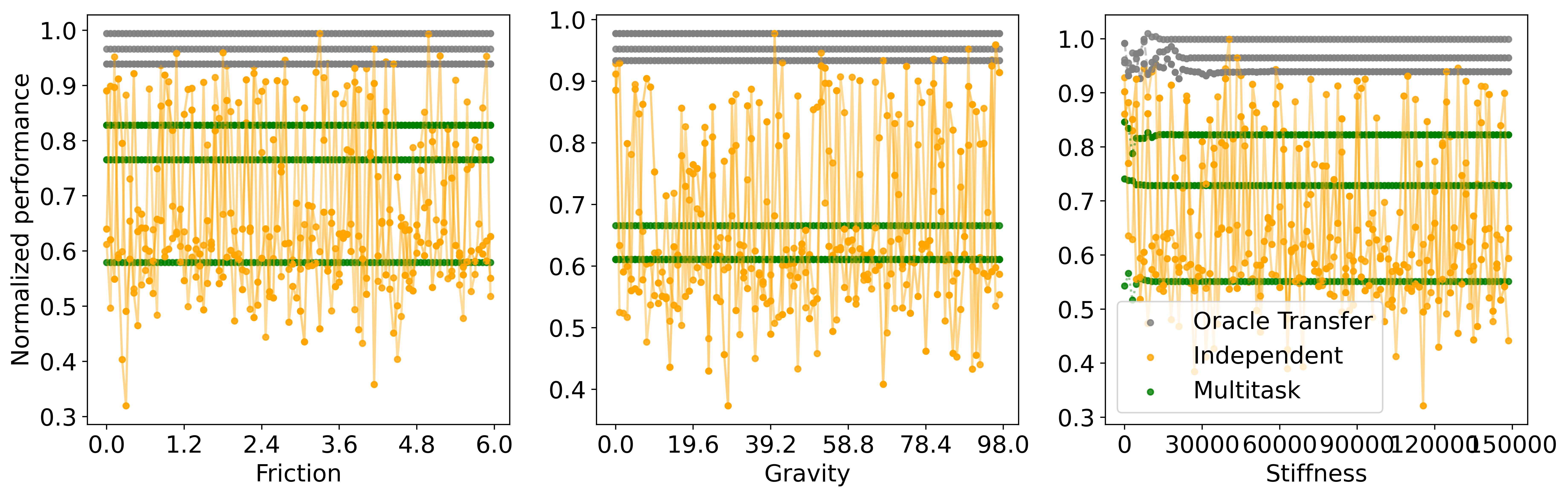}
    \caption{\jhhedit{Normalized performance of Oracle Transfer, independent training, and multi-task training in HalfCheetah benchmarks.}}
    \label{fig:gap-halfcheetah}
\end{figure}

\paragraph{Transferability heatmap}

Figure~\ref{fig:heatmap-halfcheetah} displays transferability heatmaps for the HalfCheetah task, focusing on three physical properties: friction, gravity, and stiffness. Each heatmap demonstrates the transferability of strategies from source tasks (vertical axis) to target tasks (horizontal axis). For friction variation (a), there is uniform high transferability across different friction levels, indicating that strategies are robust to changes in friction. Gravity variation (b) shows less consistent transferability, suggesting a sensitivity to gravity changes that might require adaptation of strategies. Stiffness variation (c) similarly demonstrates variable transferability, highlighting the challenges of adapting to different stiffness levels in control strategies.

\begin{figure}[!h]
    \begin{subfigure}[t]{0.32\textwidth}
        \includegraphics[width=\textwidth]{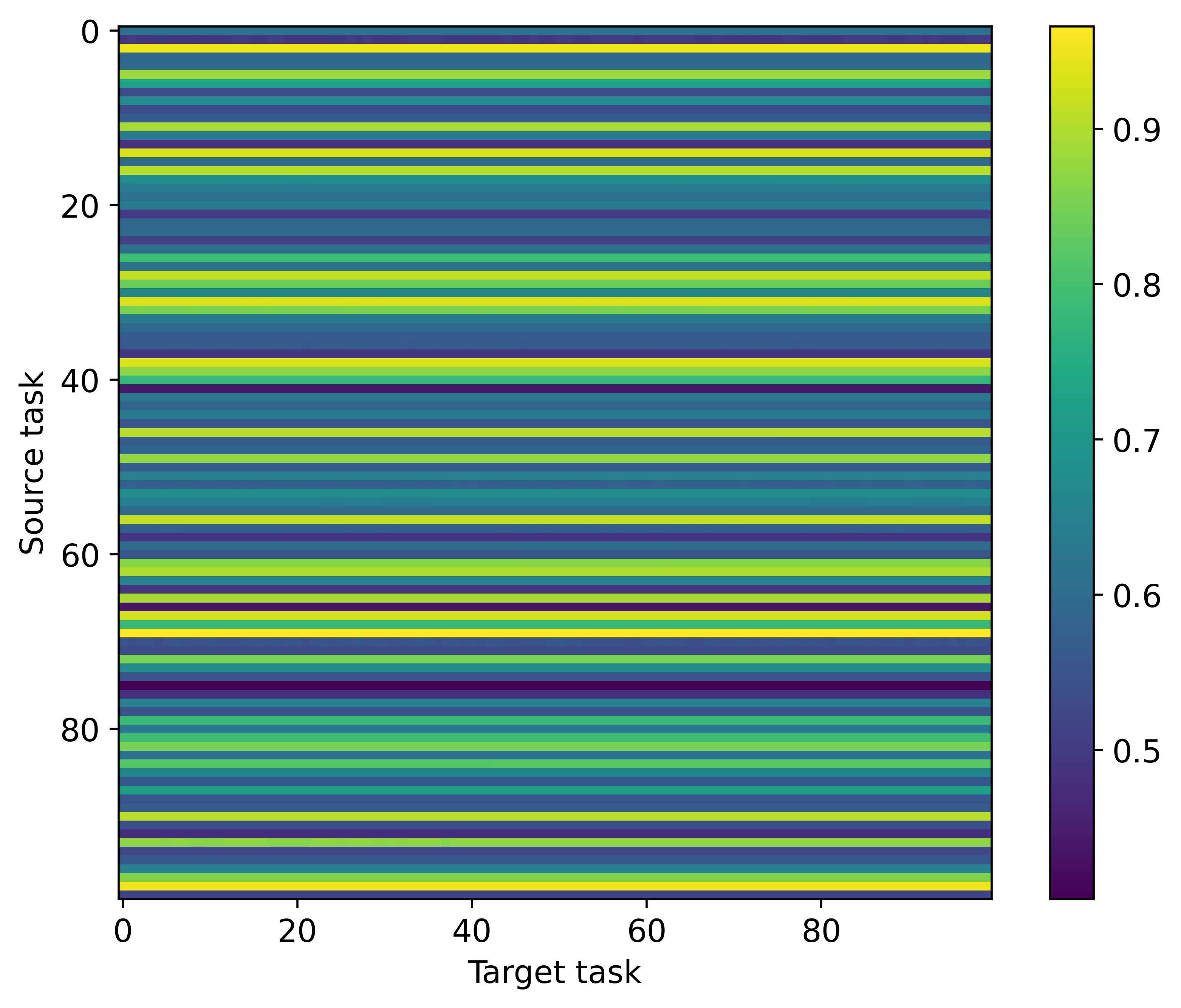}
        \caption{Friction variation}
        \label{fig:heatmap-halfcheetah-friction}
    \end{subfigure}
    \hfill 
    \centering
    \begin{subfigure}[t]{0.32\textwidth}
        \includegraphics[width=\textwidth]{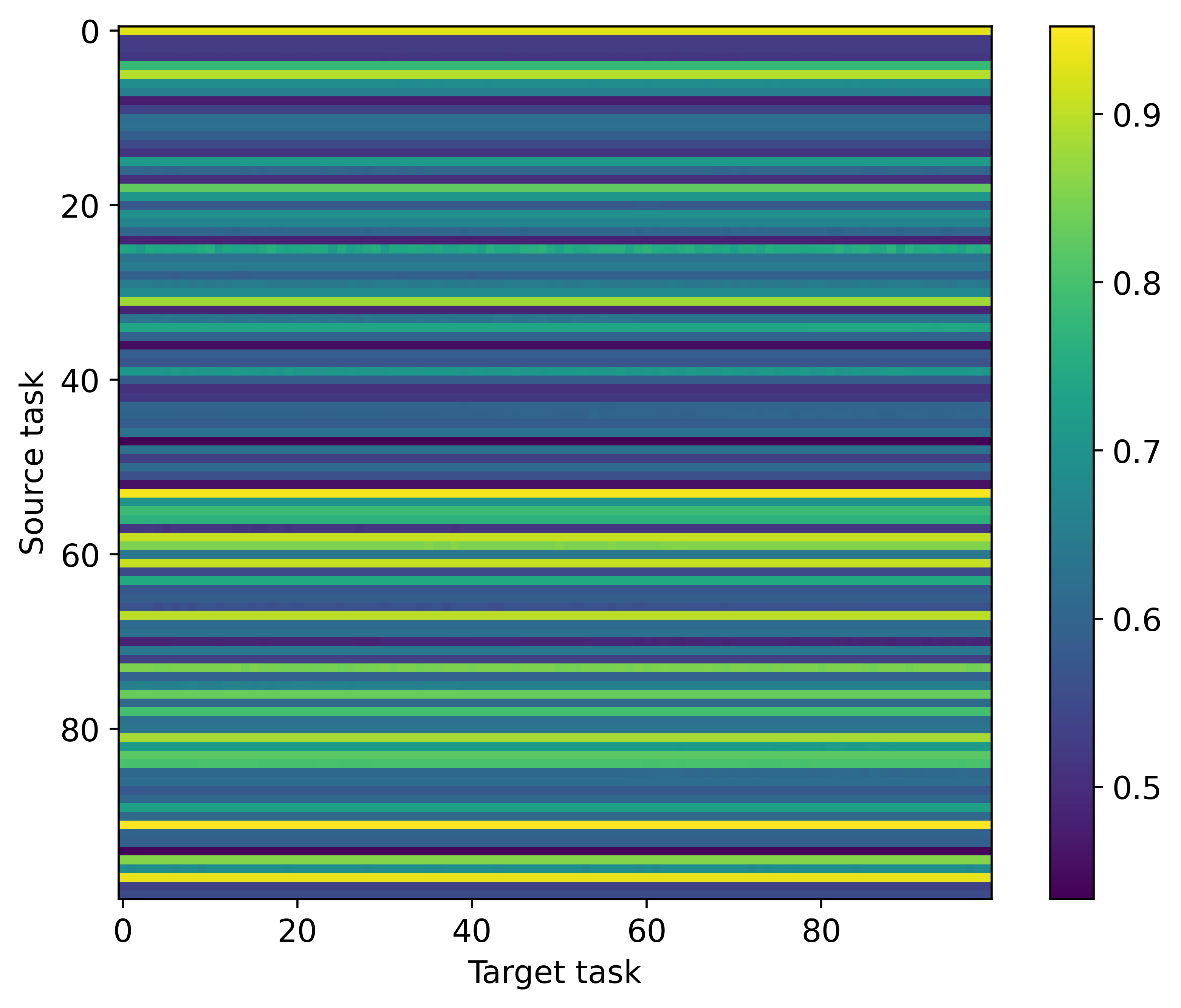}
        \caption{Gravity variation}
        \label{fig:heatmap-halfcheetah-gravity}
    \end{subfigure}
    \hfill 
    \begin{subfigure}[t]{0.32\textwidth}
        \includegraphics[width=\textwidth]{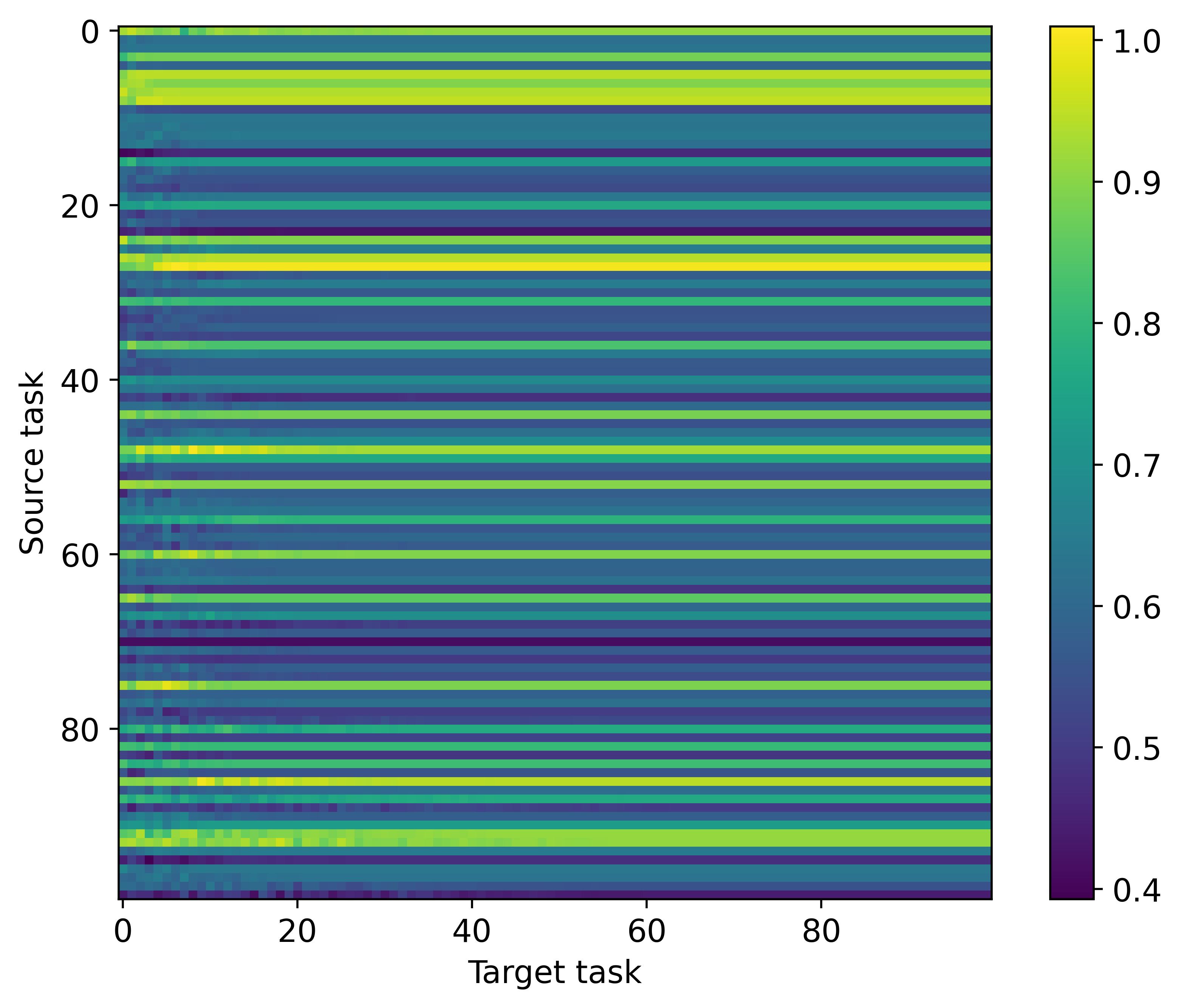}
        \caption{Stiffness variation}
        \label{fig:heatmap-halfcheetah-stiffness}
    \end{subfigure}
    \caption{Examples of transferability heatmap for HalfCheetah.}
    \label{fig:heatmap-halfcheetah}
\end{figure}

\paragraph{Results}

Figure~\ref{fig:result-halfcheetah} presents a comparison of normalized generalized performance across various strategies for the HalfCheetah task with respect to the varied physical properties. The results indicate that the MBTL generally outperforms others, particularly in managing variations in gravity and stiffness, suggesting the superior adaptability of these models to physical changes in the task environment. The trends across different parameters confirm the critical impact of task-specific dynamics on the effectiveness of the tested strategies.

\begin{figure}[!h]
    \centering
    \includegraphics[width=0.999\textwidth]{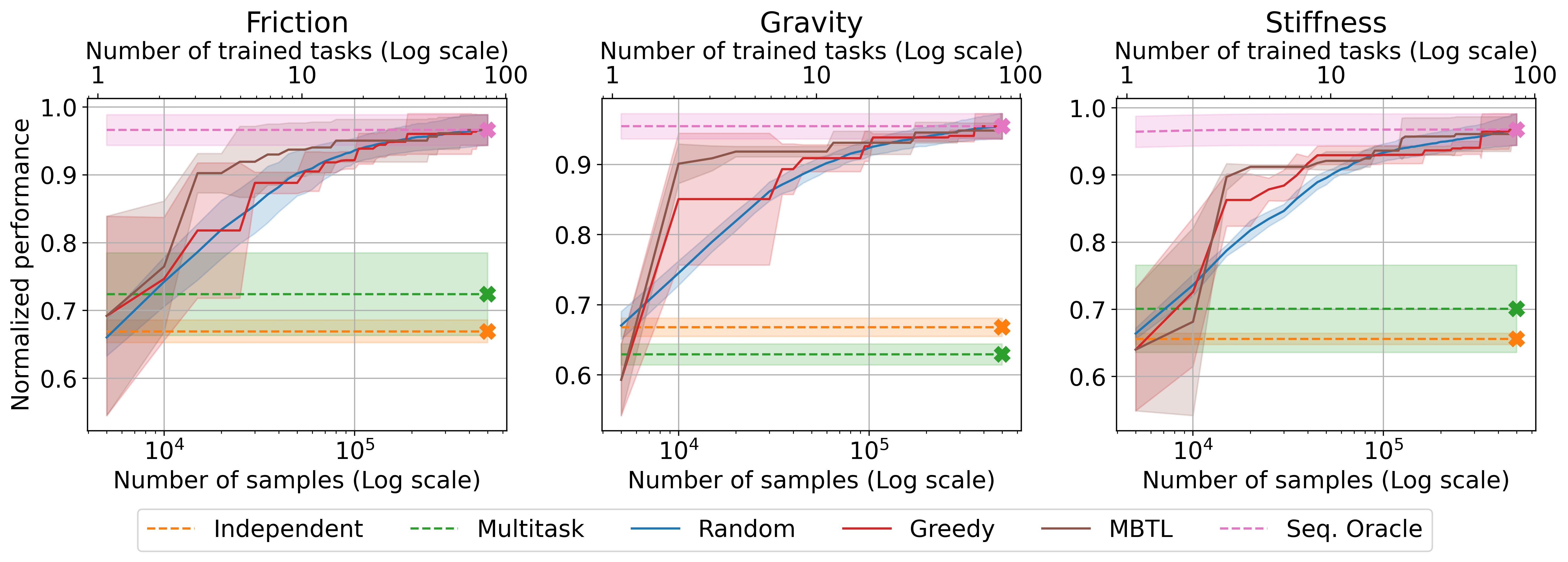}
    \caption{Comparison of normalized generalized performance of all target tasks: HalfCheetah.}
    \label{fig:result-halfcheetah}
\end{figure}

\subsubsection{\jhhedit{Implementation of the recent multi-task baselines}}
\jhhedit{In Figure~\ref{fig:result-cartpole-multitask}, we compare two recent multi-task algorithms—PaCo~\cite{sun_paco_2022} and MOORE~\cite{hendawy_multi_2023}—with several our baselines tested on Cartpole CMDP benchmark. Although MOORE underperforms relative to our baseline multitask implementation, PaCo achieves competitive or even higher performance at certain points, demonstrating its potential to generalize across multiple tasks. Also, it is important to note that those algorithms are naively implemented without thorough investigation. These trends show that enhanced multi-task strategies can be beneficial in some CMDP settings, whereas not all multi-task methods readily adapt to broader parameter variations.}

\begin{figure}[!h]
    \centering
    \includegraphics[width=0.999\textwidth]{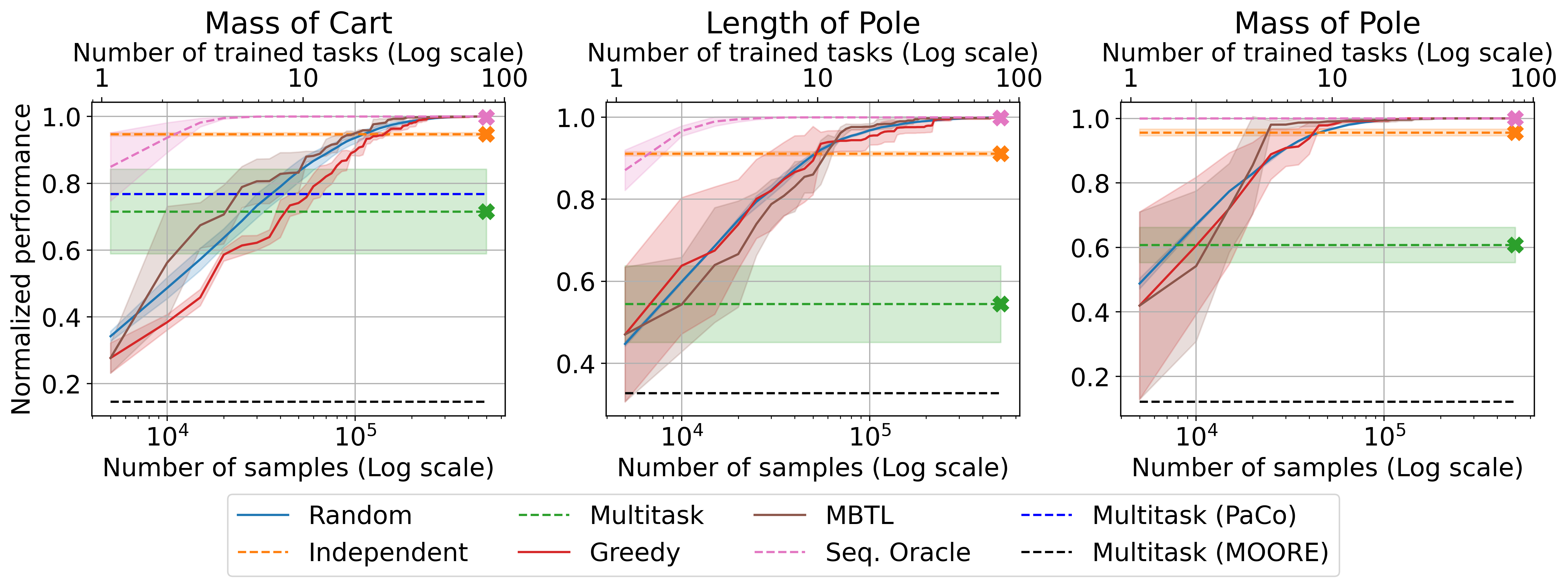}
    \caption{\jhhedit{Normalized performance comparison of PaCo \cite{sun_paco_2022} and MOORE \cite{hendawy_multi_2023} on Cartpole benchmark.}}
    \label{fig:result-cartpole-multitask}
\end{figure}

\clearpage
\subsection{Potential impacts} 
Our work has the potential to reduce the computational effort needed to solve complex real-world problems, offering scalable solutions for implementing deep reinforcement learning in dynamic environments. While there are no immediate negative societal impacts identified, ongoing research will continue to assess the broader implications of deploying these technologies in urban settings.




\end{document}